\newcommand{\ft}{f^{\ast}}
\newcommand{\cA}{\mathcal{A}}
\newcommand{\cP}{\mathcal{P}}
\def\figref#1{figure~\ref{#1}}
\def\eqref#1{Equation~\ref{#1}}
\def\1{\bm{1}}
\newcommand{\cF}{\mathcal{F}}
\def\vy{{\bm{y}}}
\def\mPhi{{\bm{\Phi}}}
\DeclareMathAlphabet{\mathsfit}{\encodingdefault}{\sfdefault}{m}{sl}
\SetMathAlphabet{\mathsfit}{bold}{\encodingdefault}{\sfdefault}{bx}{n}
\newcommand{\E}{\mathbb{E}}
\newcommand{\R}{\mathbb{R}}
\newcommand{\Var}{\mathrm{Var}}
\DeclareMathOperator*{\argmin}{arg\,min}
\DeclareMathOperator{\Tr}{Tr}
\newcommand{\sref}[1]{\S\ref{#1}}
\NewDocumentCommand{\incplt}{O{\columnwidth}m}{%
  \begin{center}
    \adjustbox{center}{\adjustbox{width=#1+10pt}{\includegraphics[width=#1]{./plots/output/#2.pdf}}}
  \end{center}
}
\renewcommand{\figref}[2]{Figure~\hyperref[#1]{\ref{#1} (#2)}}
\newcommand{\reso}[1]{\,\scriptsize{$\pm$\,#1}}
\newcommand{\spD}{\mathcal{D}}
\newcommand{\spE}{\mathcal{E}}
\newcommand{\spX}{\mathcal{X}}
\newcommand{\ip}[2]{\langle #1, #2 \rangle}
\newcommand{\norm}[1]{\| #1 \|}
\NewDocumentCommand{\cossim}{omm}{\mathrm{sim}\IfValueT{#1}{_#1}(#2, #3)}
\newcommand{\xstar}{x^\star}
\newcommand{\wstar}{w_\star}
\newcommand{\Cinf}{C_{\Phi, \infty}}
\newcommand{\Ctwo}{C_{\Phi, 2}}
\renewcommand{\mPhi}{\mathbf{\Phi}}
\newcommand{\mPsi}{\mathbf{\Psi}}
\newcommand{\Aast}{P_{\xstar}}
\newcommand{\Psixstar}{\mPsi_{\xstar}}
\newcommand{\Phixstar}{\mPhi_{\xstar}}
\newcommand{\vstarR}{\tilde{v}_{\xstar}}
\newcommand{\wstarR}{\tilde{w}_{\xstar}}
\newcommand{\vTTT}{\hat{v}_{\xstar}^\text{TTT}}
\newcommand{\X}{\mathcal{X}}
\newcommand{\dconc}{\ensuremath{d_1}}
\newcommand{\dfeat}{\ensuremath{d_2}}
\newcommand{\vstarG}{\hat{v}^{\text{global}}}
\newcommand{\ones}{\mathbf{1}}
\newcommand{\Sphere}{\mathcal{S}^{\dfeat-1}}
\renewcommand{\ft}{f^\star}
\newtcolorbox[auto counter]{takeaway}[1][]{%
  enhanced,
  breakable,
  colback=black!5,          %
  colframe=black!85,        %
  boxrule=1.25pt,
  arc=3pt,                  %
  left=2mm,right=2mm,top=1mm,bottom=1mm,
  before skip=10pt, after skip=10pt,
  title={Takeaway~\thetcbcounter},
  colbacktitle=black!85,    %
  coltitle=white,           %
  fonttitle=\bfseries\small,
  attach boxed title to top left={yshift=-1.2mm, xshift=2mm},
  boxed title style={
    enhanced,
    arc=3pt,
    top=0.5mm, bottom=0.5mm, left=1mm, right=1mm,
    boxrule=0pt,           %
    interior engine=empty, %
  },
  #1                        %
}
\pgfplotsset{compat=1.18}
\titlespacing*{\paragraph}{0pt}{0.25ex}{2ex}
\newtheorem{theorem}{Theorem}
\newtheorem*{theorem*}{Theorem}
\newtheorem*{proposition*}{Proposition}
\newtheorem{lemma}[theorem]{Lemma}
\newtheorem{proposition}[theorem]{Proposition}
\newtheorem{definition}[theorem]{Definition}
\newtheorem{assumption}{Assumption}
\theoremstyle{remark}
\newtheorem{remark}[theorem]{Remark}
\crefname{assumption}{Assumption}{Assumptions}
\title{Specialization after Generalization: \\ Towards Understanding Test-Time Training in Foundation Models}
\author{%
  Jonas Hübotter\thanks{Equal contribution. Correspondence to Jonas Hübotter \texttt{jonas.huebotter@inf.ethz.ch}.}\textsuperscript{\normalfont \;\;,1}%
  \quad Patrik Wolf\footnotemark[1]\textsuperscript{\normalfont \;\;,1,2}%
  \quad Alexander Shevchenko\footnotemark[1]\textsuperscript{\normalfont \;\;,1}%
  \\[1pt] \textbf{Dennis Jüni}\textsuperscript{\normalfont 1}%
  \quad \textbf{Andreas Krause}\textsuperscript{\normalfont 1}%
  \quad \textbf{Gil Kur}\textsuperscript{\normalfont 1}%
  \\[3pt]
  \textsuperscript{1}ETH Zürich, Switzerland
  \quad \textsuperscript{2}Max Planck Institute for Intelligent Systems, Tübingen, Germany
}
\newcommand{\rebut}[1]{#1}
\begin{document}

\maketitle

\begin{abstract}
Recent empirical studies have explored the idea of continuing to train a model at test-time for a given task, known as test-time training (TTT), and have found it to yield significant performance improvements.
However, there is limited understanding of why and when TTT is effective.
Earlier explanations mostly focused on the observation that TTT may help when applied to out-of-distribution adaptation or used with privileged data.
However, the growing scale of foundation models with most test data being in-distribution questions these explanations.
We instead posit that foundation models remain globally underparameterized, with TTT providing a mechanism for \emph{specialization after generalization}---focusing capacity on concepts relevant to the test task.
Specifically, under the linear representation hypothesis, we propose a model in which TTT achieves a substantially smaller \emph{in-distribution} test error than global training.
We empirically validate our model's key assumptions by training a sparse autoencoder on ImageNet, showing that semantically related data points are explained by only a few shared concepts.
Finally, we perform scaling studies across image and language tasks that confirm the practical implications of our model, identifying the regimes where specialization is most effective.\looseness=-1
\end{abstract}

\section{Introduction}\label{sec:introduction}

Since the ``ImageNet moment'' in 2012 when AlexNet won the ImageNet challenge~\citep{krizhevsky2012imagenet}, scaling data, parameters, and compute have led to \rebut{foundation models, which are large-scale neural networks trained on vast datasets that can be applied across a wide range of use cases~\citep{bommasani2021opportunities}.}
This has spurred research on scaling laws, suggesting that scaling pre-training of a single model on a broad data distribution is sufficient for good performance on downstream tasks~\citep{kaplan2020scaling,henighan2020scaling,hoffmann2022training}.
With first-generation foundation models, fine-tuning was used primarily to adapt models to out-of-distribution test data~(i.e., with a distribution shift) or to leverage fresh training data that was not seen during pre-training~(so-called ``privileged'' data).
Test-time training~\citep[TTT;][]{sun2020test,hardt2024test,akyurek2025surprising} emerged as pushing this mechanism to the extreme: fine-tuning a separate model for each prediction.
In recent years, foundation models have grown so large that most test data is effectively ``in-distribution'', meaning the model has encountered similar data during pre-training.
This raises a key question:\looseness=-1
\begin{center}
Can TTT improve predictions \emph{even} in-distribution while using only already-seen data?
\end{center}

\rebut{While foundation models typically have a large parameter count, extensive results on scaling laws show continuing improvements in performance when scaling model size~\citep{kaplan2020scaling,bubeck2021universal}. Consequently, our work posits that today’s foundation models are effectively “underparameterized”, meaning that performance can be improved by further scaling.}
We hypothesize that due to this \rebut{effective} underparameterization, even if test data is in-distribution, the model cannot simultaneously approximate the ground truth across the full data distribution.
{TTT offers a mechanism to \emph{specialize} the model to a local area around the test example.}
By temporarily ``forgetting'' irrelevant pre-trained knowledge, the model ``frees up'' capacity to learn the relevant concepts to the immediate task at a higher resolution.
We refer to this mechanism as \emph{specialization after generalization}.
The mechanism of TTT---temporarily reallocating capacity by ``forgetting'' irrelevant knowledge---connects to concepts of capacity saturation and interference studied in continual learning~\citep{mccloskey1989catastrophic,kirkpatrick2017overcoming}.\looseness=-1

\begin{wrapfigure}{r}{0.42\textwidth}
    \centering
    \vspace{-2.5ex}
    \includegraphics[width=\linewidth]{./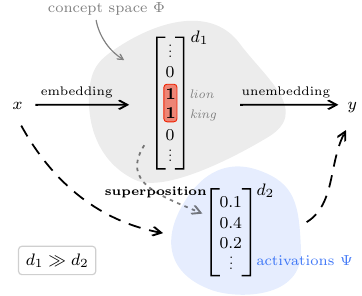}
    \vspace{-3ex}
    \caption{Setting: large concept space~$\Phi$, superimposed on the feature space~$\Psi$.}
    \label{fig:main}
    \vspace{-2ex}
\end{wrapfigure}

We propose to model this phenomenon under the \emph{linear representation hypothesis}~\citep[LRH;][]{mikolov2013linguistic,park2024linearrepresentationhypothesisgeometry,park2025geometrycategoricalhierarchicalconcepts}, which postulates that models represent high-level concepts---meaningful semantic features---as directions in a latent space.
This latent concept space is typically assumed to be sparse conditioned on the input, meaning that any given input activates only a few possible concepts.
For a particular prediction task, the linear coefficients of the active concepts effectively characterize their ``meaning'' for that task.
Because the number of real-world concepts far exceeds a model's dimensionality, these concepts are superimposed within the model's dense activations~\citep[cf.~\cref{fig:main};][]{elhage2022superposition}.
The LRH has been used extensively in prior work on interpretability~\citep{kim2018interpretability} and activation steering~\citep{bolukbasi2016man,templeton2024scaling} of foundation models.
In this work, we analyze a model where TTT can learn the meaning of these superimposed concepts from data more efficiently than training a ``global'' model or non-parametric methods.\looseness=-1

\paragraph{Setting: The linear representation hypothesis.}
The LRH can be formalized as follows.
We assume the \emph{existence} of an $s$-sparse \emph{concept space} ${\Phi : \spX \to \R^{d_1}}$ which is approximated by a learned \emph{feature map} ${\Psi : \spX \to \R^{d_2}}$ with $d_2 \ll d_1$.
The ground truth to be learned is linear in the concept space, i.e., $f^\star(x) = \ip{\Phi(x)}{\wstar}$ for some unknown $\wstar \in \R^{d_1}$.\footnote{In classification, the logits are canonically parameterized as $\ip{\Phi(x)}{w_c}$ where $w_c$ is the class-specific weight vector. The class probability is then given by $\mathbb{P}(c \mid x) \propto \exp(\ip{\Phi(x)}{w_c})$.} %
In \cref{sec:validation}, we leverage the LRH to develop a mechanistic understanding of \emph{how} TTT behaves, and make the following key observations:\looseness=-1
\begin{enumerate}[left=12pt,itemsep=0pt,topsep=0pt]
    \item[\textbf{O1:}] The learned features $\Psi$ yield similar neighborhoods to neighborhoods in concept space $\Phi$.
    \item[\textbf{O2:}] Among a test point $\xstar \in \spX$ and its neighborhood (in $\Psi$-space), $f^\star$ can be approximated by an $s$-sparse linear function in the concept space $\Phi$.
    \item[\textbf{O3:}] TTT in $\Psi$-space finds approximately the same task-specific model as sparse TTT in the concept space, indicating that TTT implicitly adjusts coefficients based on only a few concepts relevant to the test task.
\end{enumerate}

Based on the LRH and our key observations, in \cref{sec:implications,sec:model}, we analyze \emph{when} and \emph{why} TTT can be effective.
We find that \textbf{TTT improves predictions in the underparameterized regime, but its improvement diminishes as models become overparameterized.}
We support this finding empirically and theoretically:\looseness=-1
\begin{itemize}[left=12pt,itemsep=0pt,topsep=0pt]
    \item \textbf{When? Implications for real data (\sref{sec:implications}):} We analyze TTT in a setting where we keep the learned superimposed concepts (i.e., $\Psi$) fixed and only update last-layer weights.
    Through scaling studies in image classification and language modeling, we find that TTT improves accuracy when the model is underparameterized, i.e., before the test loss saturates with increasing model size.
    Recent empirical findings also support this view~\citep{lim2025sparse,doimo2024representation}, showing that TTT learns the local meaning of concepts rather than discovering new concepts.\looseness=-1

    \item \textbf{Why? Theoretical insights (\sref{sec:model}):} We analyze the in-distribution test error of TTT in comparison to the test error of a globally trained model.
    Under the LRH and our key observations, we show that TTT generalizes at test-time even if the model is globally underparameterized~(i.e., the feature space is exponentially smaller than the concept space:~$d_2 \sim \log d_1$).
    In contrast, we show that if concepts are superimposed in an underparameterized feature space, the model cannot globally disentangle the meaning of all concepts.\looseness=-1
\end{itemize}

\section{Related work}

\paragraph{TTT.}

In the classical machine learning paradigm, models are trained on a fixed training set and then kept frozen during evaluation.
Despite this standard practice that was used for decades, early work suggested specializing the model at test-time to each prediction task---such examples are local learning~\citep{cleveland1979robust,cleveland1988locally,atkeson1997locally} and local fine-tuning~\citep{bottou1992local}.
More recently, the idea of TTT~\citep{sun2020test,wang2020tent} has regained attention in the context of fine-tuning large foundation models during evaluation~\citep[e.g.,][]{krause2018dynamic,hardt2024test,sun2024learning}.
TTT for a few gradient steps on (self-)supervised losses has since shown success in domains such as control~\citep{hansen2021self}, abstract reasoning~\citep{akyurek2025surprising,zweiger2025self}, language modeling~\citep{hardt2024test,hubotter2024efficiently,sun2024learning,zhang2025test,von2025mesanet,yu2025finemedlm}, and video generation~\citep{dalal2025one}.
Many standard TTT methods train on carefully selected data from the pre-training dataset~\citep[i.e., do not add any new privileged information;][]{hardt2024test,hubotter2024efficiently}, and several works studied how to optimally select data for imitation, e.g., the early seminal work of~\citet{mackay1992information} and recent extensions~\citep{hubotter2024transductive,bagatella2025active}.
TTT has also been extended from supervised learning to reinforcement learning~\citep{zuo2025ttrl,bagatella2025test,diazbone2025discover}.\looseness=-1

So far it has not been well understood why and when TTT is effective.
While many different methods have been proposed for TTT, we focus here on analyzing ``semi-parametric'' TTT~\citep[e.g.,][]{hardt2024test,hubotter2024efficiently}, where a pre-trained model is fine-tuned with a supervised loss on a small neighborhood of the test point in the training data.
This is different from some other methods for test-time ``adaptation'', which are commonly applied with distribution shifts~\citep[e.g.,][]{wang2020tent,zhang2022memo,durasov20243}. %
\cite{basu2023statistical} consider a similar setting to ours, but analyze it through the lens of non-parametric estimation, relying on the smoothness of the target function in the feature space~$\Psi$.
In contrast, our framework explicitly models the underlying sparse concept space~$\Phi$.
This explains why TTT substantially outperforms ``non-parametric" methods even when the function is locally high-dimensional ($s$-sparse) in the concept space.
Furthermore, while most prior theoretical work simply assumes the TTT gradient aligns with the gradient on the oracle label~\citep[e.g.,][]{sun2020test}, our work provides an idealized model where this alignment is justified.\looseness=-1

\paragraph{Sparse autoencoders (SAEs) and the LRH.}
Our theoretical framework is built upon the LRH, which posits that foundation models represent high-level concepts as linear directions in their activation spaces.
This idea has its roots in early word embedding models, which famously showed that semantic analogies could be solved with simple vector arithmetic~\citep{mikolov2013linguistic,pennington2014glove,arora2016latent}.
More recently, the LRH has been validated across a wide range of models and domains, with studies identifying linear representations for abstract concepts like sentiment~\citep{tigges2023linear}, the state of a game board~\citep{nanda2023emergent}, and even fundamental axes of space and time~\citep{gurnee2024language}.
A key tool for discovering and studying these conceptual directions are SAEs~\citep{makhzani2013k,lieberum2024gemma,gao2024scaling}.
SAEs are auxiliary models trained to reconstruct a foundation model's internal activations from a sparse, overcomplete dictionary of features.
This process often yields features that are monosemantic, or aligned with single, human-interpretable concepts, thereby providing an empirical method to uncover the sparse concept space we consider in our work~\citep{cunningham2023sparse,templeton2024scaling}.\looseness=-1

\section{\emph{How} does specialization behave?}
\label{sec:validation}
\vspace{-0.5ex}

In this section, we begin by developing a mechanistic understanding of \emph{how} TTT behaves.
Since the ``true’’ hypothesized concept space~$\Phi$ is not accessible, we train SAEs to learn an approximate concept space $\smash{\hat{\Phi}}$ whose properties can be analyzed. 
We use a top-$k$ SAE~\citep{gao2024scaling} to obtain sparse feature representations. 
After a brief description of the experimental setup, we detail our key observations \textbf{O1-O3} from \cref{sec:introduction}.\looseness=-1

\subsection{Experimental setup}
\vspace{-0.5ex}

\paragraph{SAE framework.} The SAE encoder projects a dense input vector $\Psi(x) \in \mathbb{R}^{d_2}$ to a learned high-dimensional, sparse representation $\smash{\hat{\Phi}(x) \in \mathbb{R}^{d_1}}$:
\begin{equation*}
\hat{\Phi}(x) := \mathrm{top_s}( E \cdot \Psi(x)), \quad E \in \mathbb{R}^{d_1\times d_2},
\end{equation*}
where the $\mathrm{top_s}$ operator retains the $s$ highest values and sets all others to zero. A linear decoder then reconstructs the original vector from this sparse representation:
\begin{equation*}
\hat{\Psi} (x): = D \cdot \hat{\Phi}(x), \quad D \in \mathbb{R}^{d_2\times d_1}.
\end{equation*}
The encoder $E$ and decoder $D$ (here for simplicity without bias terms) are optimized to minimize the reconstruction error:
\begin{equation*}
    \mathbb{E}_{x} \|\Psi(x) - \hat{\Psi}(x)\|_2^2 \rightarrow \min_{E,D}.
\end{equation*}
To mitigate the issue of ``dead features'' (elements of $\smash{\hat{\Phi}(x)}$ that are never activated), we incorporate a ghost gradient auxiliary loss \citep{gao2024scaling}, which resulted in only $4\,\%$ inactive concepts in our experiments.\looseness=-1

\paragraph{ImageNet experiments.} We use the ImageNet-1K dataset \citep{deng2009imagenet}. The dense vectors $\Psi(x)$ are normalized CLIP embeddings \citep{radford2021learning} of dimension $d_2=512$~(we use only the $\texttt{<CLS>}$ component). We set the sparse dimension to $d_1 = 8 \times d_2 = 4096$ and the sparsity level to $s=16$. For our analysis, we use the SAE's reconstructions $\smash{\hat{\Psi}(x)}$ rather than the original CLIP embeddings $\Psi(x)$.
This choice aligns our experiments more closely with our theoretical model and circumvents known challenges in training SAEs on raw, complex embeddings.
This comes at the cost of a mild $6\,\%$ drop in accuracy for a global linear classifier trained on the embeddings.\!\footnote{\rebut{Note that the SAE is trained in an unsupervised way, without explicitly retaining classification accuracy, yet using the SAE's features leads only to a minor drop in accuracy.}}
In \cref{appendix:sae_mnist}, we additionally validate the SAE framework on MNIST~\citep{lecun1998mnist}, where we use $\Psi(x)$ directly rather than its reconstruction, as recovering an accurate $\smash{\hat{\Phi}(x)}$ is much easier on this simpler dataset.
We include training details in \cref{appendix:sae_imagenet}.

\paragraph{TTT baseline.} We define the neighborhood of a test point $\xstar$, denoted $\spD_{\xstar}$, as its $k=50$ nearest neighbors within the training set.
Proximity is measured by the $L_2$-distance in a given feature space. For example, $\spD^{\Psi}_{\xstar}$ denotes the neighborhood found in the space of CLIP embeddings. Since the CLIP embeddings are normalized, this is equivalent to using cosine similarity.
The TTT procedure involves training a local linear classifier $W_{\xstar}$ on the neighborhood of $\xstar$:
\begin{equation}\label{eq:ttt}
W_{\xstar} := \argmin_{W \in \R^{1000 \times d_2}} \, \frac{1}{k} \sum_{(x,y)\in \spD^{\hat{\Psi}}_{\xstar}}\mathcal{L}(W\hat{\Psi}(x),y),
\end{equation}
where $\mathcal{L}$ is the standard cross-entropy loss for the 1000 ImageNet classes.
TTT in the estimated concept space is defined analogously using $\smash{\hat{\Phi}(x)}$ and neighborhoods $\smash{\spD^{\hat{\Phi}}_{\xstar}}$.

\subsection{Results}

\begin{wrapfigure}[]{r}{0.4\textwidth}
\raggedleft
\vspace{-6.1ex}
\incplt[\linewidth]{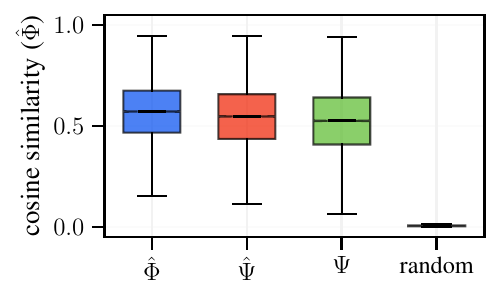}
\vspace{-4ex}
\caption{Average cosine similarity in the concept space ($\smash{\hat{\Phi}}$) between a test point and its neighbors. Neighborhoods are selected in the original ($\Psi$), reconstructed ($\smash{\hat{\Psi}}$), and concept ($\smash{\hat{\Phi}}$) spaces.}\label{fig:cos_sim}
\vspace{-3ex}
\end{wrapfigure}

\paragraph{O1: The SAE preserves local geometry.}
Our first hypothesis is that the SAE mapping preserves the angular relationships between a point and its neighbors. To test this, we select a neighborhood for a test point $\xstar$ in three different spaces: the original CLIP space ($\Psi$), the reconstructed space ($\smash{\hat{\Psi}}$), and the estimated concept space ($\smash{\hat{\Phi}}$). We then measure the average cosine similarity in the estimated concept space between $\xstar$ and points in each neighborhood.
As shown in \cref{fig:cos_sim}, the distributions of cosine similarities are nearly identical regardless of the space used for neighbor selection.
This suggests that the SAE projection to the concept space preserves the local geometric structure, supporting our first key observation.
\vspace{1ex}

\vspace{-3ex}\paragraph{O2: Neighborhoods are supported by few concepts.}
We hypothesize that the data within a local neighborhood can be explained by a small subset of concepts.
To verify this, we train a TTT classifier for ImageNet on a masked version of the concept vectors, $\smash{\hat{\Phi}_m(x) = m \odot \hat{\Phi}(x)}$, where $m \in \{0,1\}^{d_1}$ is a binary mask with $m_i = \mathbb{I}\{\theta_i > 0\}$ for some trainable parameter $\theta \in \R^{d_1}$. The mask itself is learned for each neighborhood by optimizing the following objective with a straight-through estimator for the mask's gradients:\footnote{We use the straight-through estimator $\nabla_{\!\theta}\, m = \mathrm{sigmoid}(\theta/\tau)$ with $\tau=0.1$.}
\begin{equation}\label{eq:adaptive_mask}
    W_{\xstar} := \argmin_{W, m} \, \frac{1}{k}\sum_{(x,y)\in \spD^{\hat{\Phi}}_{\xstar}}\mathcal{L}(W\hat{\Phi}_m(x),y) + \lambda \|m\|_2^2.
\end{equation}

\begin{wraptable}[]{r}{0.4\textwidth}
\centering
\vspace{-3ex}
\begin{tabular}{lcc}
\toprule
 & \textbf{Global} & \textbf{TTT} \\
\midrule
$\hat{\Phi}(x)$ & 71.45\reso{0.21} & 72.64\reso{0.20} \\
$\hat{\Psi}(x)$ & 71.26\reso{0.20} & 72.56\reso{0.19} \\
\bottomrule
\end{tabular}
\caption{ImageNet accuracy of globally trained linear models vs.\ TTT, with bootstrap standard errors.
}
\label{table:adaptive_mask}
\vspace{-2ex}
\end{wraptable}

With a sparsity penalty of $\lambda=0.2$, the learned masks are highly sparse, activating on average only $\|m\|_0 \approx 40$ concepts. This is substantially smaller than the total number of unique concepts active across the neighborhood, which is approximately $180$. As shown in \cref{table:adaptive_mask} (TTT column), this sparsely supported model performs on par with TTT on top of dense reconstructions $\smash{\hat{\Psi}(x)}$\footnote{TTT in the concept space \emph{without} extra masking achieves similar results. The fact that it also works with masking provides strong evidence that TTT adjusts only a few selected concepts in the neighborhood.}.
This suggests that a small, adaptively chosen set of concepts is sufficient to capture the relevant information within a local region.
We obtain similar results for the Gemma Scope SAE~\citep{lieberum2024gemma} on MNIST data, which we present in \cref{appendix:sae_mnist}.\looseness=-1

Notably, a non-adaptive mask, such as one that only includes concepts active in the test point $\xstar$, performs poorly on ImageNet ($71.51\,\%$). The learned mask, in contrast, often excludes some of the test point's active concepts ($\smash{\|\hat{\Phi}(\xstar) \odot m\|_0 \approx 11 < 16 = s}$), likely identifying and removing spurious features to improve generalization.

\begin{wrapfigure}[]{r}{0.4\textwidth}
\raggedleft
\vspace{-5ex}
\incplt[\linewidth]{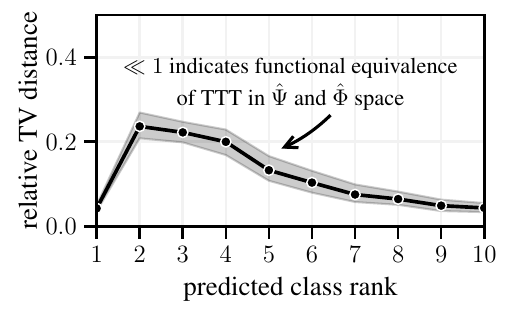}
\vspace{-3ex}
\caption{Comparison of predicted class probabilities for TTT models trained on dense reconstructions ($\smash{\hat{\Psi}}$) and sparse concepts ($\smash{\hat{\Phi}}$), relative to their magnitude.
The small relative TV distance ($\ll\!\! 1$; defined in~\cref{appendix:sae_logits}) between both distributions indicates strong functional agreement between TTT in $\smash{\hat{\Psi}}$ and $\smash{\hat{\Phi}}$ space.
We show 90\,\% bootstrap confidence intervals across $1000$ test points.}
\label{fig:logits}
\vspace{-2ex}
\end{wrapfigure}
\paragraph{O3: TTT in feature space implicitly finds a sparse solution.}
While the adaptive masking in \cref{eq:adaptive_mask} \emph{explicitly} enforces a sparse solution, we find evidence that standard TTT in the feature space implicitly favors a solution that is sparse in the concept space.
First, the TTT models trained on dense reconstructions $\smash{\hat{\Psi}(x)}$ and sparse concepts $\smash{\hat{\Phi}_m(x)}$ achieve nearly identical accuracy (cf.~Table \ref{table:adaptive_mask}).
Furthermore, their predictions agree in $\approx\!\! 89\,\%$ of cases, indicating that they learn functionally equivalent classifiers (apart from pathological examples).
\Cref{fig:logits} reinforces this by showing that both models lead to closely matched predictive distributions over the top-10 predicted classes.
In \cref{fig:logits}, we compare the ordered predicted probabilities for $\smash{\hat{\Phi}}$ to the corresponding probabilities for $\smash{\hat{\Psi}}$, matching the distributions' temperatures, and averaging over all test points.
Their strong correspondence suggests that TTT on reconstructed embeddings is implicitly biased towards a sparse solution in the underlying concept space.
This phenomenon may be linked to the implicit bias of optimization algorithms (e.g., SGD or Adam), which are known to favor minimum-norm solutions~\citep{gunasekar2018implicit,belkin2019reconciling,frei2022benign}.
When the feature map superimposes concepts, this implicit bias may favor sparse solutions in the underlying concept space~\citep{vaskevicius2019implicit}.\looseness=-1

\section{\emph{When} does specialization help?}
\label{sec:implications}

After gaining some mechanistic understanding of TTT in \cref{sec:validation}, we next study \emph{when} specialization through TTT improves over a globally trained model.
In \cref{sec:model}, we then relate our results from Sections~\ref{sec:validation}-\ref{sec:implications}, by providing theoretical support for \emph{why} the LRH and our observations \textbf{O1-O3} may support our practically relevant findings from this section.\looseness=-1

\subsection{Settings}

We consider the following three tasks, and provide implementation details in \cref{sec:exp_details}:

\begin{itemize}[left=12pt,itemsep=0pt,topsep=0pt]
  \item \textbf{MNIST.}
  We train a global classifier following the LeNet-5~\citep{lenet-mnist} architecture with a cross-entropy loss on the MNIST~\citep{lecun1998mnist} dataset of handwritten digits.
  To distinguish learning of concepts from TTT's ability to identify the correct weights for each locally relevant concept, we restrict TTT to updating only the last linear layer.
  This layer is fine-tuned to minimize the cross-entropy loss on the neighborhood of each test point (cf.~\cref{eq:ttt}), while all earlier layers remain fixed. \rebut{As shown by additional experiments in \cref{sub:end_to_end_TTT}, end-to-end TTT does not yield a meaningful performance improvement, validating this design choice.} We report classification error on the test set.\looseness=-1

  \item \textbf{ImageNet.}
  We use a pre-trained CLIP ViT-B/32 vision transformer~\citep{radford2021learning} to compute $512$-dimensional embeddings of images.
  We then train a global linear classifier with a cross-entropy loss on the ImageNet-1K~\citep{deng2009imagenet} dataset.
  As with MNIST, TTT updates only the last linear layer, and we report classification error on the test set.\looseness=-1

  \item \textbf{Language modeling on the Pile.}
  To validate implications of our model on a real-world task, we consider language modeling on the Pile~\citep{gao2020pile800gbdatasetdiverse} dataset, restricting our use to data which is obtained and used in compliance with the terms of service of the data host.
  This version of the Pile contains $17$ diverse and high-quality sub-datasets, including Wikipedia articles, academic papers, code, and more.
  We use the open-source implementation of \citet{hardt2024test} which uses the Pile training set containing 210M sequences of total size 1.3TB for selecting neighbors, and evaluates on the Pile test set.\!\footnote{We split sequences to avoid retrieval-evaluation overlap (cf.~\citet{hardt2024test}), and evaluate on $1\,\%$ of the test set, corresponding to $1796$ sequences.}
  As recommended by \citet{gao2020pile800gbdatasetdiverse}, we report \emph{bits per byte}, which is proportional to the negative log-likelihood loss.
  As baseline for global training, we evaluate the Qwen2.5 family of base models~\citep{qwen2025qwen}.
  Analogously to \citet{hardt2024test}, we implement TTT by fine-tuning a pre-trained LLM for a single gradient step each on $50$ neighbors in the order that they are selected, from most similar to least similar.
  To perform TTT on a single GPU, we use LoRA~\citep{hu2022lora}, fine-tuning around $1\,\%$ of the model parameters.\looseness=-1

\end{itemize}

Error bars correspond to $90\,\%$ confidence intervals computed via bootstrapping with $1000$ samples.

\begin{figure}[t]
\centering
\vspace{-0.5ex}
\incplt[\linewidth]{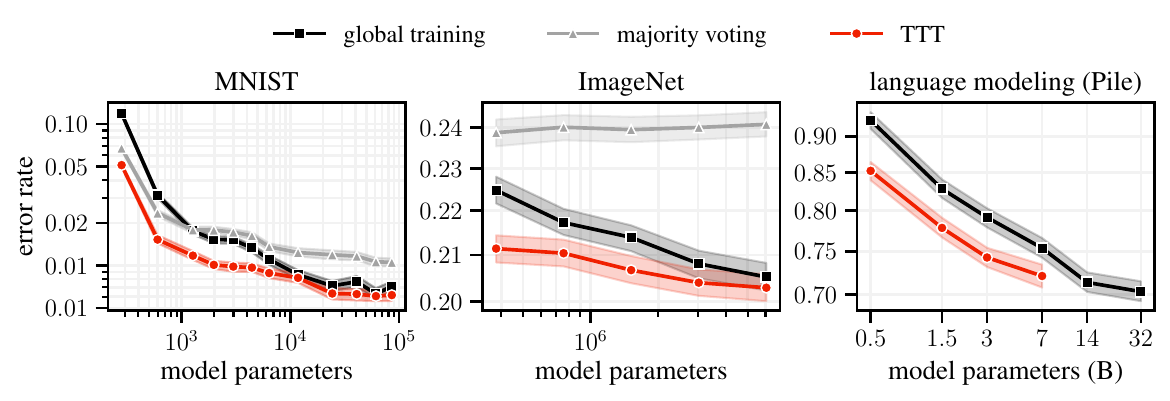}
\vspace{-3ex}
\caption{\textbf{Model scaling.} Error rates when scaling model size~(classification error in image classification and bits per byte in language modeling).
We evaluate a globally trained model (black) across different model sizes, as well as TTT (red) and a majority vote on the neighborhood (gray).
While majority vote leads to a poor predictor with many classes (i.e., complex tasks), TTT consistently outperforms global training, with the performance gap shrinking as the model size increases.
This supports our model's implication that TTT effectively recombines learned concepts, which is particularly beneficial when many concepts are superimposed in an underparameterized model.
7B is the largest model size we could train with LoRA on an NVIDIA RTX 4090.
We do not evaluate majority voting for language modeling, since this has been shown to perform poorly by \cite{hardt2024test}.\looseness=-1
}
\vspace{-0.5ex}
\label{fig:implications:scaling_model_size}
\end{figure}

\subsection{Results}

\paragraph{Scaling with model size.}

We conduct a scaling study by varying the model size and comparing the performance of TTT against global training as well as a majority vote baseline over the neighborhood (i.e., a simple non-parametric approach). The results are shown in \cref{fig:implications:scaling_model_size}. For MNIST, we train convolutional neural networks of varying sizes, as summarized in \cref{sec:exp_details}.
For ImageNet, we train multi-layer perceptrons on top of CLIP embeddings, varying the hidden dimension.
In language modeling, we evaluate Qwen2.5 base models of sizes ranging from 0.5B to 32B parameters.
We find across all tasks that TTT outperforms global training and majority vote, with the performance gap shrinking as the model size increases.
We hypothesize that at a larger model size, fewer concepts have to be superimposed in latent space, leading to less interference when globally mapping latent representations to predictions.
While a larger model size allows for better global disentanglement of concepts, TTT can compensate for limited model capacity by adapting the head to the specific concepts in a local neighborhood.\looseness=-1

\paragraph{Scaling with dataset size.}

\begin{figure}[t]
\centering
\vspace{-0.5ex}
\incplt[\textwidth]{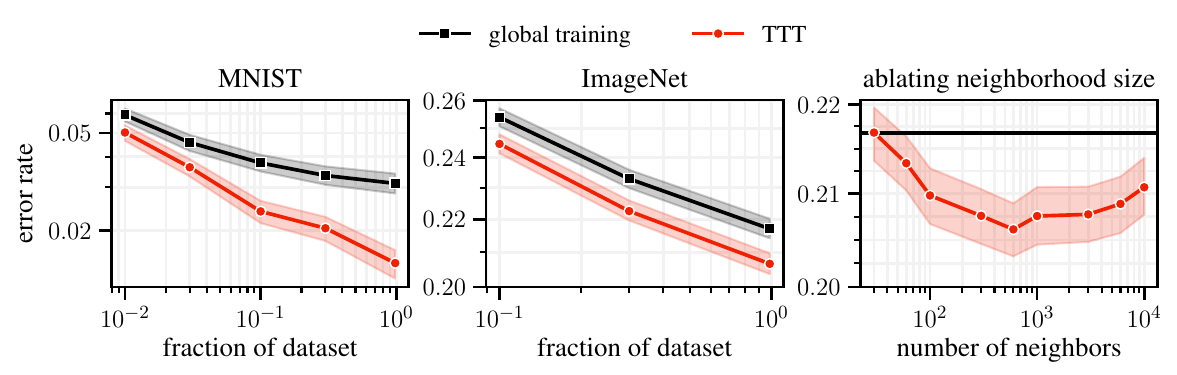}
\vspace{-1ex}
\vspace{-1ex}
\caption{\textbf{Data scaling.}
\textbf{Left \& Middle:}~Classification error rate of different models, trained on varying fractions of the MNIST and ImageNet training dataset.
Notably on MNIST, we find that TTT learns more effectively from larger sample sizes than global training.
\textbf{Right:}~We vary the neighborhood size for TTT on ImageNet.
We find that the optimal neighborhood size trades off statistical variance due to ``too few examples'' and ``too many examples with irrelevant concepts''.}
\label{fig:implications:scaling_dataset_size}
\vspace{-0.5ex}
\end{figure}

Next to performing a scaling study on model size, we also vary the dataset size.
\Cref{fig:implications:scaling_dataset_size} shows the results at a fixed model scale.
We subsample the training datasets of ImageNet and MNIST to fractions ranging from $1\,$ to $100\,\%$ of the original training set size, ensuring that all subsampled datasets are class-balanced.
We then train global models and evaluate TTT on the respective test sets.
We find that TTT consistently outperforms global training, with the slight trend of the performance gap widening as the dataset size increases.
We hypothesize that larger datasets provide richer neighborhoods for local adaptation, enabling TTT to specialize more effectively to the specific concepts relevant to each test point.\looseness=-1

\paragraph{TTT improves predictions locally.}

To validate that TTT improves predictions locally, we globally evaluate some TTT heads.
As shown in \cref{table:ttt_evaluated_globally}, while improving accuracy on the test point and neighborhood, the global accuracy of these fixed TTT heads is significantly lower than that of the model trained on the entire dataset without TTT.
These results confirm that, while TTT can provide localized performance improvements when adapted individually at test time, such benefits do not generalize when the same adaptation is applied globally.
We further hypothesize that the neighborhood needs to be sufficiently large and diverse to span all relevant concepts.
At the same time, the neighborhood needs to be sufficiently local to focus on \emph{only} those concepts that are relevant to the test point.
In \figref{fig:implications:scaling_dataset_size}{right}, we support this hypothesis by varying the neighborhood size for TTT on ImageNet, and finding that the optimal neighborhood size trades off locality and diversity.\looseness=-1

\begin{takeaway}
TTT locally improves predictions for underparameterized models, but its improvement diminishes as models become overparameterized.
\end{takeaway}

\begin{table}[t]
\centering
\vspace{1ex}
\begin{tabular}{lcccc}
\toprule
 & \textbf{Global} & \textbf{TTT on Test Sample} & \textbf{TTT on Neighborhood} & \textbf{Global TTT} \\
\midrule
MNIST & 98.57\reso{0.12} & 99.01\reso{0.10} & 100.00\reso{0.00} & 36.38\reso{0.16}\\
ImageNet & 78.33\reso{0.19} & 79.39\reso{0.18} & 95.19\reso{0.00} & 77.04\reso{0.06}\\
\bottomrule
\end{tabular}
\caption{Accuracy of a linear model trained on the full dataset (global), TTT evaluated on the test sample, TTT evaluated on the neighborhood, and of ten randomly selected local TTT heads evaluated on the entire test set (global TTT). We select ten TTT heads to keep the evaluation computationally tractable. The table reports bootstrap standard errors.
\looseness=-1}
\label{table:ttt_evaluated_globally}
\end{table}

\section{\emph{Why} may specialization help?}\label{sec:model}

Based on our mechanistic understanding gained in \cref{sec:validation} through observations \textbf{O1-O3}, we next provide theoretical evidence that the LRH supports our practical observations from \cref{sec:implications}, therefore, potentially offering an understanding for \emph{why} specialization is effective.
To simplify the analysis, we consider a univariate regression setting with input space~$\spX$ and output space~$\R$.
Based on the LRH, we posit the existence of an $s$-sparse high-dimensional \emph{concept space}~$\Phi : \spX \to \R^{d_1}$, and that the ground truth output is a linear combination of concepts, i.e., $\langle \Phi(x), \wstar \rangle$ for an arbitrary weight vector $\wstar \in\R^{d_1}$.
Let us denote the learned \emph{feature map} by~$\Psi : \spX \to \R^{d_2}$ and assume that the feature map is ``underparameterized'', i.e., $1 \le d_2 \ll d_1$.
We assume access to a training set~$\spD$ of size~$N$.\looseness=-1

\subsection{Analyzing the in-distribution test error of TTT}\label{sec:theory_ttt}

Informally, the LRH states that the model represents high-level concepts linearly as directions in some concept space, with the unembedding~$\wstar$ defining the ``meaning'' of concepts.
As is well-known from compressed sensing~\citep{candes2006near,donoho2006compressed}, the feature map~$\Psi$ can represent exponentially many concepts in \emph{superposition}~\citep{elhage2022superposition}, i.e., $d_1 \sim s \exp(d_2/s)$.
However, as we will see in this section, even if the underparameterized feature map encodes the structure of the concept space, it is not straightforward to learn a linear mapping of these superimposed concepts.
In contrast, we show in an idealized setting based on our empirical observations, that TTT can efficiently learn the meaning of exponentially many concepts from data, by \emph{specializing} the model to the concepts relevant to the test data.
We begin by restating our three key hypotheses based on observations from \cref{sec:validation}.
We formalize these hypotheses in \cref{sec:formal_assumptions} and include proofs in \cref{sec:proofs}.\looseness=-1

\paragraph{{Hypothesis 1:} The feature space preserves the geometry of the concept space.}

Based on \textbf{O1}, we posit that the learned feature map $\Psi$ preserves the similarity structure of the concept space~$\Phi$.
Let us denote by $\cossim[\Psi]{x}{x'}$ a similarity measure in $\Psi$-space,\!\footnote{$\mathrm{sim}$ may be any measure such that neighborhoods in feature and concept space approximately coincide.} which defines a neighborhood:\looseness=-1
\begin{definition}[Neighborhood]
\label{def:neighborhood}
For a test point $\xstar \in \spX$ and a radius $r \ge 0$, the \emph{neighborhood} of size~$k$ in the training set is defined based on the similarity measure in terms of learned features:
$$B_{\xstar}^\Psi(r) := \{(x, y) \in \spD \mid \cossim[\Psi]{x}{\xstar} \ge 1-r\}, \quad k := |B_{\xstar}^\Psi(r)|.$$
\end{definition}\vspace{-1ex}
We assume that the neighborhood in feature space is contained within a slightly larger neighborhood in concept space, i.e., $\smash{B_{\xstar}^\Psi(r) \subseteq B_{\xstar}^\Phi(r + \delta)}$.
For example, by the classical Johnson-Lindenstrauss lemma~\citep{johnson1984extensions,vershynin2018high}
this assumption is satisfied with high probability for $\smash{\delta \le O(\sqrt{\log(N) / d_2})}$ if the lower-dimensional feature map $\Psi(x)$ is a random projection of $\Phi(x)$ and $\mathrm{sim}$ measures angles.\looseness=-1

\paragraph{{Hypothesis 2:} Neighborhoods are supported by few concepts.}

In \cref{sec:validation}, we have made the surprising observation that the neighborhood of a test point $\xstar$ is explained by only a few active concepts~(cf.~\textbf{O2}).
Based on this, we assume that locally there exists an $\Theta(s)$-sparse concept vector $w_{x^*} \in \R^{d_1}$ that approximates $\langle \Phi(x), \wstar \rangle$ over the test point's neighborhood.
This assumption may seem unrealistic at first, since the neighborhood of $\xstar$ has $k$ samples and up to $s \cdot k$ active concepts.
Yet, as we observe empirically in \cref{sec:validation}, TTT learns a $\Theta(s)$-sparse regressor that does well on the entire neighborhood \rebut{if the neighborhood size is sufficiently small}.\looseness=-1

\paragraph{{Hypothesis 3}: TTT implicitly regularizes towards sparsity in concept space.}

Based on \textbf{O3}, we hypothesize that TTT finds sparse solutions in concept space, even without explicit regularization.
Concretely, we assume that TTT in $\Psi$-space implicitly finds a $\Theta(s)$-sparse solution when mapped to concept space.

With these hypotheses, we can bound the in-distribution test error of TTT using techniques from sparse recovery~\citep{bickel2009simultaneous,van2009conditions}:\footnote{\rebut{Our results indicate that TTT may not converge quickly (or at all) if the label noise~$\sigma^2$ is large or any of Hypotheses~1-3 were to be violated.
Notably, the neighborhood size~$k$ needs to be sufficiently small for Hypothesis~2 to hold.}}

\begin{proposition*}[informal, see Proposition~\ref{thm:generalization} in the appendix]
Let $\Phi : \spX \to \R^{d_1}$ be an $s$-sparse concept space and $\Psi : \spX \to \R^{d_2}$ be a learned feature map with $d_2 \ll d_1$.
Let $f(x) = \ip{\Phi(x)}{\wstar}$ be the ground truth function and labels be $\sigma^2$-subgaussian.
Let $\xstar$ be a test point with a neighborhood (in $\Psi$-space) of sufficiently small size~$k$ such that Hypotheses 1--3 hold.
Let the learned features $\Psi$ be sufficiently expressive to represent $f$ locally (in particular, $d_2 \geq \Omega(s \log d_1)$).
We denote by $\vTTT$ the local empirical risk minimizer on the neighborhood of $\xstar$.

Then, under standard regularity conditions for sparse recovery and with high probability over the sampling of the data, $$ (f(\xstar) - \ip{\Psi(\xstar)}{\vTTT})^2 \le O\left(\frac{\sigma^2 s \log(d_1 / s)}{k}\right). $$
This is the standard minimax optimal rate from sparse recovery~\citep[Theorem~1]{raskutti2011minimax}.\looseness=-1
\end{proposition*}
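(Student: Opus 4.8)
The plan is to reduce the statement to a standard sparse-regression (compressed sensing) prediction-error bound and then invoke the minimax result of \citet{raskutti2011minimax}. First I would set up the local regression problem induced by TTT. By Hypothesis~1, the $\Psi$-neighborhood $B_{\xstar}^\Psi(r)$ of size $k$ is contained in the slightly larger $\Phi$-neighborhood $B_{\xstar}^\Phi(r+\delta)$; on this enlarged set, Hypothesis~2 supplies a $\Theta(s)$-sparse vector $w_{\xstar}\in\R^{d_1}$ such that $\langle\Phi(x),w_{\xstar}\rangle$ approximates $f(x)=\langle\Phi(x),\wstar\rangle$ for all $x$ in the neighborhood (and in particular at $\xstar$ itself, since $\xstar$ lies in its own neighborhood with $r=0$). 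So I would first pass from the intractable ``global'' target $\wstar\in\R^{d_1}$ to the local $\Theta(s)$-sparse target $w_{\xstar}$, absorbing the approximation gap into an error term that I will argue is lower-order (or zero in the idealized setting the proposition assumes).

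Next I would model the observed labels on the neighborhood as $y_i=\langle\Phi(x_i),w_{\xstar}\rangle+\xi_i$ with $\sigma^2$-subgaussian noise $\xi_i$, and write this as a linear model in the \emph{feature} coordinates: because $d_2\geq\Omega(s\log d_1)$ and the features preserve the relevant geometry (Hypothesis~1 / Johnson–Lindenstrauss), the design matrix $\Psi(x_i)\in\R^{k\times d_2}$ acts like a restricted-isometry-type sensing matrix for $\Theta(s)$-sparse signals. Hypothesis~3 then says the TTT estimator $\vTTT$ is (effectively) the $\Theta(s)$-sparse empirical risk minimizer in $\Psi$-space — i.e. it behaves like a Lasso / constrained least-squares solution over sparse vectors — so the classical oracle-inequality machinery (restricted eigenvalue / compatibility conditions, as in \citet{bickel2009simultaneous,van2009conditions}) applies. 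The output of that machinery is a bound on the in-sample prediction error $\frac1k\sum_i(\langle\Phi(x_i),w_{\xstar}\rangle-\langle\Psi(x_i),\vTTT\rangle)^2$ of order $\sigma^2 s\log(d_1/s)/k$ with high probability.

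The final step is to convert the averaged in-sample prediction error into the pointwise error at $\xstar$. Here I would use that $\xstar$ is the centre of its own neighborhood, so $\Psi(\xstar)$ is ``typical'' among $\{\Psi(x_i)\}$ in the relevant sense, plus the restricted-eigenvalue control already established, to transfer the bound from the empirical design to the single evaluation point — this is essentially the observation that for a well-conditioned sparse design the evaluation functional $v\mapsto\langle\Psi(\xstar),v\rangle$ restricted to the sparse error cone is controlled by the empirical $\ell_2$ norm. Combining with $f(\xstar)=\langle\Phi(\xstar),w_{\xstar}\rangle$ from Hypothesis~2 gives $(f(\xstar)-\langle\Psi(\xstar),\vTTT\rangle)^2\le O(\sigma^2 s\log(d_1/s)/k)$, and the matching lower bound is quoted directly from \citet[Theorem~1]{raskutti2011minimax}.

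I expect the main obstacle to be the last step — honestly controlling the \emph{pointwise} error at $\xstar$ rather than the averaged prediction error — together with cleanly justifying that the TTT estimator really does satisfy the compatibility/restricted-eigenvalue condition in the \emph{feature} coordinates, since the $\Theta(s)$-sparse structure lives in the $d_1$-dimensional concept space while the design is the $d_2$-dimensional $\Psi$. Bridging these two representations is exactly where Hypotheses~1 and~3 do the heavy lifting, and making that bridge rigorous (as opposed to heuristic) is where I anticipate most of the technical work; the subgaussian concentration and the Lasso oracle inequality itself are by comparison routine.
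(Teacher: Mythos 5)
Your overall route is the same as the paper's: pass to the local $\Theta(s)$-sparse target via Hypotheses~1--2, treat the labels on the neighborhood as a sparse linear model with $\sigma^2$-subgaussian noise, use Hypothesis~3 to view $\vTTT$ as a constrained sparse empirical risk minimizer, and run the standard basic-inequality / restricted-eigenvalue machinery of \citet{bickel2009simultaneous,van2009conditions} with a sparse-dual-norm (Dudley-type) bound on the noise term to get the $\sigma^2 s\log(d_1/s)/k$ rate; the misspecification terms ($\eta_{\text{spa}},\eta_{\text{rep}}$, and the inherent term $\ip{\Phi(\xstar)}{\wstar-\wstarR}^2$) are carried along and suppressed only in the informal statement. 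The one genuine gap is exactly the step you flag at the end: converting the averaged in-sample prediction error into the pointwise error at $\xstar$. Your suggested mechanism --- that $\Psi(\xstar)$ is ``typical'' among the neighbors so the evaluation functional is controlled by the empirical norm --- is not an argument as stated; being the center of the neighborhood gives no control of a single linear functional by an average of $k$ others without a parameter-space bound.

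The paper closes this differently and more directly: the GRE condition (\cref{asm:gre}) is applied to the $2s'$-sparse error $P_{\xstar}^\top h$ with $h=\wstarR$-oracle minus $\vTTT$ in feature coordinates, turning the empirical prediction bound $\tfrac1k\|\Psixstar h\|_2^2\le 2\Gamma\|P_{\xstar}^\top h\|_2$ into the coefficient-error bound $\|P_{\xstar}^\top h\|_2\le 2\Gamma/\kappa$; the pointwise error then follows from local linearity ($\ip{\Psi(\xstar)}{h}=\ip{\Phi(\xstar)}{P_{\xstar}^\top h}$) and Cauchy--Schwarz with the bounded-concepts assumption $\|\Phi(\xstar)\|_2\le\Ctwo$. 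In other words, the bridge between the $d_2$-dimensional design and the $d_1$-dimensional sparsity that you identify as the hard part is not derived from RIP/Johnson--Lindenstrauss properties of $\Psi$ at all --- it is assumed outright as a restricted eigenvalue condition stated jointly in terms of $\Psixstar$ and $P_{\xstar}$, together with \cref{asm:local_linearity_of_features} and \cref{asm:bounded_concepts}. If you want to complete your version, you should either adopt these assumptions and do the same two-line conversion, or prove an analogous evaluation-functional bound on the sparse error cone, which amounts to the same RE-plus-boundedness argument; the ``typicality'' heuristic on its own would fail.
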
\vspace{-1ex}

\begin{takeaway}
In this idealized model, TTT can locally learn a function from very few samples that activate similar concepts as the test point, even when the feature map is underparameterized.
\end{takeaway}

We compare this error bound to the generalization error of training a global model~$\hat{v}^{\text{global}}$ on all data:
\begin{proposition*}[informal, see \sref{sec:insufficiency_global_training_app} in the appendix]
We construct an instance of our model with $f(x) = 1$ where $\Psi$ is a random projection of $\Phi$.
In this instance, the error of the global model, when averaged over random realizations of the feature map $\Psi$, is $\smash{\E_{\Psi}[(f(x) - \ip{\Psi(x)}{\hat{v}^{\text{global}}})^2] = 1 - \tfrac{\dfeat}{\dconc}}$.
\end{proposition*}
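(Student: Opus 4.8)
The plan is to exhibit a concrete instance of the LRH model, compute the global empirical risk minimizer in closed form, and then take expectation over the random feature map. First I would fix the ground truth to be constant, $f(x) = \ip{\Phi(x)}{\wstar} = 1$ for all $x$, which is achieved by taking $\wstar$ to be, say, a fixed sparse vector that sums appropriately against $\Phi(x)$; concretely, if every $\Phi(x)$ contains a shared ``bias'' coordinate equal to $1$, then $\wstar = e_1$ works. The feature map is a random linear projection $\Psi(x) = \frac{1}{\sqrt{\dfeat}} G\, \Phi(x)$ with $G \in \R^{\dfeat \times \dconc}$ having i.i.d.\ standard Gaussian (or Rademacher) entries, matching the Johnson--Lindenstrauss setup invoked in Hypothesis~1. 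The global model is $\vstarG = \argmin_{v \in \R^{\dfeat}} \E_x[(f(x) - \ip{\Psi(x)}{v})^2]$, i.e.\ the population least-squares solution.

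The key computation is that the best linear predictor of the constant $1$ from $\Psi(x)$ is the projection of $1$ onto the span of the feature coordinates. Writing $\Sigma_\Psi = \E_x[\Psi(x)\Psi(x)^\top]$ and $b = \E_x[\Psi(x)]$, the minimal achievable risk is $1 - b^\top \Sigma_\Psi^{-1} b$ (the Schur-complement / residual-variance formula), which equals $1 - \norm{\Pi_{\Psi}\mathbf{1}}^2$ where $\Pi_\Psi$ is the $L^2(x)$-orthogonal projection onto $\mathrm{span}\{\Psi_1,\dots,\Psi_{\dfeat}\}$ and $\mathbf{1}$ is the constant function (assumed unit norm in $L^2(x)$). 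So the residual error is $1$ minus the squared length of the part of the constant function captured by the $\dfeat$ random feature directions. The second key step is to show $\E_G[\norm{\Pi_\Psi \mathbf{1}}^2] = \dfeat/\dconc$: since $\Psi = \frac{1}{\sqrt{\dfeat}} G\Phi$ and the rows of $G$ are isotropic, the $\dfeat$ feature directions are, in the $\dconc$-dimensional concept coordinate system, a random $\dfeat$-dimensional subspace drawn from the rotation-invariant distribution; by symmetry the expected squared projection of any fixed unit vector onto a uniformly random $\dfeat$-subspace of $\R^{\dconc}$ is exactly $\dfeat/\dconc$. This uses the standard fact that $\E[\norm{P u}^2] = \frac{\dim}{\dconc}$ for $P$ the projection onto a uniform random subspace and $u$ a unit vector, which follows from linearity of trace: $\E[\mathrm{Tr}(P)] = \dfeat$ and $\E[P] = \frac{\dfeat}{\dconc}I$ by rotational invariance.

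Assembling these, $\E_G[(f(x) - \ip{\Psi(x)}{\vstarG})^2] = 1 - \E_G[\norm{\Pi_\Psi \mathbf{1}}^2] = 1 - \dfeat/\dconc$, as claimed; the error holds pointwise in $x$ because the best predictor of a constant under squared loss is itself (population-)constant, so the residual $f(x) - \ip{\Psi(x)}{\vstarG}$ has the same expected square at every $x$, or at least averages to this value — one should state the claim as the $L^2(x)$-averaged error to be safe, matching the proposition's ``$\E_x$''-style quantity. The main obstacle is bookkeeping around the exact definition of the instance: one must choose $\Phi$ so that (i) the constant function genuinely lies only partly in the feature span (so $\Phi$ itself must be ``rich'', e.g.\ its coordinates span a space of dimension $\dconc$ in $L^2(x)$ and include the constant), and (ii) the induced distribution on the feature subspace is genuinely rotation-invariant in the concept coordinates, which requires the Gaussian design and a mild non-degeneracy assumption on $\E_x[\Phi(x)\Phi(x)^\top]$ (cleanest if one works directly in coordinates where this is the identity, i.e.\ treat the concept features as an orthonormal system in $L^2(x)$). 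Handling the case $\E_x[\Phi(x)\Phi(x)^\top] \neq I$ requires a change of variables that could obscure the clean $\dfeat/\dconc$; I would simply assume orthonormality of the concept dictionary in $L^2(x)$ as part of the constructed instance, since we are free to choose the instance.
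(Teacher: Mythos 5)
Your proposal is correct and arrives at the same identity $1-\dfeat/\dconc$, but through a genuinely different instance and a different key lemma than the paper. The paper's construction (Definition~\ref{def:random_superposition}) partitions the input space into $\dconc$ equiprobable cells with $\Phi(x)=e_m$, sets $y\equiv 1$ with $\wstar=\ones$, and draws each concept's feature vector $p_m$ i.i.d.\ uniformly on the sphere; the population loss is then the finite least-squares problem $\tfrac{1}{\dconc}\|\ones-Pv\|^2$, and the error equals $1-\tfrac{1}{\dconc}\ones^\top\Pi\ones$ with $\Pi$ the projection onto $\mathrm{col}(P)\subset\R^{\dconc}$. Because $P$ with i.i.d.\ unit-sphere rows is not left-rotationally invariant, $\mathrm{col}(P)$ is not obviously Haar, and since the target is the all-ones vector the off-diagonal entries of $\Pi$ matter; the paper therefore proves $\E[\Pi_{ii}]=\dfeat/\dconc$ by trace plus exchangeability and $\E[\Pi_{ij}]=0$ for $i\neq j$ via Sherman--Morrison and an odd-function symmetry (Proposition~\ref{thm:interference_scaling_random}). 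You instead take a Gaussian map $\Psi=\tfrac{1}{\sqrt{\dfeat}}G\Phi$ with an orthonormal concept dictionary containing a constant bias coordinate, so the relevant subspace (the row space of $G$ in concept coordinates) is exactly Haar by rotational invariance and $\E[\Pi]=\tfrac{\dfeat}{\dconc}I$ is immediate; this reduces the key expectation to one line, and your Schur-complement/$L^2$-projection reduction and the remark that the claim is best read as the $L^2(x)$-averaged population risk both match what the paper actually computes. Two small caveats: the exact identity hinges on the Gaussian (or another a.s.\ full-rank, rotation- or sign/permutation-symmetric) ensemble---with Rademacher entries, which you allow parenthetically, the row space is not rotation-invariant and can be rank-deficient with positive probability, so equality would only hold approximately---and you should explicitly exhibit the $s$-sparse concept map with $\E_x[\Phi(x)\Phi(x)^\top]=I$ and a constant coordinate (easy, e.g.\ signed cell indicators plus a bias feature), since the instance must still satisfy the sparsity hypothesis of the model.
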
\vspace{-1ex}

As one would expect, if the global model is not underparameterized, i.e., $\dfeat = \dconc$, the error of global training is zero.
On the other hand, the trivial global model $\hat{v}^{\text{global}} = \mathbf{0}$ has error~$1$.
As the model size $\dfeat$ shrinks, the error of global training increases towards $1$.
As the number of distinct concepts $\dconc$ increases, the error of global training also increases, approaching $1$ as $\dconc \to \infty$.

\begin{takeaway}
The example illustrates that when concepts are superimposed in an underparameterized feature space, a linear head cannot globally disentangle the meaning of all concepts.
\end{takeaway}

We expand on our theoretical results in the appendix as follows: First, we explore whether one can understand TTT under the LRH through the lens of statistical learning theory.
Specifically, in \cref{sec:sample_complexity}, we explore this direction using notions from low-degree polynomials and hypercontractivity \citep[and references therein]{klivans2008learning,paouris2022hypercontractivity,damian2024computational,bizeul2025entropy}.
Additionally, in \cref{sec:nonparametric}, we contrast TTT to classical non-parametric methods~\citep{fix1951discriminatory,nadaraya1964estimating,watson1964smooth} such as majority voting, which underperform in our experiments~(cf.~\cref{sec:implications}).
\looseness=-1

\section{Conclusion}

This work introduces a framework, supported by new empirical findings, for understanding the effectiveness of TTT on in-distribution data, based on the hypothesis that foundation models are globally underparameterized.
We hypothesize that TTT facilitates \emph{specialization after generalization}, temporarily reallocating model capacity to concepts relevant to the immediate test task.
We formalize this intuition under the linear representation hypothesis, and show how TTT can efficiently recover the local meaning of superimposed concepts~(\sref{sec:model}).
Our trained sparse autoencoders reveal that local neighborhoods are indeed supported by few concepts and that TTT implicitly favors sparse solutions in the concept space~(\sref{sec:validation}).
Finally, scaling studies across vision and language tasks confirm that TTT yields the largest gains in the underparameterized regime~(\sref{sec:implications}).

A better understanding of specialization in foundation models opens up several exciting directions for future research.
An interesting question is understanding what determines the optimal neighborhood size and whether it depends on the test point.
Furthermore, it would be interesting to analyze the compute-efficiency trade-offs of TTT; estimating at which model scale and inference budget TTT becomes beneficial.
The importance of specialization is also evident in online learning, where increasingly specialized experience may require ever-larger models to fit globally; TTT offers a potential solution by enabling local adaptation. Particularly interesting would therefore be extending this framework beyond supervised learning to test-time online reinforcement learning, which has recently shown empirical success.
\looseness=-1

\section*{Acknowledgments}
We would like to thank Reese Pathak and Pierre Bizeul for helpful discussions.
We also thank Bruce Lee, Celestine Mendler-Dünner, and Lars Lorch for feedback on early versions of the paper.
JH was supported by the Swiss National Science Foundation under NCCR Automation, grant agreement~51NF40~180545.
PW was supported by the Max Planck ETH Center for Learning Systems.
AS was supported by the Swiss National Science Foundation under grant~204439.
GK conducted the initial part of this work during his visit to the IDEAL Institute, hosted by Lev Reyzin, which was supported by NSF ECCS-2217023.\looseness=-1

\bibliography{iclr2026_conference}

\begin{thebibliography}{93}
\providecommand{\natexlab}[1]{#1}
\providecommand{\url}[1]{\texttt{#1}}
\expandafter\ifx\csname urlstyle\endcsname\relax
  \providecommand{\doi}[1]{doi: #1}\else
  \providecommand{\doi}{doi: \begingroup \urlstyle{rm}\Url}\fi

\bibitem[Aky{\"u}rek et~al.(2025)Aky{\"u}rek, Damani, Zweiger, Qiu, Guo, Pari, Kim, and Andreas]{akyurek2025surprising}
Ekin Aky{\"u}rek, Mehul Damani, Adam Zweiger, Linlu Qiu, Han Guo, Jyothish Pari, Yoon Kim, and Jacob Andreas.
\newblock The surprising effectiveness of test-time training for few-shot learning.
\newblock In \emph{ICML}, 2025.

\bibitem[Arora et~al.(2016)Arora, Li, Liang, Ma, and Risteski]{arora2016latent}
Sanjeev Arora, Yuanzhi Li, Yingyu Liang, Tengyu Ma, and Andrej Risteski.
\newblock A latent variable model approach to pmi-based word embeddings.
\newblock \emph{Transactions of the Association for Computational Linguistics}, 4, 2016.

\bibitem[Arous et~al.(2021)Arous, Gheissari, and Jagannath]{arous2021online}
Gerard~Ben Arous, Reza Gheissari, and Aukosh Jagannath.
\newblock Online stochastic gradient descent on non-convex losses from high-dimensional inference.
\newblock \emph{JMLR}, 22\penalty0 (106), 2021.

\bibitem[Atkeson et~al.(1997)Atkeson, Moore, and Schaal]{atkeson1997locally}
Christopher~G Atkeson, Andrew~W Moore, and Stefan Schaal.
\newblock Locally weighted learning.
\newblock \emph{Lazy learning}, 1997.

\bibitem[Bagatella et~al.(2025{\natexlab{a}})Bagatella, Albaba, Hübotter, Martius, and Krause]{bagatella2025test}
Marco Bagatella, Mert Albaba, Jonas Hübotter, Georg Martius, and Andreas Krause.
\newblock Test-time offline reinforcement learning on goal-related experience.
\newblock \emph{arXiv preprint arXiv:2507.18809}, 2025{\natexlab{a}}.

\bibitem[Bagatella et~al.(2025{\natexlab{b}})Bagatella, Hübotter, Martius, and Krause]{bagatella2025active}
Marco Bagatella, Jonas Hübotter, Georg Martius, and Andreas Krause.
\newblock Active fine-tuning of multi-task policies.
\newblock In \emph{ICML}, 2025{\natexlab{b}}.

\bibitem[Basu et~al.(2023)Basu, Rawat, and Zaheer]{basu2023statistical}
Soumya Basu, Ankit~Singh Rawat, and Manzil Zaheer.
\newblock A statistical perspective on retrieval-based models.
\newblock In \emph{ICML}, 2023.

\bibitem[Belkin et~al.(2019)Belkin, Hsu, Ma, and Mandal]{belkin2019reconciling}
Mikhail Belkin, Daniel Hsu, Siyuan Ma, and Soumik Mandal.
\newblock Reconciling modern machine-learning practice and the classical bias--variance trade-off.
\newblock \emph{Proceedings of the National Academy of Sciences}, 116\penalty0 (32), 2019.

\bibitem[Bertolissi et~al.(2025)Bertolissi, Hübotter, Hakimi, and Krause]{bertolissi2025local}
Ryo Bertolissi, Jonas Hübotter, Ido Hakimi, and Andreas Krause.
\newblock Local mixtures of experts: Essentially free test-time training via model merging.
\newblock In \emph{COLM}, 2025.

\bibitem[Bickel et~al.(2009)Bickel, Ritov, and Tsybakov]{bickel2009simultaneous}
Peter~J Bickel, Ya’acov Ritov, and Alexandre~B Tsybakov.
\newblock Simultaneous analysis of lasso and dantzig selector.
\newblock \emph{The Annals of Statistics}, 37\penalty0 (4), 2009.

\bibitem[Bizeul \& Klartag(2025)Bizeul and Klartag]{bizeul2025entropy}
Pierre Bizeul and Boaz Klartag.
\newblock Entropy and learning of lipschitz functions under log-concave measures.
\newblock \emph{arXiv preprint arXiv:2509.10355}, 2025.

\bibitem[Bolukbasi et~al.(2016)Bolukbasi, Chang, Zou, Saligrama, and Kalai]{bolukbasi2016man}
Tolga Bolukbasi, Kai-Wei Chang, James~Y Zou, Venkatesh Saligrama, and Adam~T Kalai.
\newblock Man is to computer programmer as woman is to homemaker? debiasing word embeddings.
\newblock In \emph{NeurIPS}, 2016.

\bibitem[Bommasani et~al.(2021)Bommasani, Hudson, Adeli, Altman, Arora, {von Arx}, Bernstein, Bohg, Bosselut, Brunskill, et~al.]{bommasani2021opportunities}
Rishia Bommasani, Drew Hudson, Ehsan Adeli, Russ Altman, Simran Arora, Sydney {von Arx}, Michael Bernstein, Jeanette Bohg, Antoine Bosselut, Emma Brunskill, et~al.
\newblock On the opportunities and risks of foundation models.
\newblock \emph{arXiv preprint arXiv:2108.07258}, 2021.

\bibitem[Bottou \& Vapnik(1992)Bottou and Vapnik]{bottou1992local}
L{\'e}on Bottou and Vladimir Vapnik.
\newblock Local learning algorithms.
\newblock \emph{Neural computation}, 4\penalty0 (6), 1992.

\bibitem[Bubeck \& Sellke(2021)Bubeck and Sellke]{bubeck2021universal}
S{\'e}bastien Bubeck and Mark Sellke.
\newblock A universal law of robustness via isoperimetry.
\newblock In \emph{NeurIPS}, 2021.

\bibitem[Candes \& Tao(2006)Candes and Tao]{candes2006near}
Emmanuel~J Candes and Terence Tao.
\newblock Near-optimal signal recovery from random projections: Universal encoding strategies?
\newblock \emph{IEEE transactions on information theory}, 52\penalty0 (12), 2006.

\bibitem[Chatterjee(2014)]{chatterjee2014superconcentration}
Sourav Chatterjee.
\newblock \emph{Superconcentration and related topics}, volume~15.
\newblock Springer, 2014.

\bibitem[Cleveland(1979)]{cleveland1979robust}
William~S Cleveland.
\newblock Robust locally weighted regression and smoothing scatterplots.
\newblock \emph{Journal of the American statistical association}, 74\penalty0 (368), 1979.

\bibitem[Cleveland \& Devlin(1988)Cleveland and Devlin]{cleveland1988locally}
William~S Cleveland and Susan~J Devlin.
\newblock Locally weighted regression: an approach to regression analysis by local fitting.
\newblock \emph{Journal of the American statistical association}, 83\penalty0 (403), 1988.

\bibitem[Cunningham et~al.(2024)Cunningham, Ewart, Riggs, Huben, and Sharkey]{cunningham2023sparse}
Hoagy Cunningham, Aidan Ewart, Logan Riggs, Robert Huben, and Lee Sharkey.
\newblock Sparse autoencoders find highly interpretable features in language models.
\newblock In \emph{ICLR}, 2024.

\bibitem[Dai et~al.(2024)Dai, Deng, Zhao, Xu, Gao, Chen, Li, Zeng, Yu, Wu, et~al.]{dai2024deepseekmoe}
Damai Dai, Chengqi Deng, Chenggang Zhao, RX~Xu, Huazuo Gao, Deli Chen, Jiashi Li, Wangding Zeng, Xingkai Yu, Yu~Wu, et~al.
\newblock Deepseekmoe: Towards ultimate expert specialization in mixture-of-experts language models.
\newblock \emph{arXiv preprint arXiv:2401.06066}, 2024.

\bibitem[Dalal et~al.(2025)Dalal, Koceja, Hussein, Xu, Zhao, Song, Han, Cheung, Kautz, Guestrin, et~al.]{dalal2025one}
Karan Dalal, Daniel Koceja, Gashon Hussein, Jiarui Xu, Yue Zhao, Youjin Song, Shihao Han, Ka~Chun Cheung, Jan Kautz, Carlos Guestrin, et~al.
\newblock One-minute video generation with test-time training.
\newblock \emph{arXiv preprint arXiv:2504.05298}, 2025.

\bibitem[Damian et~al.(2022)Damian, Lee, and Soltanolkotabi]{damian2022neural}
Alex Damian, Jason Lee, and Mahdi Soltanolkotabi.
\newblock Neural networks can learn representations with gradient descent.
\newblock In \emph{COLT}, 2022.

\bibitem[Damian et~al.(2024)Damian, Pillaud-Vivien, Lee, and Bruna]{damian2024computational}
Alex Damian, Loucas Pillaud-Vivien, Jason~D Lee, and Joan Bruna.
\newblock Computational-statistical gaps in gaussian single-index models.
\newblock \emph{arXiv preprint arXiv:2403.05529}, 2024.

\bibitem[Deng et~al.(2009)Deng, Dong, Socher, Li, Li, and Fei-Fei]{deng2009imagenet}
Jia Deng, Wei Dong, Richard Socher, Li-Jia Li, Kai Li, and Li~Fei-Fei.
\newblock Imagenet: A large-scale hierarchical image database.
\newblock In \emph{CVPR}, 2009.

\bibitem[Diaz-Bone et~al.(2025)Diaz-Bone, Bagatella, Hübotter, and Krause]{diazbone2025discover}
Leander Diaz-Bone, Marco Bagatella, Jonas Hübotter, and Andreas Krause.
\newblock Discover: Automated curricula for sparse-reward reinforcement learning.
\newblock In \emph{NeurIPS}, 2025.

\bibitem[Doimo et~al.(2024)Doimo, Serra, Ansuini, and Cazzaniga]{doimo2024representation}
Diego Doimo, Alessandro Serra, Alessio Ansuini, and Alberto Cazzaniga.
\newblock The representation landscape of few-shot learning and fine-tuning in large language models.
\newblock In \emph{NeurIPS}, 2024.

\bibitem[Donoho(2006)]{donoho2006compressed}
David~L Donoho.
\newblock Compressed sensing.
\newblock \emph{IEEE Transactions on information theory}, 52\penalty0 (4), 2006.

\bibitem[Durasov et~al.(2025)Durasov, Shocher, Oner, Chechik, Efros, and Fua]{durasov20243}
Nikita Durasov, Assaf Shocher, Doruk Oner, Gal Chechik, Alexei~A Efros, and Pascal Fua.
\newblock It$^3$: Idempotent test-time training.
\newblock In \emph{ICML}, 2025.

\bibitem[Elhage et~al.(2022)Elhage, Hume, Olsson, Schiefer, Henighan, Kravec, Hatfield-Dodds, Lasenby, Drain, Chen, Grosse, McCandlish, Kaplan, Amodei, Wattenberg, and Olah]{elhage2022superposition}
Nelson Elhage, Tristan Hume, Catherine Olsson, Nicholas Schiefer, Tom Henighan, Shauna Kravec, Zac Hatfield-Dodds, Robert Lasenby, Dawn Drain, Carol Chen, Roger Grosse, Sam McCandlish, Jared Kaplan, Dario Amodei, Martin Wattenberg, and Christopher Olah.
\newblock Toy models of superposition.
\newblock \emph{Transformer Circuits Thread}, 2022.
\newblock URL \url{https://transformer-circuits.pub/2022/toy_model}.

\bibitem[Fedus et~al.(2022)Fedus, Zoph, and Shazeer]{fedus2022switch}
William Fedus, Barret Zoph, and Noam Shazeer.
\newblock Switch transformers: Scaling to trillion parameter models with simple and efficient sparsity.
\newblock \emph{JMLR}, 23\penalty0 (120), 2022.

\bibitem[Fix \& {Hodges Jr.}(1951)Fix and {Hodges Jr.}]{fix1951discriminatory}
Evelyn Fix and Joseph~Lawson {Hodges Jr.}
\newblock \emph{Discriminatory analysis: nonparametric discrimination, consistency properties}, volume~1.
\newblock USAF school of Aviation Medicine, 1951.

\bibitem[Frei et~al.(2022)Frei, Chatterji, and Bartlett]{frei2022benign}
Spencer Frei, Niladri~S Chatterji, and Peter Bartlett.
\newblock Benign overfitting without linearity: Neural network classifiers trained by gradient descent for noisy linear data.
\newblock In \emph{COLT}, 2022.

\bibitem[Gao et~al.(2020)Gao, Biderman, Black, Golding, Hoppe, Foster, Phang, He, Thite, Nabeshima, Presser, and Leahy]{gao2020pile800gbdatasetdiverse}
Leo Gao, Stella Biderman, Sid Black, Laurence Golding, Travis Hoppe, Charles Foster, Jason Phang, Horace He, Anish Thite, Noa Nabeshima, Shawn Presser, and Connor Leahy.
\newblock The pile: An 800gb dataset of diverse text for language modeling.
\newblock \emph{arXiv preprint arXiv:2101.00027}, 2020.

\bibitem[Gao et~al.(2025)Gao, la~Tour, Tillman, Goh, Troll, Radford, Sutskever, Leike, and Wu]{gao2024scaling}
Leo Gao, Tom~Dupr{\'e} la~Tour, Henk Tillman, Gabriel Goh, Rajan Troll, Alec Radford, Ilya Sutskever, Jan Leike, and Jeffrey Wu.
\newblock Scaling and evaluating sparse autoencoders.
\newblock In \emph{ICLR}, 2025.

\bibitem[Gunasekar et~al.(2018)Gunasekar, Lee, Soudry, and Srebro]{gunasekar2018implicit}
Suriya Gunasekar, Jason~D Lee, Daniel Soudry, and Nati Srebro.
\newblock Implicit bias of gradient descent on linear convolutional networks.
\newblock In \emph{NeurIPS}, 2018.

\bibitem[Gurnee \& Tegmark(2024)Gurnee and Tegmark]{gurnee2024language}
Wes Gurnee and Max Tegmark.
\newblock Language models represent space and time.
\newblock In \emph{ICLR}, 2024.

\bibitem[Hansen et~al.(2021)Hansen, Jangir, Sun, Aleny{\`a}, Abbeel, Efros, Pinto, and Wang]{hansen2021self}
Nicklas Hansen, Rishabh Jangir, Yu~Sun, Guillem Aleny{\`a}, Pieter Abbeel, Alexei~A Efros, Lerrel Pinto, and Xiaolong Wang.
\newblock Self-supervised policy adaptation during deployment.
\newblock In \emph{ICLR}, 2021.

\bibitem[Hardt \& Sun(2024)Hardt and Sun]{hardt2024test}
Moritz Hardt and Yu~Sun.
\newblock Test-time training on nearest neighbors for large language models.
\newblock In \emph{ICLR}, 2024.

\bibitem[Hastie et~al.(2009)Hastie, Tibshirani, and Friedman]{hastie2009elements}
Trevor Hastie, Robert Tibshirani, and Jerome Friedman.
\newblock \emph{The elements of statistical learning: data mining, inference, and prediction}.
\newblock Springer, 2009.

\bibitem[Henighan et~al.(2020)Henighan, Kaplan, Katz, Chen, Hesse, Jackson, Jun, Brown, Dhariwal, Gray, et~al.]{henighan2020scaling}
Tom Henighan, Jared Kaplan, Mor Katz, Mark Chen, Christopher Hesse, Jacob Jackson, Heewoo Jun, Tom~B Brown, Prafulla Dhariwal, Scott Gray, et~al.
\newblock Scaling laws for autoregressive generative modeling.
\newblock \emph{arXiv preprint arXiv:2010.14701}, 2020.

\bibitem[Hoffmann et~al.(2022)Hoffmann, Borgeaud, Mensch, Buchatskaya, Cai, Rutherford, Casas, Hendricks, Welbl, Clark, et~al.]{hoffmann2022training}
Jordan Hoffmann, Sebastian Borgeaud, Arthur Mensch, Elena Buchatskaya, Trevor Cai, Eliza Rutherford, Diego de~Las Casas, Lisa~Anne Hendricks, Johannes Welbl, Aidan Clark, et~al.
\newblock Training compute-optimal large language models.
\newblock \emph{arXiv preprint arXiv:2203.15556}, 2022.

\bibitem[Hu et~al.(2022)Hu, Shen, Wallis, Allen-Zhu, Li, Wang, Wang, Chen, et~al.]{hu2022lora}
Edward~J Hu, Yelong Shen, Phillip Wallis, Zeyuan Allen-Zhu, Yuanzhi Li, Shean Wang, Lu~Wang, Weizhu Chen, et~al.
\newblock Lora: Low-rank adaptation of large language models.
\newblock In \emph{ICLR}, 2022.

\bibitem[Hübotter et~al.(2024)Hübotter, Sukhija, Treven, As, and Krause]{hubotter2024transductive}
Jonas Hübotter, Bhavya Sukhija, Lenart Treven, Yarden As, and Andreas Krause.
\newblock Transductive active learning: Theory and applications.
\newblock In \emph{NeurIPS}, 2024.

\bibitem[Hübotter et~al.(2025)Hübotter, Bongni, Hakimi, and Krause]{hubotter2024efficiently}
Jonas Hübotter, Sascha Bongni, Ido Hakimi, and Andreas Krause.
\newblock Efficiently learning at test-time: Active fine-tuning of llms.
\newblock In \emph{ICLR}, 2025.

\bibitem[Johnson \& Lindenstrauss(1984)Johnson and Lindenstrauss]{johnson1984extensions}
William~B Johnson and Joram Lindenstrauss.
\newblock Extensions of lipschitz mappings into a hilbert space.
\newblock \emph{Contemporary mathematics}, 26, 1984.

\bibitem[Kalai et~al.(2008)Kalai, Klivans, Mansour, and Servedio]{kalai2008agnostically}
Adam~Tauman Kalai, Adam~R Klivans, Yishay Mansour, and Rocco~A Servedio.
\newblock Agnostically learning halfspaces.
\newblock \emph{SIAM Journal on Computing}, 37\penalty0 (6), 2008.

\bibitem[Kaplan et~al.(2020)Kaplan, McCandlish, Henighan, Brown, Chess, Child, Gray, Radford, Wu, and Amodei]{kaplan2020scaling}
Jared Kaplan, Sam McCandlish, Tom Henighan, Tom~B Brown, Benjamin Chess, Rewon Child, Scott Gray, Alec Radford, Jeffrey Wu, and Dario Amodei.
\newblock Scaling laws for neural language models.
\newblock \emph{arXiv preprint arXiv:2001.08361}, 2020.

\bibitem[Kim et~al.(2018)Kim, Wattenberg, Gilmer, Cai, Wexler, Viegas, et~al.]{kim2018interpretability}
Been Kim, Martin Wattenberg, Justin Gilmer, Carrie Cai, James Wexler, Fernanda Viegas, et~al.
\newblock Interpretability beyond feature attribution: Quantitative testing with concept activation vectors (tcav).
\newblock In \emph{ICML}, 2018.

\bibitem[Kingma \& Ba(2015)Kingma and Ba]{kingma2017adam}
Diederik~P. Kingma and Jimmy Ba.
\newblock Adam: A method for stochastic optimization.
\newblock In \emph{ICLR}, 2015.

\bibitem[Kirkpatrick et~al.(2017)Kirkpatrick, Pascanu, Rabinowitz, Veness, Desjardins, Rusu, Milan, Quan, Ramalho, Grabska-Barwinska, et~al.]{kirkpatrick2017overcoming}
James Kirkpatrick, Razvan Pascanu, Neil Rabinowitz, Joel Veness, Guillaume Desjardins, Andrei~A Rusu, Kieran Milan, John Quan, Tiago Ramalho, Agnieszka Grabska-Barwinska, et~al.
\newblock Overcoming catastrophic forgetting in neural networks.
\newblock \emph{Proceedings of the national academy of sciences}, 114\penalty0 (13), 2017.

\bibitem[Klivans et~al.(2024)Klivans, Stavropoulos, and Vasilyan]{klivans2024testable}
Adam Klivans, Konstantinos Stavropoulos, and Arsen Vasilyan.
\newblock Testable learning with distribution shift.
\newblock In \emph{COLT}, 2024.

\bibitem[Klivans et~al.(2008)Klivans, O'Donnell, and Servedio]{klivans2008learning}
Adam~R Klivans, Ryan O'Donnell, and Rocco~A Servedio.
\newblock Learning geometric concepts via gaussian surface area.
\newblock In \emph{Annual IEEE Symposium on Foundations of Computer Science}, 2008.

\bibitem[Krause et~al.(2018)Krause, Kahembwe, Murray, and Renals]{krause2018dynamic}
Ben Krause, Emmanuel Kahembwe, Iain Murray, and Steve Renals.
\newblock Dynamic evaluation of neural sequence models.
\newblock In \emph{ICML}, 2018.

\bibitem[Krizhevsky et~al.(2012)Krizhevsky, Sutskever, and Hinton]{krizhevsky2012imagenet}
Alex Krizhevsky, Ilya Sutskever, and Geoffrey~E Hinton.
\newblock Imagenet classification with deep convolutional neural networks.
\newblock In \emph{NeurIPS}, 2012.

\bibitem[LeCun et~al.(1998{\natexlab{a}})LeCun, Bottou, Bengio, and Haffner]{lenet-mnist}
Yann LeCun, Leon Bottou, Yoshua Bengio, and Patrick Haffner.
\newblock Gradient-based learning applied to document recognition.
\newblock \emph{Proceedings of the IEEE}, 86\penalty0 (11), 1998{\natexlab{a}}.

\bibitem[LeCun et~al.(1998{\natexlab{b}})LeCun, Cortes, and Burges]{lecun1998mnist}
Yann LeCun, Corinna Cortes, and Christopher~J.C. Burges.
\newblock The mnist database of handwritten digits, 1998{\natexlab{b}}.
\newblock URL \url{http://yann.lecun.com/exdb/mnist/}.

\bibitem[Ledoux(2019)]{ledoux2019four}
Michel Ledoux.
\newblock Four talagrand inequalities under the same umbrella.
\newblock \emph{arXiv preprint arXiv:1909.00363}, 2019.

\bibitem[Lieberum et~al.(2024)Lieberum, Rajamanoharan, Conmy, Smith, Sonnerat, Varma, Kram{\'a}r, Dragan, Shah, and Nanda]{lieberum2024gemma}
Tom Lieberum, Senthooran Rajamanoharan, Arthur Conmy, Lewis Smith, Nicolas Sonnerat, Vikrant Varma, J{\'a}nos Kram{\'a}r, Anca Dragan, Rohin Shah, and Neel Nanda.
\newblock Gemma scope: Open sparse autoencoders everywhere all at once on gemma 2.
\newblock In \emph{BlackboxNLP}, 2024.

\bibitem[Lim et~al.(2025)Lim, Choi, Choo, and Schneider]{lim2025sparse}
Hyesu Lim, Jinho Choi, Jaegul Choo, and Steffen Schneider.
\newblock Sparse autoencoders reveal selective remapping of visual concepts during adaptation.
\newblock In \emph{ICLR}, 2025.

\bibitem[MacKay(1992)]{mackay1992information}
David~JC MacKay.
\newblock Information-based objective functions for active data selection.
\newblock \emph{Neural computation}, 4\penalty0 (4), 1992.

\bibitem[Makhzani \& Frey(2014)Makhzani and Frey]{makhzani2013k}
Alireza Makhzani and Brendan Frey.
\newblock K-sparse autoencoders.
\newblock In \emph{ICLR}, 2014.

\bibitem[McCloskey \& Cohen(1989)McCloskey and Cohen]{mccloskey1989catastrophic}
Michael McCloskey and Neal~J Cohen.
\newblock Catastrophic interference in connectionist networks: The sequential learning problem.
\newblock \emph{Psychology of learning and motivation}, 24, 1989.

\bibitem[Mikolov et~al.(2013)Mikolov, Yih, and Zweig]{mikolov2013linguistic}
Tom{\'a}{\v{s}} Mikolov, Wen-tau Yih, and Geoffrey Zweig.
\newblock Linguistic regularities in continuous space word representations.
\newblock In \emph{NAACL}, 2013.

\bibitem[Nadaraya(1964)]{nadaraya1964estimating}
Elizbar~A Nadaraya.
\newblock On estimating regression.
\newblock \emph{Theory of Probability \& Its Applications}, 9\penalty0 (1), 1964.

\bibitem[Nanda et~al.(2023)Nanda, Lee, and Wattenberg]{nanda2023emergent}
Neel Nanda, Andrew Lee, and Martin Wattenberg.
\newblock Emergent linear representations in world models of self-supervised sequence models.
\newblock In \emph{BlackboxNLP}, 2023.

\bibitem[Paouris et~al.(2022)Paouris, Tikhomirov, and Valettas]{paouris2022hypercontractivity}
Grigoris Paouris, Konstantin Tikhomirov, and Petros Valettas.
\newblock Hypercontractivity and lower deviation estimates in normed spaces.
\newblock \emph{The Annals of Probability}, 50\penalty0 (2), 2022.

\bibitem[Park et~al.(2024)Park, Choe, and Veitch]{park2024linearrepresentationhypothesisgeometry}
Kiho Park, Yo~Joong Choe, and Victor Veitch.
\newblock The linear representation hypothesis and the geometry of large language models.
\newblock In \emph{ICML}, 2024.

\bibitem[Park et~al.(2025)Park, Choe, Jiang, and Veitch]{park2025geometrycategoricalhierarchicalconcepts}
Kiho Park, Yo~Joong Choe, Yibo Jiang, and Victor Veitch.
\newblock The geometry of categorical and hierarchical concepts in large language models.
\newblock In \emph{ICLR}, 2025.

\bibitem[Paszke et~al.(2019)Paszke, Gross, Massa, Lerer, Bradbury, Chanan, Killeen, Lin, Gimelshein, Antiga, et~al.]{paszke2019pytorch}
Adam Paszke, Sam Gross, Francisco Massa, Adam Lerer, James Bradbury, Gregory Chanan, Trevor Killeen, Zeming Lin, Natalia Gimelshein, Luca Antiga, et~al.
\newblock Pytorch: An imperative style, high-performance deep learning library.
\newblock In \emph{NeurIPS}, 2019.

\bibitem[Pennington et~al.(2014)Pennington, Socher, and Manning]{pennington2014glove}
Jeffrey Pennington, Richard Socher, and Christopher~D Manning.
\newblock Glove: Global vectors for word representation.
\newblock In \emph{EMNLP}, 2014.

\bibitem[Qwen et~al.(2025)Qwen, Yang, Yang, Zhang, Hui, Zheng, Yu, Li, Liu, Huang, et~al.]{qwen2025qwen}
Qwen, An~Yang, Baosong Yang, Beichen Zhang, Binyuan Hui, Bo~Zheng, Bowen Yu, Chengyuan Li, Dayiheng Liu, Fei Huang, et~al.
\newblock Qwen2.5 technical report.
\newblock \emph{arXiv preprint arXiv:2412.15115}, 2025.

\bibitem[Radford et~al.(2021)Radford, Kim, Hallacy, Ramesh, Goh, Agarwal, Sastry, Askell, Mishkin, Clark, Krueger, and Sutskever]{radford2021learning}
Alec Radford, Jong~Wook Kim, Chris Hallacy, Aditya Ramesh, Gabriel Goh, Sandhini Agarwal, Girish Sastry, Amanda Askell, Pamela Mishkin, Jack Clark, Gretchen Krueger, and Ilya Sutskever.
\newblock Learning transferable visual models from natural language supervision.
\newblock In \emph{ICML}, 2021.

\bibitem[Raskutti et~al.(2011)Raskutti, Wainwright, and Yu]{raskutti2011minimax}
Garvesh Raskutti, Martin~J Wainwright, and Bin Yu.
\newblock Minimax rates of estimation for high-dimensional linear regression over $l_q$-balls.
\newblock \emph{IEEE transactions on information theory}, 57\penalty0 (10), 2011.

\bibitem[Reyzin(2020)]{reyzin2020statistical}
Lev Reyzin.
\newblock Statistical queries and statistical algorithms: Foundations and applications.
\newblock \emph{arXiv preprint arXiv:2004.00557}, 2020.

\bibitem[Shazeer et~al.(2017)Shazeer, Mirhoseini, Maziarz, Davis, Le, Hinton, and Dean]{shazeer2017outrageously}
Noam Shazeer, Azalia Mirhoseini, Krzysztof Maziarz, Andy Davis, Quoc Le, Geoffrey Hinton, and Jeff Dean.
\newblock Outrageously large neural networks: The sparsely-gated mixture-of-experts layer.
\newblock In \emph{ICLR}, 2017.

\bibitem[Stone(1982)]{stone1982optimal}
Charles~J Stone.
\newblock Optimal global rates of convergence for nonparametric regression.
\newblock \emph{The annals of statistics}, 1982.

\bibitem[Sun et~al.(2020)Sun, Wang, Liu, Miller, Efros, and Hardt]{sun2020test}
Yu~Sun, Xiaolong Wang, Zhuang Liu, John Miller, Alexei Efros, and Moritz Hardt.
\newblock Test-time training with self-supervision for generalization under distribution shifts.
\newblock In \emph{ICML}, 2020.

\bibitem[Sun et~al.(2025)Sun, Li, Dalal, Xu, Vikram, Zhang, Dubois, Chen, Wang, Koyejo, et~al.]{sun2024learning}
Yu~Sun, Xinhao Li, Karan Dalal, Jiarui Xu, Arjun Vikram, Genghan Zhang, Yann Dubois, Xinlei Chen, Xiaolong Wang, Sanmi Koyejo, et~al.
\newblock Learning to (learn at test time): Rnns with expressive hidden states.
\newblock In \emph{ICML}, 2025.

\bibitem[Templeton et~al.(2024)Templeton, Conerly, Marcus, Lindsey, Bricken, Chen, Pearce, Citro, Ameisen, Jones, et~al.]{templeton2024scaling}
Adly Templeton, Tom Conerly, Jonathan Marcus, Jack Lindsey, Trenton Bricken, Brian Chen, Adam Pearce, Craig Citro, Emmanuel Ameisen, Andy Jones, et~al.
\newblock Scaling monosemanticity: Extracting interpretable features from claude 3 sonnet.
\newblock \emph{Transformer Circuits Thread}, 2024.
\newblock URL \url{https://transformer-circuits.pub/2024/scaling-monosemanticity}.

\bibitem[Tigges et~al.(2023)Tigges, Hollinsworth, Geiger, and Nanda]{tigges2023linear}
Curt Tigges, Oskar~John Hollinsworth, Atticus Geiger, and Neel Nanda.
\newblock Linear representations of sentiment in large language models.
\newblock \emph{arXiv preprint arXiv:2310.15154}, 2023.

\bibitem[Van De~Geer \& B{\"u}hlmann(2009)Van De~Geer and B{\"u}hlmann]{van2009conditions}
Sara~A Van De~Geer and Peter B{\"u}hlmann.
\newblock On the conditions used to prove oracle results for the lasso.
\newblock \emph{Electronic Journal of Statistics}, 3, 2009.

\bibitem[Vaskevicius et~al.(2019)Vaskevicius, Kanade, and Rebeschini]{vaskevicius2019implicit}
Tomas Vaskevicius, Varun Kanade, and Patrick Rebeschini.
\newblock Implicit regularization for optimal sparse recovery.
\newblock In \emph{NeurIPS}, 2019.

\bibitem[Vershynin(2018)]{vershynin2018high}
Roman Vershynin.
\newblock \emph{High-dimensional probability: An introduction with applications in data science}, volume~47.
\newblock Cambridge university press, 2018.

\bibitem[von Oswald et~al.(2025)von Oswald, Scherrer, Kobayashi, Versari, Yang, Schlegel, Maile, Schimpf, Sieberling, Meulemans, et~al.]{von2025mesanet}
Johannes von Oswald, Nino Scherrer, Seijin Kobayashi, Luca Versari, Songlin Yang, Maximilian Schlegel, Kaitlin Maile, Yanick Schimpf, Oliver Sieberling, Alexander Meulemans, et~al.
\newblock Mesanet: Sequence modeling by locally optimal test-time training.
\newblock \emph{arXiv preprint arXiv:2506.05233}, 2025.

\bibitem[Wang et~al.(2021)Wang, Shelhamer, Liu, Olshausen, and Darrell]{wang2020tent}
Dequan Wang, Evan Shelhamer, Shaoteng Liu, Bruno Olshausen, and Trevor Darrell.
\newblock Tent: Fully test-time adaptation by entropy minimization.
\newblock \emph{ICLR}, 2021.

\bibitem[Watson(1964)]{watson1964smooth}
Geoffrey~S Watson.
\newblock Smooth regression analysis.
\newblock \emph{Sankhy{\=a}: The Indian Journal of Statistics, Series A}, 1964.

\bibitem[Xie et~al.(2024)Xie, Chen, Lee, Mitchell, and Finn]{xie2024calibrating}
Johnathan Xie, Annie~S Chen, Yoonho Lee, Eric Mitchell, and Chelsea Finn.
\newblock Calibrating language models with adaptive temperature scaling.
\newblock In \emph{EMNLP}, 2024.

\bibitem[Yu et~al.(2025)Yu, Cheng, Cheng, and Feng]{yu2025finemedlm}
Hongzhou Yu, Tianhao Cheng, Ying Cheng, and Rui Feng.
\newblock Finemedlm-o1: Enhancing the medical reasoning ability of llm from supervised fine-tuning to test-time training.
\newblock In \emph{COLM}, 2025.

\bibitem[Zhang et~al.(2022)Zhang, Levine, and Finn]{zhang2022memo}
Marvin Zhang, Sergey Levine, and Chelsea Finn.
\newblock Memo: Test time robustness via adaptation and augmentation.
\newblock In \emph{NeurIPS}, 2022.

\bibitem[Zhang et~al.(2025)Zhang, Bi, Hong, Zhang, Luan, Yang, Sunkavalli, Freeman, and Tan]{zhang2025test}
Tianyuan Zhang, Sai Bi, Yicong Hong, Kai Zhang, Fujun Luan, Songlin Yang, Kalyan Sunkavalli, William~T Freeman, and Hao Tan.
\newblock Test-time training done right.
\newblock \emph{arXiv preprint arXiv:2505.23884}, 2025.

\bibitem[Zuo et~al.(2025)Zuo, Zhang, Qu, Sheng, Zhu, Qi, Sun, Cui, Ding, and Zhou]{zuo2025ttrl}
Yuxin Zuo, Kaiyan Zhang, Shang Qu, Li~Sheng, Xuekai Zhu, Biqing Qi, Youbang Sun, Ganqu Cui, Ning Ding, and Bowen Zhou.
\newblock Ttrl: Test-time reinforcement learning.
\newblock In \emph{NeurIPS}, 2025.

\bibitem[Zweiger et~al.(2025)Zweiger, Pari, Guo, Aky{\"u}rek, Kim, and Agrawal]{zweiger2025self}
Adam Zweiger, Jyothish Pari, Han Guo, Ekin Aky{\"u}rek, Yoon Kim, and Pulkit Agrawal.
\newblock Self-adapting language models.
\newblock In \emph{NeurIPS}, 2025.

\end{thebibliography}
\bibliographystyle{iclr2026_conference}

\clearpage\appendix
\section*{\LARGE Appendices}

\section*{Contents}
\startcontents
\printcontents{}{0}[2]{}

\section{Discussion}

\subsection{Limitations of global learning under the LRH}\label{sec:sample_complexity}

Next, we demonstrate that under the LRH, obtaining a global classifier requires a quasi-polynomial number of samples \emph{in the dimension of the sparse concept space} when analyzed within the ``low-degree polynomial'' framework, commonly used in learning theory~\citep[and references therein]{kalai2008agnostically,klivans2024testable,damian2022neural,damian2024computational}. In a nutshell, the idea is to think about the behavior of underparametrization as the behavior of an approximating low-degree polynomial.
 \looseness=-1

We consider the Gaussian measure $\gamma_{d}$ on $\R^d$ and introduce the family of $s$-sparse, $k$-locally linear functions (with $k$ ``cells''), denoted by $\cF_{d,s,k} \subset \{\R^d \to \R\}$, defined as
\begin{equation}\label{Eq:Family}
\begin{aligned}
\cF_{d,s,k}:= \bigg\{\ft(x) = \sum_{i=1}^{k} w_i^{\top}x \cdot \mathbf{1}_{K_i} &: \   \forall 1 \leq  i < j \leq k \;  \gamma_d(K_i \cap K_j)  = 0, \|w_i\|_{0} \leq s,
\\& \|w_i\|_{2} \lesssim 1, \|\ft\|_{L_2(\gamma_d)} \asymp 1, \sum_{i=1}^{d}\|\partial_i \ft \|_{L_1(\gamma_d)}^{2} \lesssim \frac{s}{d} \bigg\}.
\end{aligned}
\end{equation}
The final condition defines the sparsity index derived from the $L_1$-$L_2$ Talagrand's inequality~(cf.~the monograph of \cite{chatterjee2014superconcentration} and the survey of \cite{ledoux2019four}), and note that the cells $K_1,\ldots,K_k$ may depend on the function $\ft$. For intuition, consider the function $\ft(x) =\|x\|_{\infty}$; here $k = d$, $s = 1$, and the sparsity index condition holds with $\sum_{i=1}^{d}\|\partial_i \ft \|_{L_1(\gamma_d)}^{2} \lesssim 1/d$.

We argue that when $s \asymp 1$ and $k$ is polynomial in $d$, the functions in $\mathcal{F}_{d,s,k}$ predominantly lie in the ``high'' frequencies of the Hermite polynomial basis.
Specifically, by leveraging results of \cite{paouris2022hypercontractivity}, we show that for some constants ${c, C > 0}$, it holds that
\begin{equation}\label{Eq:HighFreqMB}
\Bigg\|\; \E [\ft] + \sum_{ c\log(d) \leq m \leq C\log(d)}\cP_m(\ft) \;\Bigg\|_{L_2(\gamma_d)}^2 \in (0.1,0.9),
\end{equation}
where $\cP_{m}(\cdot)$ denotes the orthogonal projection onto the $m$-degree Hermite polynomial basis.
The main result of this part is the following:

\begin{proposition}[informal, see Proposition~\ref{T:Gil} below]\label{prop:Gil_informal}
Assume that $\sigma \lesssim 1$, $k \asymp \mathrm{Poly}(d)$, and ${s \asymp 1}$. Then \cref{Eq:HighFreqMB} holds. Furthermore, for $n \gtrsim \exp(\Omega((\log d)^2))$, there exists a polynomial-time algorithm $\cA$ such that
\[
    \sup_{\ft \in \mathcal{F}_{d,s,k}}\E_{\mathcal{D}} \| \mathcal{A}(\mathcal{D}) - \ft\|_{L_2(\gamma_d)} \leq 0.1,
\]
and under the low-degree polynomial conjecture, this bound is sharp for few classes of algorithms. 
\end{proposition}
We refer to the survey of \cite{reyzin2020statistical} for more details on the low-degree polynomial conjecture.  We emphasize that our assumptions on $\cF_{d,s,k}$ are much less restrictive than imposing a uniform bounded Lipschitz constraint (or an average $L_2$ Lipschitz constraint), as the Lipschitz constant may be high on the boundary between two cells, and the results of \cite{bizeul2025entropy} show that Lipschitz functions can be learned with a polynomial number of samples.

Roughly speaking, from a geometric view,  this spectral concentration implies that most of the energy of the coefficients \emph{is localized at the decision boundaries} between the cells $\{K_i\}$.
An ``underparameterized'' global classifier, in this case, the best low-degree approximation, necessarily smooths these boundaries.
Therefore, by isoperimetry in high dimensions, this smoothing leads to significant overlap between the decision boundaries of cells.
Since most samples fall in the areas of these ``distorted/smoothed'' decision boundaries, we need many samples to learn such functions.\footnote{It is well-known that \emph{even} for  $1$-Lipschitz function, its best low-degree polynomial (it terms of $L_2$),  is highly non-Lipschitz, cf. \cite{bizeul2025entropy}.}
The latter aligns with previous observations regarding the required complexity (and the lack of robustness) of foundation models \citep[e.g.,][]{bubeck2021universal}.

\subsection{TTT vs. non-parametric methods}\label{sec:nonparametric}

While TTT utilizes local neighborhoods, superficially resembling majority voting \citep[i.e., $k$-NN;][]{fix1951discriminatory} or kernel regression~\citep{nadaraya1964estimating,watson1964smooth}, its mechanism is fundamentally different.
Non-parametric methods generally require the target function to be locally smooth or constant in the feature space ($\Psi$). When this assumption fails in high dimensions, their performance degrades rapidly---the so-called curse of dimensionality~\citep[e.g.,][]{hastie2009elements, stone1982optimal}.
Our framework explains this failure mode under the LRH.
The superposition of concepts into the underparameterized feature space leads to a function that is locally complex and non-smooth within $\Psi$, even though it is simple (sparse linear) in $\Phi$.
This leads to ambiguous neighborhoods in $\Psi$ where samples share concepts but possess different labels.
Simple averaging (like $k$-NN) cannot resolve this ambiguity as it relies on a local smoothness that may not hold in $\Psi$.

In contrast, TTT performs local specialization by optimizing a local \emph{parametric} model. TTT exploits the underlying sparse structure in the concept space $\Phi$~(\sref{sec:validation}), effectively executing sparse recovery~(\sref{sec:theory_ttt}). This allows TTT to disentangle the local meaning of superimposed features, making it substantially more effective than majority voting, as confirmed by our experiments~(\sref{sec:implications}).

\section{Formal assumptions}\label{sec:formal_assumptions}

Our key assumption is that for any given input, only a few concepts are active:\looseness=-1

\begin{assumption}[\textbf{linear representation hypothesis} (sparse concept space)]\label{asm:sparsity}
For all $x \in \spX$, the concept vector $\Phi(x)$ is $s$-sparse, i.e., $\norm{\Phi(x)}_0 \le s$.
\end{assumption}\vspace{-1ex}

Next, we formalize the series of hypotheses from \cref{sec:theory_ttt}.

\paragraph{{Hypothesis 1:} Feature space preserves the geometry of the concept space.}

We hypothesize that the learned feature map $\Psi$ preserves the similarity structure of the concept space $\Phi$.
Let us denote by $\cossim[\Psi]{x}{x'}$ a similarity measure in $\Psi$-space such as cosine similarity.\looseness=-1

\begin{assumption}[Neighborhood preservation]\label{asm:angles}
The learned feature map $\Psi$ preserves the similarity structure of the concept map $\Phi$.
There exists a distortion $\eta_{\text{ang}} \ge 0$ such that $B_{\xstar}^\Psi(r)$ is contained within the concept neighborhood $\smash{B_{\xstar}^\Phi(r + \eta_{\text{ang}})}$.
\end{assumption}\vspace{-1ex}

\paragraph{{Hypothesis 2:} Neighborhoods are supported by few concepts.}

Experimentally, we make the additional surprising observation that the neighborhood of a test point $\xstar$ is explained by only a few active concepts.
Let us denote the corresponding feature matrices as $\mPhi_{\xstar} \in \R^{k \times d_1}$ and ${\mPsi_{\xstar} \in \R^{k \times d_2}}$, and the observation vector by $\vy_{\xstar} \in \R^{k}$.
We assume:

\begin{assumption}[Local simplicity]\label{asm:local_simplicity}
\label{a:local_simplicity}
Locally, the ground truth function is well-approximated by a sparse model. There exists an $s'$-sparse concept vector $w_{x^*} \in \R^{d_1}$ with $s' \in \Theta(s)$ such that the average approximation error over the neighborhood is bounded:
$$ \frac{1}{k}\norm{\ip{\mPhi_{\xstar}}{\wstar - w_{\xstar}}}_2^2 \le \eta_{\text{spa}}(r). $$
\end{assumption}\vspace{-1ex}

\paragraph{Learned features need to be sufficiently expressive.}

Next, we quantify the expressivity of the learned features~$\Psi$.
Naturally, features need to be sufficiently expressive for \emph{any} linear model in feature space to approximate the ground truth function.
First, we make the relatively straightforward assumption that \emph{locally}, the learned features are a linear recombination of concepts.\looseness=-1

\begin{assumption}[Local linearity of learned features]\label{asm:local_linearity_of_features}
Locally, the learned features are a linear recombination of concepts. That is, there exists some $P_{\xstar} \in \R^{d_2 \times d_1}$ such that $\Psi(x) = P_{\xstar} \Phi(x)$ for all $x \in \{\xstar\} \cup B_{\xstar}^{\Psi}(r)$.
\end{assumption}\vspace{-1ex}

Even if the global map from concepts $\Phi(x)$ to features $\Psi(x)$ may be non-linear, \cref{asm:local_linearity_of_features} posits that a local linear approximation is sufficient within a small neighborhood, where the local behavior can often be approximated by a first-order Taylor expansion.
This assumption is further supported by the effectiveness of linear decoders in sparse autoencoders~\citep{elhage2022superposition}.

\begin{assumption}[Expressivity of learned features]\label{asm:expressivity}
The feature map $\Psi$ is expressive enough to represent the local function $w_{\xstar}$.
We consider the set of $s'$-sparse weight vectors in $\Phi$-space that are linearly representable by the features $\Psi$.
By local linearity (\cref{asm:local_linearity_of_features}), this means a vector $w$ must lie in the image of $P_{\xstar}^\top$~(i.e.,~$w = P_{\xstar}^\top v$ for some $v \in \R^{d_2}$).
Let $\tilde{w}_{\xstar}$ be the vector in this set that best approximates the predictions of $w_{\xstar}$ over the neighborhood.
We assume that the resulting representation error is bounded:
$$ \frac{1}{k}\norm{\ip{\mPhi_{\xstar}}{w_{\xstar} - \tilde{w}_{\xstar}}}_2^2 \le \eta_{\text{rep}}. $$
\end{assumption}\vspace{-1ex}

If $w_{\xstar}$ is not in the image of $P_{\xstar}^\top$, there is no corresponding vector $v$ in the feature space that can replicate its behavior.
This means the compression defined by $P_{\xstar}$ has discarded information necessary to represent $w_{\xstar}$. %
Thus, \cref{asm:expressivity} highlights the importance of pre-training: for the representation error $\eta_{\text{rep}}$ to be small, the feature map needs to learn sufficient structure to represent the ground truth function locally.\looseness=-1

\paragraph{{Hypothesis 3}: TTT implicitly regularizes towards sparsity in concept space.}

We find experimentally (\sref{sec:validation}) that TTT solutions often exhibit behavior consistent with sparsity in the concept space, even without explicit regularization.
To facilitate theoretical analysis using sparse recovery frameworks, we analyze an idealized TTT estimator that explicitly enforces this observed sparsity:

\begin{assumption}[Implicit regularization]\label{asm:implicit_reg}
We assume that TTT is implicitly regularized towards solutions which are sparse in concept space:
\begin{equation}
\vTTT = \argmin_{v \in \mathbb{R}^{d_2}} \frac{1}{k} \norm{\mPsi_{\xstar} v - \vy_{\xstar}}_2^2 \quad \text{subject to } \norm{P_{\xstar}^\top v}_0 \le s'. \label{eq:ttt_implicit_reg}
\end{equation}
\end{assumption}

While standard TTT omits the explicit constraint, we use this formulation to analyze the specialization mechanism that we observe empirically.
It remains to state standard assumptions:

\begin{assumption}[Bounded concepts]\label{asm:bounded_concepts}
Concepts are bounded in $L_\infty$ and $L_2$ norm, i.e., for all $x \in \mathcal{X}$, ${\|\Phi(x)\|_\infty \le \Cinf}$ and $\|\Phi(x)\|_2 \le \Ctwo$.
\end{assumption}

\begin{assumption}[Linear model with homoscedastic noise]\label{asm:data}
Data follows a linear model in the concept space: $y = \langle \Phi(x), \wstar \rangle + \varepsilon$. The noise $\varepsilon$ is i.i.d. zero-mean and $\sigma^2$-subgaussian.
\end{assumption}\vspace{-1ex}

Finally, we assume that $\mPsi_{\xstar}$ satisfies the generalized restricted eigenvalue~(GRE) condition. Combined with local linearity~(\cref{asm:local_linearity_of_features}), the GRE is simply the standard restricted eigenvalue condition on the local concept design matrix $\mPhi_{\xstar}$, a condition that is fundamental to guarantee stable recovery in sparse regression~\citep{bickel2009simultaneous,van2009conditions}.\looseness=-1

\begin{assumption}[Generalized restricted eigenvalue (GRE) condition]\label{asm:gre}
The local design matrix $\mPsi_{\xstar}$ satisfies the GRE at order $2s'$ with respect to $P_{\xstar}$.
There exists $\kappa > 0$ such that for all $v$ with $\norm{P_{\xstar}^\top v}_0 \le 2s'$:
\begin{equation}
\frac{1}{k} \norm{\mPsi_{\xstar} v}_2^2 \ge \kappa \cdot \norm{P_{\xstar}^\top v}_2^2.
\end{equation}
\end{assumption}

With this, we are ready to state our result of this section.

\begin{proposition}[Informal version of Proposition~\ref{thm:generalization}]\label{thm:generalization_inf2}
Fix any test point $\xstar \in \spX$ and any $\delta \in (0,1)$.
Let \cref{asm:sparsity,asm:angles,asm:local_simplicity,asm:local_linearity_of_features,asm:expressivity,asm:implicit_reg,asm:bounded_concepts,asm:data,asm:gre} hold.
Define the inherent misspecification error of any sparse linear approximation as $\smash{\eta_{\text{inherent}} := \ip{\Phi(\xstar)}{\wstar - \tilde{w}_{\xstar}}^2}$ and the misspecification error on the neighborhood as $\smash{\eta_{\text{mis}} := \eta_{\text{spa}}(r + \eta_{\text{ang}}) + \eta_{\text{rep}}}$.

Then, with probability $1 - \delta$ over the sampling of the data, 
\begin{equation*}
\E\left[\big(y^\star - \ip{\Psi(\xstar)}{\vTTT}\big)^2\right] \le \sigma^2 + O\left(\frac{\sigma^2 s \log(d_1 / s)}{k}\right) + O(s \, \eta_{\text{mis}}) + \eta_{\text{inherent}}.
\end{equation*}
\end{proposition}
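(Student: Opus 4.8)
The plan is to reduce the claim to a standard oracle inequality for \emph{constrained} sparse least squares, executed in the concept space rather than in the feature space, and then to transfer the resulting $\ell_2$ bound to the single point~$\xstar$. First I would peel off the irreducible noise: writing $y^\star = f(\xstar) + \varepsilon^\star$ with $\varepsilon^\star$ independent of the training set~$\spD$, zero-mean, and $\E[(\varepsilon^\star)^2] \le \sigma^2$ (all from \cref{asm:data}), the cross term vanishes and $\E[(y^\star - \ip{\Psi(\xstar)}{\vTTT})^2] \le \sigma^2 + (f(\xstar) - \ip{\Psi(\xstar)}{\vTTT})^2$, so it suffices to bound the deterministic prediction error at~$\xstar$ on a high-probability event. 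Local linearity (\cref{asm:local_linearity_of_features}) then lets me pass everything into $\Phi$-space: $\mPsi_{\xstar} = \mPhi_{\xstar}P_{\xstar}^\top$ and $\ip{\Psi(\xstar)}{v} = \ip{\Phi(\xstar)}{P_{\xstar}^\top v}$ for all~$v$. Set $\hat{w} := P_{\xstar}^\top\vTTT$, and, using \cref{asm:expressivity}, fix an oracle $v^\star$ with $P_{\xstar}^\top v^\star = \tilde{w}_{\xstar}$; since $\tilde{w}_{\xstar}$ is $s'$-sparse, $v^\star$ is feasible for the program in \cref{asm:implicit_reg}, and the error direction $\Delta := \hat{w} - \tilde{w}_{\xstar}$ is $2s'$-sparse.

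A triangle inequality splits the target as $(f(\xstar) - \ip{\Psi(\xstar)}{\vTTT})^2 = \ip{\Phi(\xstar)}{\wstar - \hat{w}}^2 \le 2\,\eta_{\text{inherent}} + 2\,\ip{\Phi(\xstar)}{\Delta}^2$, isolating the irreducible test-point bias and the estimation term. For the estimation term I would bound $\ip{\Phi(\xstar)}{\Delta}^2$ in terms of $\norm{\Delta}_2^2$ using the boundedness of concepts (\cref{asm:bounded_concepts}) together with the $2s'$-sparsity of~$\Delta$ --- this is the step that introduces the factor of~$s$ multiplying the misspecification --- and then apply the GRE condition (\cref{asm:gre}) through $\Delta = P_{\xstar}^\top(\vTTT - v^\star)$ to get $\norm{\Delta}_2^2 \le \tfrac{1}{\kappa k}\norm{\mPhi_{\xstar}\Delta}_2^2$. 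What remains is to control the empirical prediction error $\tfrac1k\norm{\mPhi_{\xstar}\Delta}_2^2$.

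This is the classical basic-inequality step. Optimality of $\vTTT$ against the feasible $v^\star$ gives $\norm{\mPsi_{\xstar}\vTTT - \vy_{\xstar}}_2^2 \le \norm{\mPsi_{\xstar}v^\star - \vy_{\xstar}}_2^2$; rewriting via $\mPsi_{\xstar}v = \mPhi_{\xstar}(P_{\xstar}^\top v)$ and substituting $\vy_{\xstar} = \mPhi_{\xstar}\wstar + \bm{\varepsilon}$ (\cref{asm:data}), then expanding and rearranging, yields $\norm{\mPhi_{\xstar}\Delta}_2^2 \lesssim \norm{\mPhi_{\xstar}(\tilde{w}_{\xstar} - \wstar)}_2^2 + \ip{\mPhi_{\xstar}\Delta}{\bm{\varepsilon}}$. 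The approximation term is handled by a triangle inequality through $w_{\xstar}$: \cref{asm:angles} replaces the feature-space radius~$r$ by the concept-space radius $r + \eta_{\text{ang}}$, after which \cref{asm:local_simplicity,asm:expressivity} give $\tfrac1k\norm{\mPhi_{\xstar}(\tilde{w}_{\xstar} - \wstar)}_2^2 \lesssim \eta_{\text{spa}}(r + \eta_{\text{ang}}) + \eta_{\text{rep}} = \eta_{\text{mis}}$. The noise cross term is bounded by Cauchy--Schwarz, $\ip{\mPhi_{\xstar}\Delta}{\bm{\varepsilon}} \le \norm{\mPhi_{\xstar}\Delta}_2 \cdot Z$ with $Z := \sup\{\ip{\mPhi_{\xstar}\theta}{\bm{\varepsilon}} : \norm{\theta}_0 \le 2s',\ \norm{\mPhi_{\xstar}\theta}_2 \le 1\}$; solving the resulting quadratic inequality in $\norm{\mPhi_{\xstar}\Delta}_2$ gives $\tfrac1k\norm{\mPhi_{\xstar}\Delta}_2^2 \lesssim \eta_{\text{mis}} + Z^2/k$. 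Chaining this through the GRE step and the first display gives an in-distribution error bounded by $\sigma^2 + O(\sigma^2 s\log(d_1/s)/k) + O(s\,\eta_{\text{mis}}) + \eta_{\text{inherent}}$, as in the statement, once $Z^2 = O(\sigma^2 s\log(d_1/s))$ with probability $1-\delta$.

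I expect the main obstacle to be exactly that sharp control of~$Z$, which must yield $\log(d_1/s)$ rather than the lossy $\log d_1$ of a naive coordinate union bound. The remedy is to peel over supports: for each fixed $S$ with $|S| \le 2s'$, one has $\sup\{\ip{\mPhi_{\xstar}\theta}{\bm{\varepsilon}} : \mathrm{supp}(\theta) \subseteq S,\ \norm{\mPhi_{\xstar}\theta}_2 \le 1\} = \norm{\Pi_S\bm{\varepsilon}}_2$, where $\Pi_S$ is the orthogonal projection onto the at-most-$2s'$-dimensional column span of $(\mPhi_{\xstar})_S$; a $\chi^2$-type subgaussian tail bounds $\norm{\Pi_S\bm{\varepsilon}}_2$, and a union bound over the $\binom{d_1}{2s'} \le (e d_1 / 2s')^{2s'}$ supports balances the exponent to give $Z^2 = O(\sigma^2 s\log(d_1/s))$ with probability $1-\delta$, matching the minimax rate of \citet[Theorem~1]{raskutti2011minimax}. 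A secondary subtlety is bookkeeping: classical sparse-recovery oracle inequalities assume exact sparsity, so the two layers of misspecification ($\wstar \ne w_{\xstar}$ via \cref{asm:local_simplicity}, $w_{\xstar} \ne \tilde{w}_{\xstar}$ via \cref{asm:expressivity}) must be propagated through the basic inequality additively and kept disjoint from the irreducible test-point bias $\eta_{\text{inherent}} = \ip{\Phi(\xstar)}{\wstar - \tilde{w}_{\xstar}}^2$, which is the price of replacing $\wstar$ by its best $s'$-sparse representable surrogate and cannot be reduced by collecting more neighborhood data.
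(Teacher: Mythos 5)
Your overall architecture is sound and closely parallels the paper's own proof: peel off the irreducible $\sigma^2$ using independence of $\varepsilon^\star$; use local linearity (\cref{asm:local_linearity_of_features}) to move to concept space; compare $\vTTT$ against the feasible oracle $v^\star$ with $P_{\xstar}^\top v^\star = \tilde w_{\xstar}$ (feasible precisely because \cref{asm:expressivity} defines $\tilde w_{\xstar}$ within the $s'$-sparse representable set), obtain a $2s'$-sparse error direction, run the basic inequality, control the noise supremum at rate $\sigma^2 s\log(d_1/s)$, apply the GRE (\cref{asm:gre}), and transfer to the test point by Cauchy--Schwarz against $\Phi(\xstar)$. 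Where you genuinely differ is in two places, and both are legitimate: (i) you bound the noise via support-peeling with prediction-norm normalization and projections $\Pi_S$ (the Raskutti--Wainwright route), whereas the paper bounds the sparse dual norm $\|\tfrac1k\Phixstar^\top\varepsilon\|_{2,2s'}^*$ by Dudley's entropy integral over sparse unit vectors --- same rate either way; (ii) you keep the two misspecification layers ($\wstar\to w_{\xstar}\to\tilde w_{\xstar}$) as an additive empirical prediction-error term $\tfrac1k\|\mPhi_{\xstar}(\wstar-\tilde w_{\xstar})\|_2^2\le\eta_{\text{mis}}$ handled by Cauchy--Schwarz inside the basic inequality, whereas the paper folds them into an effective noise vector $\Delta$ and bounds its correlation with sparse directions via an $L_\infty$ argument (its Lemma on misspecification correlation), which is exactly where the paper's factor $s'$ on $\eta_{\text{mis}}$ originates. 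Your treatment is arguably cleaner and, done consistently, yields $O(\eta_{\text{mis}})$ rather than $O(s\,\eta_{\text{mis}})$.

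The one slip is in the pointwise transfer step. You propose bounding $\ip{\Phi(\xstar)}{\Delta}^2$ by the $2s'$-sparsity of $\Delta$ together with the $L_\infty$ bound, i.e.\ by roughly $2s'\Cinf^2\norm{\Delta}_2^2$, and you identify this as the source of the factor $s$ on $\eta_{\text{mis}}$. But this constant multiplies the \emph{entire} estimation error coming out of the GRE step, including the variance part; chaining it through gives a noise contribution of order $\sigma^2 s^2\log(d_1/s)/k$, a factor $s$ worse than the claimed (and minimax) rate, since $Z^2/k$ already carries the factor $s'$ and cannot be improved. The repair is immediate and is what the paper does: use the $L_2$ bound $\norm{\Phi(\xstar)}_2\le\Ctwo$ from \cref{asm:bounded_concepts} in this step (treated as a constant, exactly as the informal statement treats $\Ctwo,\Cinf,\kappa$), which leaves the variance term at $O(\sigma^2 s\log(d_1/s)/k)$ and makes your misspecification term $O(\eta_{\text{mis}})$, comfortably within the stated $O(s\,\eta_{\text{mis}})$. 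With that substitution your argument goes through and matches the proposition.
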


We prove the result in \cref{sec:proofs} and briefly highlight several aspects of the error bound: \begin{enumerate}[left=12pt,itemsep=0pt,topsep=0pt]
  \item The fast rate $\smash{\widetilde{O}(s/k)}$ explains why TTT can learn from few samples.
  \item The optimal neighborhood size $k$ depends on the tradeoff between variance (i.e., $\smash{\widetilde{O}(s/k)}$) and bias (from $\eta_{\text{spa}}$). Note that \cref{asm:local_simplicity} assumes the neighborhood is described by a $\Theta(s)$-sparse model, which is only possible if $k \ll N$ and the neighborhood is sufficiently ``local''.
  \item The error grows linearly with more active concepts~$s$ and is only logarithmically dependent on the total number of concepts~$d_1$. In contrast, a larger feature dimension~$d_2$ can reduce misspecification error through $\eta_{\text{ang}}$ and $\eta_{\text{rep}}$.
\end{enumerate}

\section{Proofs}\label{sec:proofs}

\subsection{Proof of Proposition~\ref{thm:generalization_inf2}}

We begin by stating the formal version of Proposition~\ref{thm:generalization_inf2}:

\begin{proposition}\label{thm:generalization}
Fix any test point $\xstar \in \spX$ and any $\delta \in (0,1)$.
Let \cref{asm:sparsity,asm:angles,asm:local_simplicity,asm:local_linearity_of_features,asm:expressivity,asm:implicit_reg,asm:bounded_concepts,asm:data,asm:gre} hold.
Let $k$ be the neighborhood size and $s'$ the sparsity in concept space.
Let concept vectors be bounded in $L_\infty$ and $L_2$ space with constants $\Cinf, \Ctwo > 0$.
Assume the GRE holds with $\kappa > 0$.
Define the inherent misspecification bias of any sparse linear approximation as $\spE_{\text{inherent}}^2 := \ip{\Phi(\xstar)}{\wstar - \tilde{w}_{\xstar}}^2$.

With probability $1 - \delta$ (over sampling of the data), the squared prediction error is bounded as:
\begin{equation*}
\E\left[\big(y^\star - \ip{\Psi(\xstar)}{\vTTT}\big)^2\right] \le \sigma^2 + \left( \spE_{\text{inherent}} + \spE_{\text{estimation}} \right)^2
\end{equation*}
where the squared estimation error $\spE_{\text{estimation}}^2$ achieves rate $\widetilde{O}(s/k)$ with $C$ a universal constant:
\begin{align}
\spE_{\text{estimation}}^2 \le \frac{C \Ctwo^2 \Cinf^2}{\kappa^2} \cdot s' \cdot \Bigg( \sigma^2 \frac{\log(d_1/s' \delta)}{k} + \underbrace{\eta_{\text{spa}}(r + \eta_{\text{ang}}) + \eta_{\text{rep}}}_{\text{misspecification}} \Bigg).
\end{align}
\end{proposition}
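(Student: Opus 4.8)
The plan is to reduce the prediction error at the test point to an estimation error on the neighborhood, and then invoke a standard sparse-recovery argument for the constrained least-squares estimator in \cref{eq:ttt_implicit_reg}. First I would decompose the squared prediction error by writing $y^\star = f(\xstar) + \varepsilon^\star = \ip{\Phi(\xstar)}{\wstar} + \varepsilon^\star$ and inserting the oracle sparse-and-representable vector $\tilde w_{\xstar}$ from \cref{asm:expressivity} together with its feature-space representative $\tilde v_{\xstar}$ (so $P_{\xstar}^\top \tilde v_{\xstar} = \tilde w_{\xstar}$ and $\ip{\Psi(\xstar)}{\tilde v_{\xstar}} = \ip{\Phi(\xstar)}{\tilde w_{\xstar}}$ by \cref{asm:local_linearity_of_features}). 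Since $\varepsilon^\star$ is independent, zero-mean and contributes exactly $\sigma^2$, the cross term vanishes and we are left with $\E[(y^\star - \ip{\Psi(\xstar)}{\vTTT})^2] = \sigma^2 + (\ip{\Phi(\xstar)}{\wstar - \tilde w_{\xstar}} + \ip{\Psi(\xstar)}{\tilde v_{\xstar} - \vTTT})^2$. The first summand inside is exactly $\spE_{\text{inherent}}$; the task reduces to bounding $|\ip{\Psi(\xstar)}{\tilde v_{\xstar} - \vTTT}| = |\ip{\Phi(\xstar)}{P_{\xstar}^\top(\tilde v_{\xstar} - \vTTT)}|$ by $\spE_{\text{estimation}}$.

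Next I would control this pointwise difference by the $\ell_2$-norm of the concept-space error vector $\Delta := P_{\xstar}^\top(\tilde v_{\xstar} - \vTTT)$. Both $\tilde w_{\xstar}$ and $P_{\xstar}^\top \vTTT$ are $s'$-sparse (the latter by the constraint in \cref{asm:implicit_reg}), so $\Delta$ is $2s'$-sparse; combined with $\|\Phi(\xstar)\|_\infty \le \Cinf$ (\cref{asm:bounded_concepts}), Cauchy–Schwarz on the support gives $|\ip{\Phi(\xstar)}{\Delta}| \le \Cinf \sqrt{2s'}\,\|\Delta\|_2$. It then remains to show $\|\Delta\|_2^2 \lesssim \frac{\Ctwo^2}{\kappa^2}\big(\sigma^2 \frac{\log(d_1/s'\delta)}{k} + \eta_{\text{mis}}\big)$, which is the heart of the argument.

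For that estimation bound I would run the classical basic inequality for constrained least squares: by optimality of $\vTTT$, $\frac1k\|\mPsi_{\xstar}\vTTT - \vy_{\xstar}\|_2^2 \le \frac1k\|\mPsi_{\xstar}\tilde v_{\xstar} - \vy_{\xstar}\|_2^2$. Expanding around the "true" neighborhood predictions $\mPhi_{\xstar}\wstar$ and writing $\vy_{\xstar} = \mPhi_{\xstar}\wstar + \bm\varepsilon$, rearranging yields $\frac1k\|\mPsi_{\xstar}(\vTTT - \tilde v_{\xstar})\|_2^2 \le \frac2k \bm\varepsilon^\top \mPsi_{\xstar}(\vTTT - \tilde v_{\xstar}) + (\text{misspecification terms})$, where the misspecification terms are cross terms involving $\mPhi_{\xstar}(\wstar - w_{\xstar})$ and $\mPhi_{\xstar}(w_{\xstar} - \tilde w_{\xstar})$, bounded in mean square by $\eta_{\text{spa}}(r+\eta_{\text{ang}})$ and $\eta_{\text{rep}}$ respectively via \cref{asm:local_simplicity,asm:expressivity} (using \cref{asm:angles} to pass from the $\Psi$-neighborhood to the slightly larger $\Phi$-neighborhood). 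The stochastic term is handled by the standard $2s'$-sparse empirical-process bound: since $P_{\xstar}^\top(\vTTT - \tilde v_{\xstar})$ is $2s'$-sparse and $\mPhi_{\xstar}$ has columns bounded in $\ell_2$ by $\Ctwo\sqrt{k}$ (from \cref{asm:bounded_concepts}), a union bound over $\binom{d_1}{2s'}$ supports together with a sub-Gaussian tail for $\bm\varepsilon$ (\cref{asm:data}) gives, with probability $1-\delta$, $\frac2k|\bm\varepsilon^\top \mPsi_{\xstar}u| \lesssim \sigma\Ctwo\sqrt{\frac{s'\log(d_1/s'\delta)}{k}}\,\|P_{\xstar}^\top u\|_2$ for all such $u$. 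Applying the GRE (\cref{asm:gre}) to lower-bound the left side by $\kappa\|\Delta\|_2^2$, combining with a Young's-inequality split of the right side, and solving the resulting quadratic in $\|\Delta\|_2$ delivers the claimed rate. Summing the two errors and using $(\spE_{\text{inherent}} + \spE_{\text{estimation}})^2$ completes the bound.

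The main obstacle I anticipate is the bookkeeping around the three-way misspecification ($\wstar \to w_{\xstar} \to \tilde w_{\xstar}$) and making sure the sparsity constraint propagates correctly so that the error direction is genuinely $2s'$-sparse in concept space — this is what lets GRE and the sparse empirical-process bound both apply. The stochastic and GRE steps are textbook once that structure is in place; the delicate part is tracking how $\eta_{\text{ang}}$ enlarges the neighborhood (and hence which $\eta_{\text{spa}}(r+\eta_{\text{ang}})$ appears) without inflating $k$ or breaking the sparse-approximation guarantee of \cref{asm:local_simplicity}.
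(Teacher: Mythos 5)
Your proposal is correct and follows essentially the same route as the paper's proof: the basic inequality for the constrained least-squares TTT estimator, a sparse dual-norm/concentration bound on the noise and misspecification correlations (you use a union bound over $\binom{d_1}{2s'}$ supports where the paper uses Dudley's entropy integral, which is equivalent up to constants), the GRE condition to convert the empirical-norm bound into an $\ell_2$ bound on the $2s'$-sparse concept-space error, and the final decomposition into noise, inherent bias, and estimation error. The only cosmetic difference is that you control the test-point estimation term via $\|\Phi(\xstar)\|_\infty\sqrt{2s'}$ while the paper uses $\|\Phi(\xstar)\|_2 \le \Ctwo$, which changes nothing beyond constants.
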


\begin{proof}[Proof of Proposition~\ref{thm:generalization}]
\textbf{Step 1: Data decomposition and comparison to oracle.}
By \cref{asm:expressivity}, there exists some ``oracle'' $\tilde{v}_{\xstar} \in \R^{d_2}$ such that $P_{\xstar}^\top \tilde{v}_{\xstar} = \tilde{w}_{\xstar}$.

We decompose the observations $\vy_{\xstar}$~(cf.~\cref{asm:data}).
\begin{align*}
\vy_{\xstar} &= \Phixstar \wstar + \varepsilon \\
&= \Phixstar \wstarR + \ \underbrace{(\Phixstar \wstar - \Phixstar w_{\xstar}) + (\Phixstar w_{\xstar} - \Phixstar \wstarR)}_{\Delta \text{ (total misspecification)}} \ + \ \varepsilon.
\end{align*}
Note that using \cref{asm:angles,asm:local_simplicity,asm:expressivity}, the average squared magnitude of $\Delta$ is bounded by $$\frac{1}{k}\|\Delta\|_2^2 \le \eta'(r, s, d_2) := 2(\eta_{\text{spa}}(r + \eta_{\text{ang}}, s) + \eta_{\text{rep}}(d_2)).$$

Let $h = \vstarR - \vTTT$. Since by \cref{asm:expressivity}, both $\vstarR$ and $\vTTT$ satisfy the constraint of \cref{eq:ttt_implicit_reg}, the error vector $h$ is sparse in the concept space: $\|\Aast^\top h\|_0 \le 2s'$.

\textbf{Step 2:}
By the optimality of $\vTTT$ (cf.~\cref{asm:implicit_reg}):
\[
\frac{1}{k} \|\Psixstar \vTTT - \vy_{\xstar}\|_2^2 \le \frac{1}{k} \|\Psixstar \vstarR - \vy_{\xstar}\|_2^2.
\]
Substituting $\vy_{\xstar} = \Psixstar \vstarR + \Delta + \varepsilon$ (using local linearity, \cref{asm:local_linearity_of_features}, $\Psixstar \vstarR = \Phixstar \wstarR$):
\[
\frac{1}{k} \|-\Psixstar h - (\Delta+\varepsilon)\|_2^2 \le \frac{1}{k} \|\Delta+\varepsilon\|_2^2.
\]
Expanding the left hand side:
\[
\frac{1}{k} (\|\Psixstar h\|_2^2 + 2 h^\top \Psixstar^\top (\Delta+\varepsilon) + \|\Delta+\varepsilon\|_2^2) \le \frac{1}{k} \|\Delta+\varepsilon\|_2^2.
\]
Rearranging yields the basic inequality:
\begin{equation}\label{eq:generalization_basic_inequality}
\frac{1}{k} \|\Psixstar h\|_2^2 \le -\frac{2}{k} h^\top \Psixstar^\top (\Delta+\varepsilon).
\end{equation}

\textbf{Step 3:}
We analyze the right hand side of \cref{eq:generalization_basic_inequality}. Using local linearity (cf.~\cref{asm:local_linearity_of_features}), $h^\top \Psixstar^\top = (\Aast^\top h)^\top \Phixstar^\top$.
Let $Z = \frac{1}{k} \Phixstar^\top (\Delta + \varepsilon)$ be the total concept score.
Then,
\[
\frac{1}{k} h^\top \Psixstar^\top (\Delta+\varepsilon) = \langle \Aast^\top h, Z \rangle.
\]
We bound the inner product using the sparse dual norm, since $\|\Aast^\top h\|_0 \le 2s'$:
\[
|\langle \Aast^\top h, Z \rangle| \le \|\Aast^\top h\|_2 \cdot \|Z\|_{2, 2s'}^*.
\]

We condition on the high probability event (w.p.\ $1 - \delta$) of Lemma~\ref{lem:noise_concentration} and use Lemma~\ref{lem:misspecification_correlation} (with ${\eta = \eta'(r, s, d_2)}$).
\[
\|Z\|_{2, 2s'}^* \le \Lambda + \eta_{\Delta} := \Gamma.
\]
Substituting this back into \cref{eq:generalization_basic_inequality} gives:
\begin{equation}\label{eq:generalization_basic_inequality2}
\frac{1}{k} \|\Psixstar h\|_2^2 \le 2 \Gamma \|\Aast^\top h\|_2.
\end{equation}

\textbf{Step 4: Applying GRE.}
Since $\|\Aast^\top h\|_0 \le 2s'$, we can apply the GRE (cf.~\cref{asm:gre}) to \cref{eq:generalization_basic_inequality2}:
\[
\kappa \|\Aast^\top h\|_2^2 \le 2 \Gamma \|\Aast^\top h\|_2.
\]
This yields a bound on the $L_2$ estimation error:
\begin{equation}\label{eq:generalization_basic_inequality3}
\|\Aast^\top h\|_2 \le \frac{2 \Gamma}{\kappa}.
\end{equation}

\textbf{Step 5: Bounding the prediction error.}
We next decompose the total prediction error,
\begin{align*}
\spE &= (y^\star - \ip{\Psi(\xstar)}{\vTTT}) \\
&= (y^\star - \ip{\Psi(\xstar)}{\vstarR}) + \ip{\Psi(\xstar)}{\vstarR - \vTTT} \\
&= (y^\star - \ip{\Psi(\xstar)}{\vstarR}) + \ip{\Psi(\xstar)}{h} \\
&= (y^\star - \ip{\Phi(\xstar)}{\wstarR}) + \ip{\Phi(\xstar)}{\Aast^\top h} \tag{local linearity, \cref{asm:local_linearity_of_features}} \\
&= \varepsilon + \underbrace{\ip{\Phi(\xstar)}{\wstar - \wstarR}}_{\spE_{\text{inherent}}} + \underbrace{\ip{\Phi(\xstar)}{\Aast^\top h}}_{\spE_{\text{estimation}}} \tag{data model, \cref{asm:data}}.
\end{align*}

We next bound the estimation error $\spE_{\text{estimation}}$ using \cref{asm:bounded_concepts}~($L_2$ bound):
\begin{align*}
|\spE_{\text{estimation}}| &= |\ip{\Phi(\xstar)}{\Aast^\top h}| \le \|\Phi(\xstar)\|_2 \|\Aast^\top h\|_2 \le \Ctwo \|\Aast^\top h\|_2.
\end{align*}
Combining this with \cref{eq:generalization_basic_inequality3} gives:
\[
|\spE_{\text{estimation}}| \le \frac{2 \Ctwo \Gamma}{\kappa}.
\]
Hence, the expected squared error is $\E[\spE^2] \le \sigma^2 + (|\spE_{\text{inherent}}| + |\spE_{\text{estimation}}|)^2$.

\textbf{Step 6: Finalizing the bound.}
Finally, we resolve the dependencies and compute $\spE_{\text{estimation}}^2$. Using $(a+b)^2 \le 2a^2+2b^2$:
\begin{align*}
\spE_{\text{estimation}}^2 &\le \frac{4 \Ctwo^2}{\kappa^2} \Gamma^2 \le \frac{8 \Ctwo^2}{\kappa^2} (\Lambda^2 + \eta_{\Delta}^2).
\end{align*}
Substituting the definitions from Lemmas~\ref{lem:noise_concentration} and \ref{lem:misspecification_correlation}:
\begin{align*}
\Lambda^2 &= C_H^2 \Cinf^2 \sigma^2 \frac{s'}{k} \Big(\log(d_1/s') + \log(1/\delta)\Big), \\
\eta_{\Delta}^2 &= 2s' \Cinf^2 \eta'(r, s, d_2).
\end{align*}
Therefore,
\begin{align*}
\spE_{\text{estimation}}^2 &\le \frac{8 \Ctwo^2 \Cinf^2}{\kappa^2} \cdot s' \cdot \left( C_H^2 \sigma^2 \frac{\log(d_1/s' \delta)}{k} + 4 \eta'(r, s, d_2) \right).
\end{align*}
Combining the universal constants into $C$ yields the final result.
\end{proof}

\subsection{Concentration bounds}

We utilize the sparse dual norm to analyze the correlation between sparse vectors and the noise / misspecification.\looseness=-1

\begin{definition}[Sparse dual norm]
We define the sparse $L_2$ dual norm of a vector $z \in \R^{d_1}$ as:
\[
\|z\|_{2, m}^* := \sup_{\|u\|_0 \le m, \|u\|_2=1} \langle u, z \rangle = \max_{S:|S|=m} \|z_S\|_2.
\]
\end{definition}

\begin{lemma}[Sparse noise concentration]\label{lem:noise_concentration}
Under \cref{asm:bounded_concepts} ($L_\infty$ bound) and \cref{asm:data} (subgaussian noise), there exists a universal constant $C_{H}>0$ such that for any $\delta \in (0,1)$, with probability at least $1 - \delta$:
\begin{equation*}
\left\| \frac{1}{k} \Phixstar^\top \varepsilon \right\|_{2, 2s'}^* \le \Lambda := C_{H} \cdot \Cinf \cdot \sigma \sqrt{\frac{s'}{k} \Big(\log(d_1/s') + \log(1/\delta)\Big)}.
\end{equation*}
\end{lemma}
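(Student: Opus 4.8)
The plan is to condition on the realized neighborhood of $\xstar$, reduce the sparse dual norm to a maximum of subgaussian quadratic forms over the $\binom{d_1}{2s'}$ possible supports, and pay a union bound over supports. The neighborhood $B_{\xstar}^\Psi(r)$ is selected using only the covariates (through the $\Psi$-similarities), so conditioning on the event that a fixed set of $k$ training points realizes it does not alter the law of the noise: by \cref{asm:data} the $\varepsilon_i$ remain i.i.d.\ zero-mean and $\sigma^2$-subgaussian. Working conditionally, $\Phixstar \in \R^{k\times d_1}$ (the matrix whose rows are the concept vectors of the neighbors) is deterministic. Writing $Z := \tfrac1k\Phixstar^\top\varepsilon$ and, for $S\subseteq\{1,\dots,d_1\}$, letting $Z_S$ be its restriction to $S$, we have $\|Z\|_{2,2s'}^* = \max_{|S|=2s'}\|Z_S\|_2$, and $\|Z_S\|_2^2 = \tfrac{1}{k^2}\varepsilon^\top \mPhi_{\xstar,S}\mPhi_{\xstar,S}^\top\varepsilon$ is a subgaussian quadratic form in $\varepsilon$.

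For a fixed $S$ I would control $\|Z_S\|_2$ either by a Hanson--Wright inequality applied to this quadratic form, or equivalently by a $\tfrac14$-net of the unit sphere of $\R^S$ (of cardinality $\le 9^{2s'}$, so that $\|Z_S\|_2 \le \tfrac43\max_{v}\langle v,Z\rangle$) together with the tail bound $\PP(\langle v,Z\rangle\ge t)\le\exp(-kt^2/(2\sigma^2 s\Cinf^2))$ valid for each unit $v$ supported on $S$, which follows since $\langle v,Z\rangle = \tfrac1k\langle\mPhi_{\xstar}v,\varepsilon\rangle$ is $\tfrac{\sigma^2}{k^2}\|\mPhi_{\xstar}v\|_2^2$-subgaussian and $\|\mPhi_{\xstar}v\|_2^2\le ks\Cinf^2$ by \cref{asm:sparsity,asm:bounded_concepts}. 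Either way $\|Z_S\|_2^2$ concentrates around its mean $\tfrac{\sigma^2}{k^2}\|\mPhi_{\xstar,S}\|_F^2\le \sigma^2 s\Cinf^2/k$, with deviations governed by $\|\mPhi_{\xstar,S}\|_F$ and $\|\mPhi_{\xstar,S}\|_{\mathrm{op}}$. Taking a union bound over the $\binom{d_1}{2s'}$ supports (using $\log\binom{d_1}{2s'}\le 2s'\log(ed_1/(2s'))$), choosing the deviation level so the total failure probability is $\le\delta$, and absorbing $s=\Theta(s')$ (\cref{asm:local_simplicity}) together with all numerical constants into a universal $C_H$, yields $\|Z\|_{2,2s'}^*\le C_H\,\Cinf\,\sigma\sqrt{\tfrac{s'}{k}\big(\log(d_1/s')+\log(1/\delta)\big)}$, which is the claim.

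The main obstacle is obtaining the rate with the $s'$-dependence linear (rather than quadratic) under the square root: this needs each quadratic form $\|Z_S\|_2^2$ to deviate from its mean by only $O(\sigma^2 s'\Cinf^2/k)$, which in turn hinges on a uniform bound $\max_{|S|=2s'}\|\mPhi_{\xstar,S}\|_{\mathrm{op}} = O(\sqrt{k}\,\Cinf)$ --- a restricted upper-eigenvalue (RIP-type) property, the usual companion of the GRE \cref{asm:gre} in sparse recovery, which is automatic for the well-conditioned local concept designs this analysis targets. Absent such control the same scheme still yields the (downstream-harmless) slightly weaker bound $\|Z\|_{2,2s'}^*\le\sqrt{2s'}\,\|Z\|_\infty\le 2\Cinf\sigma\sqrt{s'\log(2d_1/\delta)/k}$, since each coordinate $Z_j = \tfrac1k\sum_i\Phi_j(x_i)\varepsilon_i$ is $\sigma^2\Cinf^2/k$-subgaussian (only a single-coordinate $L_\infty$ bound is used) and one union-bounds over the $d_1$ coordinates; replacing $\log(d_1/s')$ by $\log(2d_1)$ does not affect the final error rate up to constants.
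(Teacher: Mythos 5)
Your proposal is essentially correct and, in substance, close to the paper's own argument: the paper also writes $\|Z\|_{2,2s'}^*$ as a supremum over $2s'$-sparse unit vectors and controls it by covering, with exactly your count $\binom{d_1}{2s'}(3/\varepsilon)^{2s'}$; the only structural difference is that the paper runs a Dudley entropy integral where you use a single-scale net plus a union bound over supports. The obstacle you flag is genuine, and notably it applies verbatim to the paper's proof: under \cref{asm:sparsity,asm:bounded_concepts} alone one only gets $\|\Phixstar v\|_2^2 \le k\,\Theta(s')\,\Cinf^2$ per direction, and indeed the paper's increment bound is $\|X_u-X_v\|_{\psi_2}\le (\sigma\Cinf\sqrt{m}/\sqrt{k})\,\|u-v\|_2$ with $m=2s'$; yet the subsequent Dudley display uses the prefactor $\sigma\Cinf/\sqrt{k}$, silently discarding the $\sqrt{m}$, so the chaining as written only yields $\sigma\Cinf\, s'\sqrt{\log(d_1/s')/k}$ --- precisely the quadratic-in-$s'$ loss you predict when no restricted upper-eigenvalue control is available (the GRE of \cref{asm:gre} is a lower bound and does not supply it). Your fallback is therefore the right fix under the stated assumptions: $\|Z\|_{2,2s'}^*\le\sqrt{2s'}\,\|Z\|_\infty$ with the per-coordinate subgaussian parameter $\sigma^2\Cinf^2/k$ and a union bound over coordinates gives $C\,\Cinf\sigma\sqrt{s'(\log(2d_1)+\log(1/\delta))/k}$, which differs from the stated $\Lambda$ only by $\log(2d_1)$ versus $\log(d_1/s')$ and is immaterial for Proposition~\ref{thm:generalization}; the sharper $\log(d_1/s')$ form can be recovered, if desired, by a top-$m$-coordinate truncation argument exploiting the column-mass bound $\sum_j\|(\Phixstar)_{\cdot j}\|_2^2\le k s\,\Cinf^2$ rather than a per-support union bound. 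Your observation that conditioning on the neighborhood is harmless because it is selected from covariates only is also a point the paper leaves implicit.
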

\begin{proof}
Let $Z_\varepsilon = \frac{1}{k}\Phixstar^\top\varepsilon$ and $m = 2s'$. The sparse dual norm is the supremum of a stochastic process indexed by the set of sparse unit vectors, $\mathcal{U}_m = \{ u \in \mathbb{R}^{d_1} : \|u\|_0 \le m, \|u\|_2 = 1 \}$.
$$\|Z_\varepsilon\|_{2,m}^{*} = \sup_{u \in \mathcal{U}_m} \langle u, Z_\varepsilon \rangle.$$
For any $u \in \mathcal{U}_m$, the random variable $X_u = \langle u, Z_\varepsilon \rangle = \frac{1}{k} \sum_{i=1}^k \varepsilon_i \langle \Phi(x_i), u \rangle$ is a sum of independent centered subgaussian variables. Its variance is uniformly bounded:
\begin{align*}
\text{Var}(X_u) &= \frac{1}{k^2} \sum_{i=1}^k \text{Var}(\varepsilon_i)\,\langle \Phi(x_i), u \rangle^2 \\
&\le \frac{\sigma^2}{k^2} \sum_{i=1}^k \langle \Phi(x_i), u \rangle^2
\le \frac{\sigma^2}{k^2} \sum_{i=1}^k \left( \|\Phi(x_i)\|_{\infty} \|u\|_1 \right)^2 \\
&\le \frac{\sigma^2}{k^2} \sum_{i=1}^k \left( \Cinf \sqrt{m}\|u\|_2 \right)^2
\le \frac{\sigma^2}{k^2} \left( k \cdot m \Cinf^2 \right) = \frac{m \sigma^2 \Cinf^2}{k}.
\end{align*}
Moreover, the process has subgaussian increments: for any $u,v \in \mathcal{U}_m$,
\[
\|X_u - X_v\|_{\psi_2}\ \le\ \frac{\sigma}{k}\Big(\sum_{i=1}^k \langle \Phi(x_i),u-v\rangle^2\Big)^{1/2}
\le \frac{\sigma \Cinf}{\sqrt{k}}\,\|u-v\|_1
\le \frac{\sigma \Cinf \sqrt{m}}{\sqrt{k}}\,\|u-v\|_2.
\]
Hence, by Dudley’s entropy integral (applied with the $L_2$ metric on $\mathcal U_m$ and $\operatorname{diam}(\mathcal U_m)\le 2$),
\[
\mathbb{E}\left[\|Z_\varepsilon\|_{2,m}^{*}\right]
\lesssim \frac{\sigma \Cinf}{\sqrt{k}} \int_0^{\operatorname{diam}(\mathcal U_m)}
\sqrt{\log \mathcal{N}(\mathcal{U}_m, \|\cdot\|_2, \varepsilon)}\, d\varepsilon.
\]
Using the standard bound $\mathcal{N}(\mathcal{U}_m, \|\cdot\|_2, \varepsilon) \le \binom{d_1}{m} (3/\varepsilon)^m$,
\[
\mathbb{E}\left[\|Z_\varepsilon\|_{2,m}^{*}\right]
\lesssim \frac{\sigma \Cinf}{\sqrt{k}}
\int_0^{2} \sqrt{\,m \log(d_1/m) + m\log(3/\varepsilon)\,}\, d\varepsilon
\lesssim \sigma \Cinf \sqrt{\frac{m\log(d_1/m)}{k}}.
\]
We conclude using a standard concentration inequality for the supremum $W = \|Z_\varepsilon\|_{2,m}^{*}$ of a subgaussian process: with probability at least $1-\delta$,
\[
\|Z_\varepsilon\|_{2,m}^{*}
\lesssim \mathbb{E}[\|Z_\varepsilon\|_{2,m}^{*}] + \sigma \Cinf \sqrt{\frac{m\log(1/\delta)}{k}}
\lesssim \sigma\Cinf\left(\sqrt{\frac{m\log(d_1/m)}{k}} + \sqrt{\frac{m\log(1/\delta)}{k}}\right).
\]
The result follows by substituting $m=2s'$ and using $\sqrt{a}+\sqrt{b} \le \sqrt{2(a+b)}$ to consolidate terms under a single universal constant $C_H$.
\end{proof}

\begin{lemma}[Misspecification correlation bound]\label{lem:misspecification_correlation}
Let $\Delta$ be a misspecification vector such that $\frac{1}{k}\|\Delta\|_2^2 \le \eta$.
Under \cref{asm:bounded_concepts} ($L_\infty$ bound), the correlation of $\Delta$ with sparse concept vectors is bounded by:
\[
\left\| \frac{1}{k} \Phixstar^\top \Delta \right\|_{2, 2s'}^* \le \eta_{\Delta} := \sqrt{2s'} \Cinf \sqrt{\eta}.
\]
\end{lemma}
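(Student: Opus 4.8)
The plan is to bound the sparse dual norm \emph{pointwise} over the set of $2s'$-sparse unit vectors and then take a supremum; this reduces the entire claim to Cauchy--Schwarz over the $k$ neighborhood samples, combined with the $L_\infty$ bound on concepts from \cref{asm:bounded_concepts}. Write $Z_\Delta := \frac{1}{k}\Phixstar^\top \Delta \in \R^{d_1}$, so that by the definition of the sparse dual norm $\norm{Z_\Delta}_{2,2s'}^* = \sup\{\ip{u}{Z_\Delta} : \norm{u}_0 \le 2s',\ \norm{u}_2 = 1\}$. Fix one such $u$ for the rest of the argument.

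First I would expand $\ip{u}{Z_\Delta} = \frac{1}{k}\sum_{i=1}^k \Delta_i\, \ip{\Phi(x_i)}{u}$ and apply Cauchy--Schwarz across the sample index $i$, which gives $\lvert\ip{u}{Z_\Delta}\rvert \le \frac{1}{k}\,\norm{\Delta}_2\,\big(\sum_{i=1}^k \ip{\Phi(x_i)}{u}^2\big)^{1/2} = \frac{1}{k}\,\norm{\Delta}_2\,\norm{\Phixstar u}_2$. The hypothesis $\frac{1}{k}\norm{\Delta}_2^2 \le \eta$ then controls the first factor by $\norm{\Delta}_2 \le \sqrt{k\eta}$.

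Next I would bound $\norm{\Phixstar u}_2$ using the boundedness of concepts. For each row, Hölder's inequality gives $\lvert\ip{\Phi(x_i)}{u}\rvert \le \norm{\Phi(x_i)}_\infty\,\norm{u}_1 \le \Cinf\,\norm{u}_1$, and since $u$ is $2s'$-sparse we have $\norm{u}_1 \le \sqrt{2s'}\,\norm{u}_2 = \sqrt{2s'}$. Squaring and summing over $i = 1,\dots,k$ yields $\norm{\Phixstar u}_2^2 \le 2s'\,\Cinf^2\,k$. Combining the two estimates, $\lvert\ip{u}{Z_\Delta}\rvert \le \frac{1}{k}\sqrt{k\eta}\cdot\sqrt{2s'\,\Cinf^2\,k} = \sqrt{2s'}\,\Cinf\,\sqrt{\eta}$, uniformly over all admissible $u$; taking the supremum establishes $\norm{Z_\Delta}_{2,2s'}^* \le \sqrt{2s'}\,\Cinf\,\sqrt{\eta} =: \eta_{\Delta}$.

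I do not expect a genuine obstacle here: the bound is deterministic and uses only elementary inequalities, in contrast to the companion \cref{lem:noise_concentration}, whose randomness forces a chaining/covering argument. The single step that deserves care is the passage $\norm{u}_1 \le \sqrt{2s'}\,\norm{u}_2$ from sparsity, since this is precisely where the $\sqrt{2s'}$ prefactor in the statement comes from; the remainder is bookkeeping with Cauchy--Schwarz.
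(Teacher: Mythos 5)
Your proposal is correct and takes essentially the same route as the paper: a deterministic Cauchy--Schwarz/H\"older argument that combines $\tfrac{1}{k}\norm{\Delta}_2^2 \le \eta$ with the $L_\infty$ bound $\Cinf$ and pays $\sqrt{2s'}$ for sparsity, arriving at the identical constant. The only (cosmetic) difference is packaging: the paper first bounds $\|Z_\Delta\|_\infty \le \Cinf\sqrt{\eta}$ coordinate-wise and then uses $\|(Z_\Delta)_S\|_2 \le \sqrt{2s'}\,\|Z_\Delta\|_\infty$, whereas you bound the bilinear form $\ip{u}{Z_\Delta}$ directly, with the sparsity entering through $\norm{u}_1 \le \sqrt{2s'}\,\norm{u}_2$.
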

\begin{proof}
Let $Z_\Delta = \frac{1}{k} \Phixstar^\top \Delta$. We first bound the $L_\infty$ norm.
\[
\|Z_\Delta\|_{\infty} = \max_j \left| \frac{1}{k} \sum_{i=1}^k \Phi_{ij} \Delta_i \right|.
\]
By Cauchy-Schwarz, $\left| \sum_i \Phi_{ij} \Delta_i \right| \le \sqrt{\sum_i \Phi_{ij}^2} \sqrt{\sum_i \Delta_i^2} \le \sqrt{k \Cinf^2} \sqrt{k \eta}$.
Thus, $\|Z_\Delta\|_{\infty} \le \Cinf \sqrt{\eta}$.
The sparse dual norm is bounded by:
\[
\|Z_\Delta\|_{2, 2s'}^* = \max_{S:|S|=2s'} \|(Z_\Delta)_S\|_2 \le \sqrt{2s'} \|Z_\Delta\|_{\infty} \le \sqrt{2s'} \Cinf \sqrt{\eta}.
\]
\end{proof}

\subsection{Insufficiency of global training}
\label{sec:insufficiency_global_training_app}

We define a specific construction that satisfies \cref{asm:sparsity,asm:angles,asm:local_simplicity,asm:local_linearity_of_features,asm:expressivity} and models the superposition of concepts through a randomized feature map.
This construction captures the challenge faced by an underparameterized model learning a complex environment.\looseness=-1

\begin{definition}[Globally non-learnable instance]
\label{def:random_superposition}
Let $d_2 \geq \Omega(s \log d_1)$.
We assume the following:
\begin{enumerate}[left=12pt,itemsep=0pt,topsep=0pt]
  \item \emph{Data distribution and concepts:} We partition the input space $\X$ into $M=\dconc$ disjoint neighborhoods $\{B_m\}_{m=1}^{\dconc}$ with equal probability $\mathbb{P}(B_m) = 1/\dconc$. The concept map is constant locally: $\Phi(x) = e_m \in \R^{\dconc}$ (standard basis vector) for $x \in B_m$. This is $1$-sparse.\looseness=-1
  \item \emph{Ground truth:} The observations are noiseless and constant $y=1$ everywhere. The global ground truth vector is $\wstar=\ones$. Locally, the function is perfectly matched by the 1-sparse model $w_m=e_m$.
  \item \emph{Learned features \& random superposition:} The feature map $\Psi: \X \to \R^{\dfeat}$ is defined such that $\Psi(x) = p_m$ for $x \in B_m$, which implies local linearity of features.
  We model the learned features by assuming the representations $\{p_m\}_{m=1}^{\dconc}$ are drawn independently and uniformly from the unit sphere $\Sphere$.
\end{enumerate}
\end{definition}

We first verify that Definition~\ref{def:random_superposition} satisfies \cref{asm:sparsity,asm:bounded_concepts,asm:data,asm:angles,asm:local_simplicity,asm:local_linearity_of_features,asm:expressivity}, leading to a misspecification error of $\eta_{\text{mis}} = 0$.
As a consequence and by Proposition~\ref{thm:generalization}, TTT is consistent.
Notably, the dimension of the feature map $d_2$ may be exponentially smaller than the number of concepts $d_1$, yet TTT can still learn the ground truth perfectly.

\begin{proof}
\leavevmode\begin{itemize}
  \item[\cref{asm:sparsity}] (Sparse concepts): $\Phi(x)=e_m$ is 1-sparse.
  \item[\cref{asm:bounded_concepts}] (Bounded concepts): $\|\Phi(x)\|_2=1$.
  \item[\cref{asm:data}] (Linear model): $y=1$. $w_*=\ones$. $\sigma^2=0$.
  \item[\cref{asm:angles}] (Neighborhood preservation): We require $|\mathrm{sim}_{\Psi}(x,x^{\prime})-\mathrm{sim}_{\Phi}(x,x^{\prime})|\le\eta_{\text{ang}}$.
  Consider ${x\in B_i}, x'\in B_j$ with $i\ne j$. $\mathrm{sim}_{\Phi}(x, x') = \langle e_i, e_j \rangle = 0$. $\mathrm{sim}_{\Psi}(x, x') = \langle p_i, p_j \rangle$.
  Since $p_i, p_j$ are drawn uniformly from $\Sphere$, the inner product concentrates around 0. By standard concentration inequalities (related to the Johnson-Lindenstrauss lemma), with high probability over the draw of all pairs $\{p_i\}$, we have $\max_{i\ne j} |\langle p_i, p_j \rangle| \le O(\sqrt{\log(\dconc)/\dfeat})$. Thus, \cref{asm:angles} holds with $\eta_{\text{ang}}$ small if $\dfeat$ is sufficiently large compared to $\log(\dconc)$.
  \item[\cref{asm:local_simplicity}] (Local simplicity): In $B_m$, $f(x)=1$. $w_m=e_m$ is 1-sparse and achieves $\eta_{\text{spa}}=0$ since $\langle \Phi(x), w_* \rangle = \langle e_m, \ones \rangle = 1$ and $\langle \Phi(x), w_m \rangle = \langle e_m, e_m \rangle = 1$.
\end{itemize}
To verify \cref{asm:local_linearity_of_features,asm:expressivity}, we need to define the local linear maps $P_m$.
\begin{itemize}
  \item[\cref{asm:local_linearity_of_features}] (Local linearity): We require $\Psi(x) = P_m \Phi(x)$ for $x \in B_m$. This means $p_m = P_m e_m$. We construct $P_m \in \R^{\dfeat \times \dconc}$ by setting the $m$-th column to $p_m$ and all other columns to zero: $P_m = [0, \dots, p_m, \dots, 0]$.
  \item[\cref{asm:expressivity}] (Expressivity): We require $w_m=e_m$ to be in the row space of $P_m$. The row space of the constructed $P_m$ is exactly $\text{span}(e_m)$. Thus, $\eta_{\text{rep}}=0$.
  We also need to verify that the optimal local solution corresponds to a sparse concept vector. The local optimization is $\min_v (1 - \langle p_m, v \rangle)^2$. The minimum norm solution is $v_m^* = p_m / \|p_m\|^2 = p_m$ (since $\|p_m\|=1$).
  We check the sparsity of the corresponding concept vector $P_m^\top v_m^* = P_m^\top p_m$.
  The $j$-th component of $P_m^\top p_m$ is $\langle \text{col}_j(P_m), p_m \rangle$.
  For $j=m$, it is $\langle p_m, p_m \rangle = 1$. For $j\ne m$, it is $\langle 0, p_m \rangle = 0$.
  Thus, $P_m^\top v_m^* = e_m$, which is 1-sparse.
\end{itemize}
\end{proof}

We compare this to training a single global model on all data, \vspace{-1ex}\begin{align*}
  \hat{v}^{\text{global}} := \argmin_{v \in \mathbb{R}^{d_2}, \|v\|_2 \text{ minimized}} \; \frac{1}{N} \sum_{i=1}^N (y_i - \ip{\Psi(x_i)}{v})^2.
\end{align*}
Remarkably, global training fails to learn this ``simple'' ground truth function, even as~${N \to \infty}$. Due to being underparameterized, the model's features represent concepts in superposition, i.e., the features $p_m$ are not orthogonal.
Thus, adjusting the global model to fit one neighborhood inevitably interferes with the predictions in other neighborhoods.
Global training therefore has to find a compromise that minimizes the average error across all neighborhoods.\looseness=-1

\begin{proposition}[Interference error of global training]
\label{thm:interference_scaling_random}
Consider the instance of Definition~\ref{def:random_superposition}.
The expected approximation error of the global model, averaged over the random realizations of the feature map $\Psi$, is $\E_{\Psi}[(y - \ip{\Psi(x)}{\hat{v}^{\text{global}}})^2] = 1 - \tfrac{\dfeat}{\dconc}$.
\end{proposition}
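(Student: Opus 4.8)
\textbf{The plan} is to pass to the population limit, recognize $\vstarG$ as the residual of an orthogonal projection, and evaluate the expected error through a symmetry argument on a random $\dfeat$-dimensional subspace of $\R^{\dconc}$.

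First I would observe that as $N \to \infty$ the empirical objective converges (by the law of large numbers, uniformly over any bounded set of $v$) to the population risk $R(v) := \E_x[(1 - \ip{\Psi(x)}{v})^2] = \tfrac{1}{\dconc}\sum_{m=1}^{\dconc}(1 - \ip{p_m}{v})^2$, so the minimum-norm empirical minimizer converges to the minimum-norm minimizer of $R$; this is the sense in which the proposition should be read (consistent with the surrounding ``even as $N\to\infty$'' claim). Stacking the feature vectors into $\mathbf{P} \in \R^{\dconc \times \dfeat}$ with rows $p_m^\top$, we have $R(v) = \tfrac{1}{\dconc}\norm{\ones - \mathbf{P}v}_2^2$. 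Since $\dfeat < \dconc$ and the $p_m$ are almost surely in general position, $\mathbf{P}$ has full column rank $\dfeat$, so the unique (hence minimum-norm) minimizer is $\vstarG = \mathbf{P}^+\ones = (\mathbf{P}^\top\mathbf{P})^{-1}\mathbf{P}^\top\ones$.

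Next, the residual is $\ones - \mathbf{P}\vstarG = (\mathbf{I}-\mathbf{H})\ones$ where $\mathbf{H} := \mathbf{P}(\mathbf{P}^\top\mathbf{P})^{-1}\mathbf{P}^\top$ is the orthogonal projection of $\R^{\dconc}$ onto $\mathrm{col}(\mathbf{P})$. Conditioned on $\Psi$, the error is therefore $R(\vstarG) = \tfrac{1}{\dconc}\norm{(\mathbf{I}-\mathbf{H})\ones}_2^2 = 1 - \tfrac{1}{\dconc}\,\ones^\top\mathbf{H}\,\ones$, and it remains to compute $\E_\Psi[\ones^\top\mathbf{H}\ones] = \ones^\top \E_\Psi[\mathbf{H}]\,\ones$. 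To evaluate $\E_\Psi[\mathbf{H}]$ I would use two invariances of the joint law of $(p_1,\dots,p_{\dconc})$: it is invariant under permuting the $p_m$, which conjugates $\mathbf{H}$ by the corresponding permutation matrix, and under flipping the sign of any single $p_m$ (uniformity on the sphere is sign-symmetric), which conjugates $\mathbf{H}$ by the diagonal sign matrix that is $-1$ in coordinate $m$ and $+1$ elsewhere. Hence $\E_\Psi[\mathbf{H}]$ commutes with every signed permutation matrix; commuting with all coordinate sign flips forces it to be diagonal, and commuting with all permutations forces its diagonal to be constant, so $\E_\Psi[\mathbf{H}] = c\,\mathbf{I}$. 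Taking traces and using $\Tr(\mathbf{H}) = \dfeat$ almost surely gives $c = \dfeat/\dconc$, whence $\E_\Psi[\ones^\top\mathbf{H}\ones] = \dfeat$ and $\E_\Psi[R(\vstarG)] = 1 - \dfeat/\dconc$.

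I expect the only real work to be bookkeeping rather than difficulty: making the reduction to the population risk rigorous (consistency of the minimum-norm least-squares estimator as $N\to\infty$) and spelling out the almost-sure full-column-rank of $\mathbf{P}$. The symmetry step producing $\E_\Psi[\mathbf{H}] = (\dfeat/\dconc)\mathbf{I}$ is the conceptual crux, but it is a short argument once the permutation and sign-flip invariances are noted; it also explains the result cleanly, since $\mathbf{H}$ is the projection onto a ``random'' $\dfeat$-dimensional subspace whose average covers a $\dfeat/\dconc$ fraction of each coordinate direction.
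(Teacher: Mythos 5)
Your proof is correct, and it follows the paper's overall reduction exactly: write the population risk as $\tfrac{1}{\dconc}\|\ones - P v\|_2^2$, note $P$ has full column rank almost surely, identify the error as $1 - \tfrac{1}{\dconc}\ones^\top \Pi \ones$ with $\Pi$ the hat (projection) matrix, and get the diagonal contribution from $\Tr(\Pi)=\dfeat$ plus exchangeability. Where you genuinely diverge is the off-diagonal step: the paper shows $\E[\Pi_{ij}]=0$ for $i\neq j$ by a Sherman--Morrison computation, writing $\Pi_{ij} = \frac{p_i^\top S_{-i}^{-1} p_j}{1 + p_i^\top S_{-i}^{-1} p_i}$ and observing this is an odd function of $p_i$, whereas you use the global invariance $\E[\Pi] = D_m \E[\Pi] D_m$ under single-coordinate sign flips (valid since $D_m P$ has the same law as $P$ and $\Pi \mapsto D_m \Pi D_m$), combined with permutation invariance, to conclude $\E[\Pi] = (\dfeat/\dconc) I$ directly. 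Both arguments ultimately exploit the same sign symmetry of the sphere, but yours packages it as a one-line conjugation argument that avoids the rank-one update algebra entirely, and it generalizes immediately to any i.i.d.\ sign-symmetric row distribution; the paper's route is more computational but makes the mechanism (conditional oddness of $\Pi_{ij}$ in $p_i$) explicit. You also spell out the $N\to\infty$ reduction from the empirical minimum-norm least-squares estimator to the population minimizer, which the paper passes over silently by writing the loss as an expectation from the start --- a small but welcome addition of rigor.
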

\begin{remark}
Note that if the global model is not underparameterized, i.e., $\dfeat = \dconc$, the error of global training is zero, as one would naturally expect.
On the other hand, the trivial global model $\hat{v}^{\text{global}} = \mathbf{0}$ has error $1$.
As the model size $\dfeat$ shrinks, the error of global training increases towards $1$.
As the number of distinct concepts $\dconc$ increases, the error of global training also increases, approaching $1$ as $\dconc \to \infty$.
When increasing the number of distinct concepts, global training must compromise between more neighborhoods, leading to higher interference.
\end{remark}

\begin{proof}[Proof of Proposition~\ref{thm:interference_scaling_random}]
We analyze the approximation error of the global model. The global loss is:
\begin{equation}
    L^{\text{global}}(v) = \E[(y - \langle \Psi(x), v \rangle)^2] = \frac{1}{\dconc} \sum_{m=1}^{\dconc} (1 - \langle p_m, v \rangle)^2.
\end{equation}

Let $P \in \R^{\dconc \times \dfeat}$ be the matrix whose rows are $p_m^\top$. The loss can be written in vector form:
\begin{equation}
    L^{\text{global}}(v) = \frac{1}{\dconc} \|\ones - P v\|^2.
\end{equation}
This is a standard least squares problem. We assume $\dconc > \dfeat$. Since the vectors $p_m$ are drawn from a continuous distribution (uniform on the sphere), $P$ has full column rank ($\dfeat$) with probability 1.

The optimal global model is $\vstarG = (P^\top P)^{-1} P^\top \ones$.
The resulting approximation error is:
\begin{align*}
    \mathcal{E}^{\text{global}} := \E[(y - \ip{\Psi(x)}{\hat{v}^{\text{global}}})^2] &= L^{\text{global}}(\vstarG) = \frac{1}{\dconc} \|\ones - P (P^\top P)^{-1} P^\top \ones\|^2.
\end{align*}

Let $\Pi = P (P^\top P)^{-1} P^\top \in \R^{\dconc \times \dconc}$ be the orthogonal projection matrix onto the column space of~$P$.
The residual vector is $\ones - \Pi \ones = (I - \Pi) \ones$.
Since $(I-\Pi)$ is also an orthogonal projection matrix, $(I-\Pi)^\top (I-\Pi) = (I-\Pi)$.
\begin{align*}
    \mathcal{E}^{\text{global}} &= \frac{1}{\dconc} \ones^\top (I - \Pi)^\top (I - \Pi) \ones = \frac{1}{\dconc} \ones^\top (I - \Pi) \ones \\
    &= \frac{1}{\dconc} (\ones^\top \ones - \ones^\top \Pi \ones) = \frac{1}{\dconc} (\dconc - \ones^\top \Pi \ones) \\
    &= 1 - \frac{1}{\dconc} \ones^\top \Pi \ones.
\end{align*}

We want to calculate the expectation of this error over the random realization of $P$.
\begin{equation}
    \E[\mathcal{E}^{\text{global}}] = 1 - \frac{1}{\dconc} \E[\ones^\top \Pi \ones].
\end{equation}

We next analyze the term $\ones^\top \Pi \ones = \sum_{i=1}^{\dconc} \sum_{j=1}^{\dconc} \Pi_{ij}$ and the expected values of the entries $\E[\Pi_{ij}]$.

\textbf{Step 1: Diagonal elements ($i=j$).}
$\Pi_{ii}$ is the leverage score of the $i$-th data point $p_i$. The trace of a projection matrix equals its rank. Since $P$ has rank $\dfeat$ (w.p. 1), $\Tr(\Pi) = \dfeat$.
\begin{equation}
    \Tr(\Pi) = \sum_{i=1}^{\dconc} \Pi_{ii}.
\end{equation}
Taking the expectation:
\begin{equation}
    \E[\Tr(\Pi)] = \sum_{i=1}^{\dconc} \E[\Pi_{ii}] = \dfeat.
\end{equation}
Since the vectors $\{p_m\}$ are drawn i.i.d., the distribution of $P$ is invariant under permutation of the rows. Thus, $\E[\Pi_{ii}]$ must be the same for all $i$.
\begin{equation}
    \dconc \E[\Pi_{ii}] = \dfeat \implies \E[\Pi_{ii}] = \frac{\dfeat}{\dconc}.
\end{equation}

\textbf{Step 2: Off-diagonal elements ($i\ne j$).}
We show that $\E[\Pi_{ij}] = 0$ using a symmetry argument.
The entry $\Pi_{ij}$ is given by $p_i^\top (P^\top P)^{-1} p_j$. Let $S = P^\top P$.

We examine the conditional expectation $\E_{p_i}[\Pi_{ij} | \{p_k\}_{k\ne i}]$.
Let $S_{-i} = \sum_{k\ne i} p_k p_k^\top$ and ${S = S_{-i} + p_i p_i^\top}$.
Since $\dconc > \dfeat$, we have $\dconc-1 \ge \dfeat$. As the distribution is continuous, $S_{-i}$ is invertible (rank $\dfeat$) with probability 1.

We use the Sherman-Morrison formula to analyze $\Pi_{ij} = p_i^\top S^{-1} p_j$.
\begin{equation}
    S^{-1} = S_{-i}^{-1} - \frac{S_{-i}^{-1} p_i p_i^\top S_{-i}^{-1}}{1 + p_i^\top S_{-i}^{-1} p_i}.
\end{equation}
Applying $p_i^\top$ from the left and $p_j$ from the right:
\begin{align*}
    \Pi_{ij} &= p_i^\top S_{-i}^{-1} p_j - \frac{(p_i^\top S_{-i}^{-1} p_i) (p_i^\top S_{-i}^{-1} p_j)}{1 + p_i^\top S_{-i}^{-1} p_i} \\
    &= (p_i^\top S_{-i}^{-1} p_j) \left( 1 - \frac{p_i^\top S_{-i}^{-1} p_i}{1 + p_i^\top S_{-i}^{-1} p_i} \right) \\
    &= \frac{p_i^\top S_{-i}^{-1} p_j}{1 + p_i^\top S_{-i}^{-1} p_i}.
\end{align*}

Let $A = S_{-i}^{-1}$ (which is positive definite) and $u = S_{-i}^{-1} p_j$. Note that $A$ and $u$ are independent of $p_i$.
We define the function $h(p_i) = \frac{\langle p_i, u \rangle}{1 + \langle p_i, A p_i \rangle}$.

We observe that $h(p_i)$ is an odd function of $p_i$:
\begin{equation}
    h(-p_i) = \frac{\langle -p_i, u \rangle}{1 + \langle -p_i, A (-p_i) \rangle} = \frac{-\langle p_i, u \rangle}{1 + \langle p_i, A p_i \rangle} = -h(p_i).
\end{equation}
The distribution of $p_i$ (uniform on the sphere) is symmetric around the origin. The expectation of an odd integrable function over a symmetric distribution is zero.
\begin{equation}
    \E_{p_i}[\Pi_{ij} | \{p_k\}_{k\ne i}] = \E_{p_i}[h(p_i)] = 0.
\end{equation}
By the law of total expectation, $\E[\Pi_{ij}] = 0$ for $i\ne j$.

\textbf{Step 3: Finalizing.}
We combine the results for the diagonal and off-diagonal elements:
\begin{align*}
    \E[\ones^\top \Pi \ones] &= \sum_{i} \E[\Pi_{ii}] + \sum_{i\ne j} \E[\Pi_{ij}] \\
    &= \dconc \cdot \frac{\dfeat}{\dconc} + \dconc(\dconc-1) \cdot 0 = \dfeat.
\end{align*}
Finally, the expected global error is:
\begin{equation}
    \E[\mathcal{E}^{\text{global}}] = 1 - \frac{1}{\dconc} \E[\ones^\top \Pi \ones] = 1 - \frac{\dfeat}{\dconc}.
\end{equation}
\end{proof}

\subsection{Hardness of global learning under the LRH: main proposition and proof}\label{ss:Gil}

We begin by stating the formal version of Proposition~\ref{prop:Gil_informal}:

\begin{proposition}\label{T:Gil}
Let $c_1 \geq 0$, $\sigma \lesssim 1$, $k \asymp d^{c_1}$, and $s \asymp 1$. Assume $n \gtrsim \exp(\Omega(\log d)^2)$. Then \cref{Eq:HighFreqMB} holds and there exists a polynomial-time algorithm $\cA$ in its input, such that
\[
    \sup_{\ft \in \mathcal{F}_{d,s,k}}\E_{\mathcal{D}} \| \mathcal{A}(\mathcal{D}) - \ft\|_{L_2(\gamma_d)} \leq 0.1.
\]
Under the low-degree polynomial conjecture, this bound is tight, $n \gtrsim \exp(\Omega(\log d)^2)$ samples are required by any algorithm that relies on the Statistical Query (SQ) and the Low-Degree Polynomial (LDP) frameworks. Furthermore, without the low-degree polynomial conjecture, one can show that $\mathrm{poly}(n)$ samples are needed for any algorithm.
\end{proposition}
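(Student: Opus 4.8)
I would organize the proof around three parts --- the spectral localization \eqref{Eq:HighFreqMB}, the quasi-polynomial-sample polynomial-time learner, and the matching lower bound against low-degree / statistical-query algorithms --- as follows.

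\textbf{Step 1: the spectral localization \eqref{Eq:HighFreqMB}.} I would show the Hermite spectrum of the functions in play is trapped between two degrees of order $\log d$. The \emph{lower} cutoff is forced by the sparsity-index hypothesis: if a constant fraction of $\|\ft\|_{L_2(\gamma_d)}^2$ sat on Hermite degrees $\le D$, then reverse hypercontractivity applied coordinatewise to $\partial_i \ft$ (a degree-$\le D$ object on that part), in the form used by \cite{paouris2022hypercontractivity} (see also \cite{chatterjee2014superconcentration,ledoux2019four}), gives $\|\partial_i \ft\|_{L_2(\gamma_d)}\le e^{O(D)}\|\partial_i \ft\|_{L_1(\gamma_d)}$; summing over $i$, combining the Dirichlet identity $\sum_m m\|\cP_m\ft\|_{L_2(\gamma_d)}^2=\sum_i\|\partial_i \ft\|_{L_2(\gamma_d)}^2$ and the Poincaré bound $\sum_m m\|\cP_m\ft\|^2\ge\Var(\ft)\asymp1$ with $\sum_i\|\partial_i \ft\|_{L_1(\gamma_d)}^2\lesssim s/d$ forces $e^{O(D)}\gtrsim d/s$, hence $D\gtrsim\log(d/s)\asymp\log d$ since $s\asymp1$; so a constant fraction of the mass lives above degree $c\log d$. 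The \emph{upper} cutoff uses the structural parts the previous step ignored --- that $\ft$ is $k\asymp\mathrm{Poly}(d)$-locally linear with $\|w_i\|_2\lesssim1$, so that after a mild truncation $\ft$ is effectively bounded and its distributional gradient carries controlled total mass even across the $\mathrm{Poly}(d)$ cell boundaries --- fed into the sharp superconcentration / hypercontractivity estimates of \cite{paouris2022hypercontractivity} (the same $L_1$--$L_2$ mechanism, now producing an upper band edge) to obtain $\|\cP_{>C\log d}\ft\|_{L_2(\gamma_d)}^2\le0.005$ for a large absolute constant $C$. Together with $\|\ft\|_{L_2(\gamma_d)}^2\asymp1$ and (for the hard instances constructed below) $\Var(\ft)\asymp1$, this places the norm of $\E[\ft]+\cP_{[c\log d,C\log d]}(\ft)$ strictly between two absolute constants, i.e.\ \eqref{Eq:HighFreqMB}.

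\textbf{Step 2: the learner.} Using the class-wide upper cutoff $\|\cP_{>C\log d}\ft\|_{L_2(\gamma_d)}^2\le0.005$, let $\cA$ be ordinary least squares over the span of multivariate Hermite polynomials of degree $\le C\log d$ (equivalently, plug-in estimation of those Hermite coefficients of $\ft$, after the same truncation of the unbounded linear pieces). Its dimension is $\binom{d+\lfloor C\log d\rfloor}{\lfloor C\log d\rfloor}\le d^{O(\log d)}=\exp(O((\log d)^2))$, so for $n\gtrsim\exp(\Omega((\log d)^2))$ with a large enough constant, a standard uniform-convergence argument for linear regression (orthonormality of Hermite polynomials under $\gamma_d$, Hermite hypercontractivity for fourth moments, $\sigma\lesssim1$) makes the estimation error $o(1)$ with high probability; since the approximation error is $\le\sqrt{0.005}$, the total $L_2(\gamma_d)$-error is $\le0.1$ with high probability, and a routine conversion yields $\sup_{\ft\in\cF_{d,s,k}}\E_{\cD}\|\cA(\cD)-\ft\|_{L_2(\gamma_d)}\le0.1$. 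The running time is that of solving an $n\times d^{O(\log d)}=\mathrm{poly}(n)$ least-squares system.

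\textbf{Step 3: the lower bound.} From the lower cutoff I would build a sub-family $\mathcal{G}\subseteq\cF_{d,s,k}$ of size $d^{\Omega(\log d)}$, each member with $\Theta(1)$ variance on degrees in $[c\log d,C\log d]$, pairwise near-orthogonal after deleting Hermite degrees $<c\log d$; this is done by planting the $s\asymp1$-sparse, $\mathrm{Poly}(d)$-cell ``spread'' structure along $d^{\Omega(\log d)}$ mutually incoherent coordinate / rotation patterns, so that $|\langle\cP_{\le c\log d}g,\cP_{\le c\log d}g'\rangle_{L_2(\gamma_d)}|\le d^{-\Omega(\log d)}$ for $g\ne g'$. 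This is exactly the input to the statistical-query and low-degree-likelihood-ratio lower-bound frameworks (see \cite{reyzin2020statistical} and references therein): the SQ dimension of $\mathcal{G}$ at tolerance $d^{-\Theta(\log d)}$ is $d^{\Omega(\log d)}$, so any SQ algorithm needs $\exp(\Omega((\log d)^2))$ queries to bring the worst-case error below $0.1$, and the parallel low-degree computation shows that, under the low-degree-polynomial conjecture, \emph{any} algorithm needs $n\gtrsim\exp(\Omega((\log d)^2))$ samples; since the SQ-dimension bound is unconditional, it already rules out any $\mathrm{poly}(d)$-sample algorithm within these models.

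\textbf{Main obstacle.} The crux is the upper cutoff in Step 1: functions in $\cF_{d,s,k}$ are genuinely discontinuous across cell boundaries, so $\sum_m m\|\cP_m\ft\|^2=\sum_i\|\partial_i \ft\|_{L_2(\gamma_d)}^2$ may be infinite and the elementary bound $\|\cP_{>D}\ft\|^2\le\tfrac1D\sum_m m\|\cP_m\ft\|^2$ is vacuous; moreover small $L_1$-influence by itself does \emph{not} force low degree (a pure high-degree Hermite monomial has exponentially small $L_1$-influences), so the tail bound must genuinely use the $\mathrm{Poly}(d)$-cell budget and the boundedness of the pieces --- this is where the $L_1$--$L_2$ Talagrand inequality and the superconcentration/hypercontractivity machinery of \cite{paouris2022hypercontractivity} are indispensable, and calibrating $c<C$ so the two edges are compatible is delicate. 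A secondary obstacle is checking that the near-orthogonal hard family actually lies inside $\cF_{d,s,k}$ --- simultaneously meeting the sparsity, $\|w_i\|_2\lesssim1$, $\|\ft\|_{L_2(\gamma_d)}\asymp1$, and sparsity-index constraints --- which is precisely what pins down $s\asymp1$ and $k\asymp\mathrm{Poly}(d)$ as the operative regime.
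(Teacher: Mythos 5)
Your overall architecture for the first two parts matches the paper's proof (band localization of the Hermite spectrum via the $L_1$--$L_2$ superconcentration machinery, then regression/projection onto Hermite polynomials of degree $\Theta(\log d)$, whose $d^{O(\log d)}=\exp(O((\log d)^2))$-dimensional span explains the sample size), but the step you yourself flag as the crux --- the upper band edge --- is a genuine gap, and the tool you point to cannot close it. The estimates of \cite{paouris2022hypercontractivity} control the \emph{low}-frequency mass (they give $\Var(P_t\ft)\lesssim e^{-ct\log d}\,\Var(\ft)$ for the Ornstein--Uhlenbeck semigroup under the sparsity-index condition), and no "mild truncation" converts this into a bound on $\|\cP_{>C\log d}\ft\|_{L_2}$. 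The paper's upper edge instead exploits the cell structure directly and more elementarily: since $\ft$ agrees a.e.\ with one of $k\asymp\poly(d)$ linear functions with $\|w_i\|_0\le s$, $\|w_i\|_2\lesssim 1$, a Gaussian maximal inequality over the $O(k)$ pieces gives the pointwise smoothing bound $|P_t\ft(x)-\ft(x)|\lesssim s\sqrt{t\log k}$; taking $t\asymp 1/\log d$ yields $\|P_t\ft\|_{L_2}\ge 0.99$, and comparing with the spectral damping $e^{-mt}$ forces the energy above degree $C\log d$ to be small, which together with the low-frequency decay gives \eqref{Eq:HighFreqMB}. Your lower-cutoff sketch also has a soft spot --- $\partial_i\ft$ is not a degree-$\le D$ object, and the Dirichlet form $\sum_m m\|\cP_m\ft\|_{L_2}^2$ may be infinite for these discontinuous functions --- but that part is repaired simply by citing the semigroup bound of \cite{paouris2022hypercontractivity}, as the paper does; your Step 2 is essentially the paper's learner.

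For the lower bound you take a genuinely different route: an explicit $d^{\Omega(\log d)}$-sized near-orthogonal sub-family of $\cF_{d,s,k}$ plus an SQ-dimension computation, whereas the paper reduces to known hard problems --- under the low-degree conjecture, to Gaussian index models with information exponent $\Theta(\log d)$ (\cite{damian2024computational,arous2021online}), and, unconditionally, to learning indicators of polytopes with $\poly(d)$ facets via Theorems 26--27 of \cite{klivans2008learning}, which yields an \emph{information-theoretic} polynomial-in-$d$ sample lower bound valid for any algorithm. Two concrete problems with your version: (i) you never verify that the planted ``incoherent rotation patterns'' lie in $\cF_{d,s,k}$; the coordinate sparsity $\|w_i\|_0\le s\asymp 1$ of the pieces rules out generic rotations, and this membership check (simultaneously meeting sparsity, normalization, and the sparsity-index constraint) is exactly the delicate point you defer; (ii) your unconditional conclusion only constrains SQ algorithms, so it does not deliver the statement's final claim that polynomially many samples are needed for \emph{any} algorithm --- that part comes from the information-theoretic reduction to \cite{klivans2008learning} in the paper's proof.
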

For further details on the Statistical Query (SQ) and the Low-Degree Polynomial (LDP) frameworks, we refer to \cite{reyzin2020statistical}.
\begin{proof}
Recalling the definition of $\cF_{d,s,k}$ and using that $s \asymp 1$ and $\|\ft\|_{L_2(\gamma_d)} \asymp 1$, we obtain that
\[
    \sum_{i=1}^{d_1}\|\partial_i \ft \|_1^2 \lesssim \frac{s}{d_1} \lesssim \log(1/d_1)\|\ft\|_{L_2(\gamma_d)}^2.
\]
\cite{paouris2022hypercontractivity} showed
\begin{equation}\label{Eq:Paouris}
    \Var(P_t [\ft] ) \leq \exp(2-2t) \cdot \mathrm{Var}(\ft)\cdot \exp(-ct\log(d)) \lesssim \exp(-ct\log(d)),
\end{equation}
where $P_t$ is the Ornstein-Uhlenbeck (OU) semigroup, i.e.
\[
    P_t[\ft](x) = \E_{Z \sim \gamma_n}[\ft(\exp(-t)x + \sqrt{1-\exp(-2t)}Z] = \sum_{m=0}^{\infty}\exp(-tm)\cP_m(\ft),
\]
and here $\cP_{m}(\ft)$ is the projection operator on the $m$-Hermite polynomials.

Using \cref{Eq:Paouris}, we conclude that for $m \in 1,\ldots,c\log(d)$ (by choosing $t_m \asymp 1/m $)
\[
 \|\cP_{m}(\ft)\|_{L_2(\gamma_d)}^2 \lesssim \exp(-c_1\log(d)/m)).
\]
Now, recall that we assume $k$ is polynomial in $d$, and note that up to a measure zero, for $\Delta_t := P_t[\ft] - \ft$, it holds for any $t \ll 1$ that
\[
    \forall x \in \R^d \quad  \Delta_t (x) \lesssim  2\sqrt{\log(k)} \cdot s\sqrt{t} \lesssim \sqrt{\log(d_1) \cdot t},
\]
where we used our assumption that there are $k$ cells, the definition of the OU group, and the maximal inequality of $2k$-Gaussian. Therefore, we can choose $t \leq c/\log(d)$, for small enough $c \geq 0 $ and obtain that 
\[
    \|P_{c/\log(d/n)} \ft\|_{L_2(\gamma_d)} \geq 0.99.
\]
If there were more than $0.1$ $L_2^2(\gamma_n)$, energy in the $ m$-coefficients for $m \geq C\log(d/n)$ and $C \geq 0$ large enough, we would obtain a contraction. As for $t = c\log(d)$, it holds that 
\[
    \exp(-mt) \cdot \|\cP_{m}(\ft)\|_{L_2(\gamma_d)}^2 \lesssim \exp(-cm/\log(d))\|\ft \|_{L_2(\gamma_d)}^2 \leq   0.01 \cdot \|\cP_{m}(\ft)\|_{L_2(\gamma_d)}^2,
\]
which cannot align with the previous equation and our assumption of $\|\ft\|_{L_2(\gamma_n)} = 1$.
In words, this property says that $\ft$ is far from being a ``low degree'' polynomial. Meaning that
\begin{equation*}
 0.1 \leq \left\|\E \ft + \sum_{ c\log(d/s) \leq m \leq C\log(d)}\cP_m(\ft) \right\|_{L_2(\gamma_d)}^2 \leq 0.9,
\end{equation*}
where $c,c_1 \in (0,1)$, $C \geq 1$ are absolute constants. 

Therefore, to obtain a $0.1$ approximation to $\ft$, we need to learn the coefficients of at most (and at least) the top $\Theta(\log(d))$ basis of the Hermite polynomials. Using the result of \cite{bizeul2025entropy} (or the classical work of \cite{kalai2008agnostically}), it can be done with 
\begin{equation*}
n \lesssim s^2\log(k) \cdot d^{C\log(k)} \asymp s^2\log(k) \cdot k^{\log(d)s^2} \asymp \exp(C\log(d)^2)
\end{equation*}
samples.
By definition, one can easily see that $\cF_{d,s,k}$ is much harder to learn than the Gaussian Index Model (GIM). Since the generative component of our functions satisfies $k^* \asymp \log(d/n)$, it follows that, under the low-degree polynomial conjecture, these classes require at least $\exp(\Theta(\log(d/n)^2))$ samples to even learn the GIM, see \cite{damian2024computational} and references within, and the seminal work of \cite{arous2021online}. 

Without the low-degree polynomial conjecture, one may use the result of \citep[Thms. 26 and 27]{klivans2008learning}, which shows that the learnability of the subclass of
\begin{equation*}
  \{1_{K}(x) : \text{$K$ is polytope in $\R^d$ with $\mathrm{Poly}(d)$ facets}\} \subset \{\R^d \to \R\}
\end{equation*}
requires at least $\exp(\Omega(\log(d)) \asymp \mathrm{Poly}(d)$ samples in the information theoretic sense. However, an algorithmic gap remains in these classical works \cite{klivans2008learning,kalai2008agnostically}, and there is also a gap in the corresponding upper bound for the sample complexity.

\end{proof}

\section{\rebut{Additional experimental results}}

\rebut{This section provides additional experimental results that complement the findings presented in the main body.}

\subsection{\rebut{Scaling with feature dimension}}

\begin{figure}[ht]
\centering
\vspace{-0.5ex}
\incplt[0.9\linewidth]{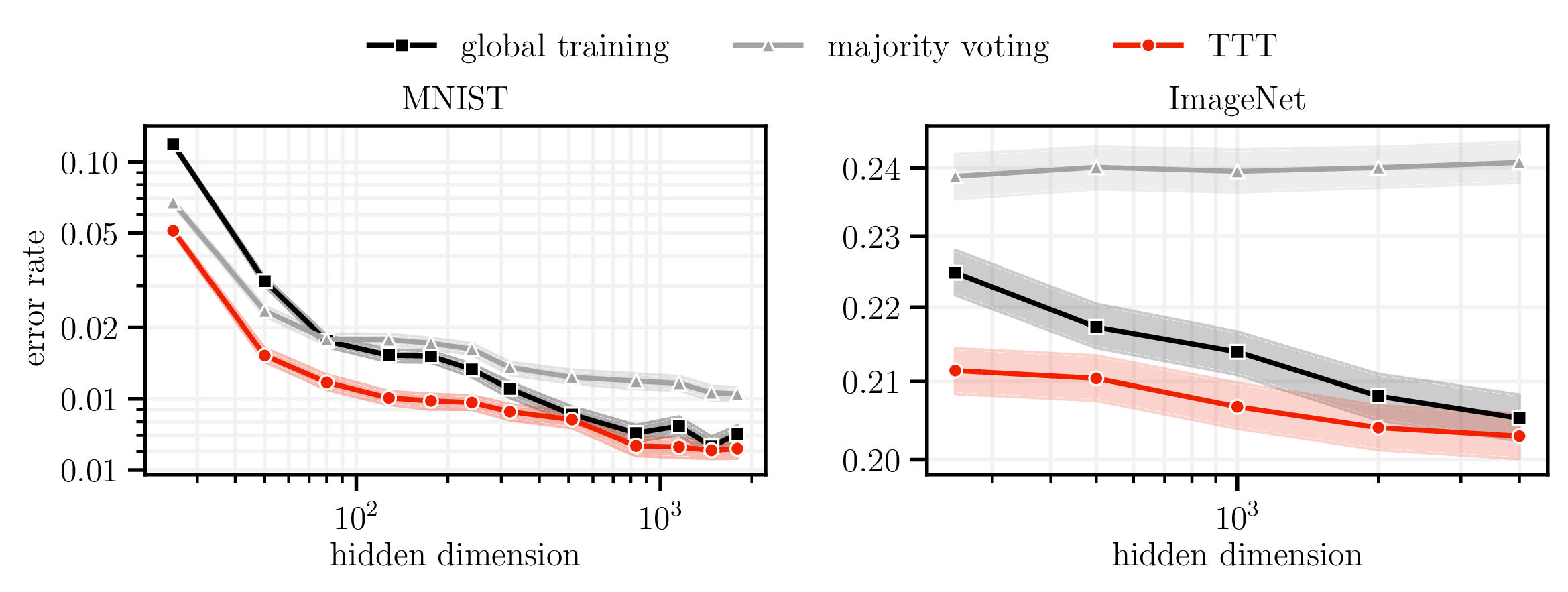}
\vspace{-3ex}
\caption{\rebut{\textbf{Scaling model width.} Classification error rates when scaling model width.
We evaluate a globally trained model (black) across different model sizes, as well as TTT (red) and a majority vote on the neighborhood (gray).
While majority vote leads to a poor predictor with many classes (i.e., complex tasks), TTT consistently outperforms global training, with the performance gap shrinking as the model size increases.}
\looseness=-1
}
\vspace{-0.5ex}
\label{fig:implications:scaling_hidden_dim}
\end{figure}

\rebut{For the scaling experiments on MNIST and ImageNet shown in \cref{fig:implications:scaling_model_size}, we varied only the network width while keeping the depth fixed, and we reported the resulting error rates as a function of the total number of model parameters. Our theoretical analysis (\sref{sec:model}) relies on the expressivity of the feature space, which requires the feature dimension $d_2$ to scale at least proportionally to $s \log d_1$. To make this connection explicit, we present the same results from the main body as a function of the feature dimension in \cref{fig:implications:scaling_hidden_dim} (instead of the total parameter count). The resulting curves show similar qualitative and quantitative behavior: TTT consistently outperforms global training and majority vote, with the performance gap narrowing as the hidden dimension increases.
}

\subsection{\rebut{End-to-end test-time training}}\label{sub:end_to_end_TTT}

\rebut{Because full end-to-end training on the neighborhood set is computationally expensive, it is common to fine-tune only the last linear layer (or the last few layers) as a trade-off between computational cost and performance. An alternative is to apply a parameter-efficient fine-tuning method such as LoRA \citep{hu2022lora}. In the experiments in \cref{fig:implications:scaling_model_size} for the image tasks (MNIST, ImageNet), we adopted the former approach and fine-tuned only the classification head. For the language modeling experiments, we used LoRA to fine-tune the model. Both strategies introduce a capacity constraint on the TTT model. To test whether our conclusions still hold when removing this constraint, we conduct an end-to-end TTT experiment on MNIST.}

\begin{wrapfigure}[]{r}{0.4\textwidth}
\raggedleft
\vspace{-4ex}
\incplt[\linewidth]{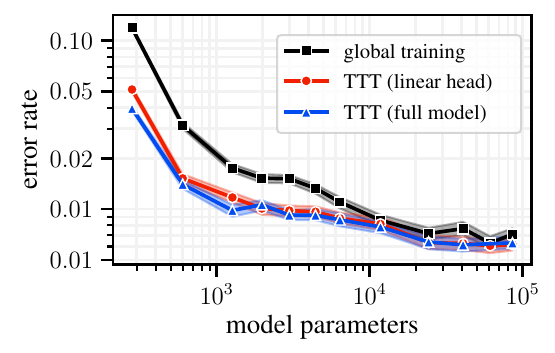}
\vspace{-3ex}
\caption{\rebut{Classification error rates when scaling model width on MNIST. We compare last-layer TTT (red) against end-to-end TTT (blue) with the globally trained model (black) serving as the baseline. Both TTT variants yield nearly identical error rates.\looseness=-1}}
\label{fig:mnist:end_to_end_lft}
\vspace{-2.5ex}
\end{wrapfigure}

\rebut{Mirroring the scaling experiment in \cref{fig:implications:scaling_model_size}, we train LeNet \citep{lenet-mnist} convolutional neural networks (CNNs) of varying sizes across five random seeds. Details about the global pre-training procedure are provided in \cref{sub:exp_details:scaling_mnist}. Based on these pre-trained models, we compare two variants of TTT. First, we apply last-layer TTT, where all layers except the final one are frozen and only the classification head is fine-tuned on the neighborhood set. This matches the setup used in \cref{fig:implications:scaling_model_size}. Second, we perform end-to-end TTT: neighbors are selected using the $L_2$-distance in the last-hidden-layer representations of the pre-trained model (as before), but subsequently all model weights are fine-tuned. The resulting model is then used to predict the class label of the test sample. The fine-tuning hyperparameters for end-to-end TTT are provided in \cref{tab:hyperparams:mnist:e2e_fine_tuning}, obtained by tuning on the validation set.}

\begin{table}[ht]
\centering
\setlength{\tabcolsep}{6.1pt}     %
\rebut{
\begin{tabular}{l | ccccccccccc}
\toprule
& \multicolumn{11}{c}{\textbf{Hidden dimension}} \\
\cline{2-12}
\rule{0pt}{10pt} %
& \textbf{25}
 & \textbf{50}
 & \textbf{80}
 & \textbf{128}
 & \textbf{176}
 & \textbf{240}
 & \textbf{320}
 & \textbf{512}
 & \textbf{832}
 & \textbf{1152}
 & \textbf{1792}
 \\
\midrule
Learning rate & 5e-3 & 0.01 & 0.01 & 0.01 & 5e-3 & 1e-3 & 1e-3 & 1e-3 & 1e-3 & 5e-4 & 5e-4 \\
Epochs & 200 & 200 & 300 & 100 & 200 & 200 & 200 & 300 & 300 & 300 & 200 \\
Neighbors & 300 & 50 & 200 & 300 & 300 & 50 & 50 & 300 & 300 & 50 & 300 \\
\bottomrule
\end{tabular}
}
\caption{\rebut{Hyperparameters for end-to-end TTT.}}
\label{tab:hyperparams:mnist:e2e_fine_tuning}
\end{table}

\section{Additional ablations}

While the improvement of TTT in image classification may seem small in terms of classification error, we find that TTT can significantly reduce cross-entropy loss on the test set, as shown in \cref{fig:implications:cross_entropy}.

\begin{figure}[ht]
\centering
\incplt[\linewidth]{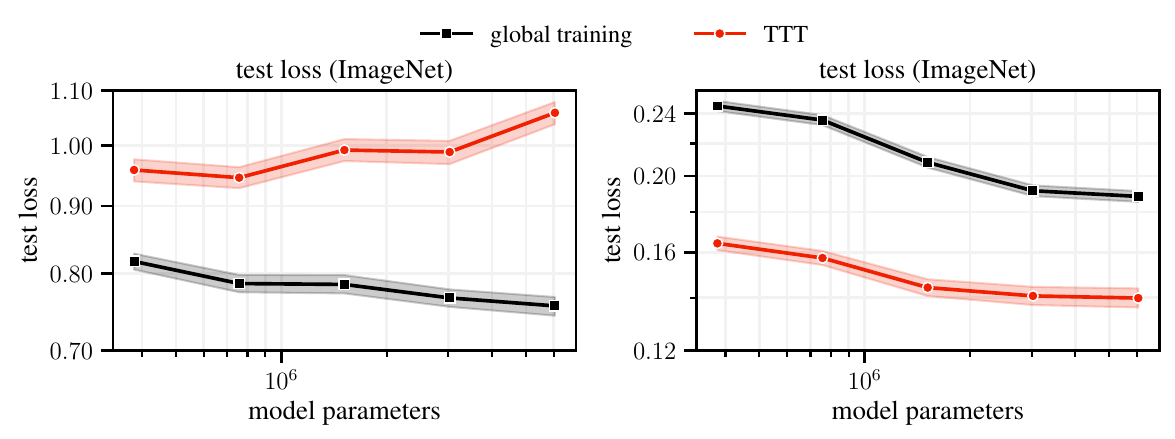}
\caption{\textbf{Left:} The average cross-entropy loss of the test samples of the globally trained MLP head is lower compared to the test-time training (TTT) model. \textbf{Right:} TTT substantially lowers cross-entropy loss on the test point compared to the global model when, to filter noisy labels, we filter to test points where either the global model or TTT predict the correct label.
This suggests that TTT increases cross-entropy significantly on the noisy labels that are not predictable by the base model, while lowering cross-entropy on the points which are predictable.}
\label{fig:implications:cross_entropy}
\end{figure}

\section{Experiment details}\label{sec:exp_details}

In this section, we provide additional details about our experimental setup. These include the experiments used to validate sparsity (\sref{sec:validation}) as well as those designed to analyze and illustrate the implications of our theoretical results (\sref{sec:implications}).

We open-source our code implementation.\!\footnote{\url{https://github.com/patrikwolf/ttt_theory}}

\subsection{Comparison of logits \texorpdfstring{(\cref{fig:logits})}{}}\label{appendix:sae_logits}

We start by introducing the related notation. Define the top-$10$ probabilities for TTT in the sparse concept space for the test point $\xstar$ as a vector $p^{\hat{\Phi}}(\xstar) \in [0,1]^{10}$, similarly, $p^{\hat{\Psi}}(\xstar)$ for dense reconstructions. In this notation, we implicitly assume that the probabilities $p^{\hat{\Phi}}(\xstar)$ are scaled by some optimal factor $\tau_{\xstar}$. In particular, define the logits corresponding to $p^{\hat{\Phi}}(\xstar)$ (i.e., before the softmax application) by $\mathrm{logit}^{\hat{\Phi}}(\xstar)$. Then, for each test point $\xstar$, we ``calibrate'' (cf. \citep{xie2024calibrating}) the concept space probabilities $p^{\hat{\Phi}}(\xstar)$ by finding the optimal scale $\tau_{\xstar}$ which aligns them closer. Namely, we choose $\tau_{\xstar}$ to minimize the following KL-divergence term:
\begin{align*}
    &\mathrm{KL}\left(p^{\hat{\Psi}}(\xstar)\Big|\Big| \ \bar{p}^{\hat{\Phi}}(\xstar)\right) \rightarrow \min_{\tau_{\xstar}},
    \\
    &\bar{p}^{\hat{\Phi}}(x^{\star}) := \mathrm{softmax}\left(\frac{\mathrm{logit}^{\hat{\Phi}}(\xstar)}{\tau_{\xstar}}\right).
\end{align*}
We note that although the temperature $\tau_{\xstar}$ is adjusted on a sample-by-sample basis, the variation is not significant. Specifically, we find the temperature has a mean of around $0.8$ and a standard deviation of around $0.1$. This mean value ($\tau_{\xstar} < 1$) indicates that the predictive distribution $p^{\hat{\Psi}}(\xstar)$ in the dense space has a slightly sharper profile than the baseline.

We now introduce a metric to quantify the remaining discrepancy between the calibrated sparse distribution and the dense distribution. The relative total variation at position $i \in \left\{1,\dots,10\right\}$ is defined as follows:
\begin{equation*}
\mathrm{relTV}_i(\hat{\Psi}, \hat{\Phi}) := \E_{\xstar} \left[\frac{\left|p^{\hat{\Psi}}_i(\xstar) - p^{\hat{\Phi}}_i(\xstar)\right|}{\frac{1}{2}\cdot\left(\E_{\xstar} [p^{\hat{\Psi}}_i(\xstar)] + \E_{\xstar} [p^{\hat{\Phi}}_i(\xstar)]\right)}\right],
\end{equation*}
where the absolute difference is normalized by the average magnitude of the corresponding probabilities. Incorporating both averages into the denominator ensures the robustness of the quantity to noisy observations, since both the scale of $p^{\hat{\Phi}}_i$ and $p^{\hat{\Psi}}_i$ are taken into account.

\subsection{SAE on MNIST data}\label{appendix:sae_mnist}

We evaluate the SAE setup on the MNIST dataset ($10$k test samples, $60$k training samples). To obtain a sparse concept vector $\smash{\hat{\Phi}(x_i)}$ for each image $x_i \in \mathcal{D}_{\mathrm{MNIST}}$, we employ a Gemma-Scope-style SAE \citep{lieberum2024gemma}. The only conceptual difference from regular SAE applications is that our encoder has a convolutional architecture, i.e., it is LeNet-like~\citep{lenet-mnist} and maps each image to a representation vector $\Psi(x_i) \in \mathbb{R}^{d_2}$ with $d_2 = 256$, which is then lifted to a sparse concept vector $\smash{\hat{\Phi}(x_i)}$. However, the decoder is still \emph{linear} to force linearity of the concept space. We also directly reconstruct the inputs, i.e., images $x$, which means that reconstruction error takes the following form:
\begin{equation*}
\mathbb{E}_{x\in\mathcal{D}}\|x-D\cdot \hat{\Phi}(x)\|_2, \quad D \in \mathbb{R}^{28\cdot 28 \times d_1}.
\end{equation*}
In this setup, $\smash{\hat{\Phi}(x_i) \in \mathbb{R}^{d_1}}$ where $d_1 = 1024$ and the average sparsity of the concept vector is
\begin{equation*}
\mathbb{E}_{x}[\|\hat{\Phi}(x)\|_0] \approx 18.9.
\end{equation*}
In the Gemma-Scope-style SAE, the sparsity constraint is enforced via a thresholding activation, i.e., inputs $v\in\mathbb{R}^{d_1}$ are passed through
\begin{equation*}
\tilde{v} := \mathrm{ReLU}(v - \theta),
\end{equation*}
where the $\mathrm{ReLU}$ is applied component-wise for some thresholds $\theta \in \mathbb{R}^{d_1}$ independent of $v$. Consequently, all active components of the sparse vector $\tilde{v}$ are positive. In addition to thresholding, the Gemma-Scope-style SAE implicitly enforces the desired sparsity level via an $L_0$ penalty on concept vectors $\smash{\hat{\Phi}(x)}$ and uses straight-through estimator for the gradient estimate.

The procedure of searching for the optimal mask $m$ stays the same as per Section \ref{sec:validation}, however, we note that in this setup the penalty $\lambda$ is set to $10^{-3}$. For this experiment, the average resulting sparsity of the mask $m$ is
$\E_{x^{*}} [\|m\|_0] \approx 24.5.$

For a dense base model (i.e., a counterpart of CLIP embeddings in the case of ImageNet), we train a variant of LeNet~\citep{lenet-mnist} with a scale of $0.5$. The resulting feature dimension before the final linear classification layer is equal to $50$. This model is clearly underparameterized with test accuracy $94.51\,\%$.

For the base CNN TTT model, we perform 100 full-batch steps of Adam with learning rate of $10^{-1}$ (increasing budget does not affect the performance). For TTT with the adaptive mask in the concept space, we do 200 full-batch steps of Adam with learning rate $5\cdot 10^{-2}$. The neighborhood size for TTT is set to $n=100$.

The resulting accuracies are:
\begin{itemize}
    \item LeNet CNN TTT: 97.62,
    \item Masked SAE TTT: 97.72.
\end{itemize}

\paragraph{Auxiliary observations.}

We use the same notation for active sets for concept vectors as in \cref{sec:validation}. However, for better clarity, we recap the notation. For each sparse vector $\Phi(x) \in \mathbb{R}^{d_1}$ we define its active set as the set of vector components that are non-zero, i.e.,
\begin{equation*}
m(\Phi(x)) = \left\{\ell: \left(\Phi(x)\right)_{\ell} > 0\right\}.
\end{equation*}
In this spirit, we define the active set for the current test point $\xstar$ as follows:
\begin{equation*}
m_{\xstar} := m(\Phi(\xstar)).
\end{equation*}
Similarly, active components of neighbors are defined as
\begin{equation*}
m_i := m(\Phi(x_i)), \quad x_i \in \mathcal{D}_{\xstar}^{\Phi}.
\end{equation*}
In this context, the ``intersection'' analysis reveals the following properties of the adaptive mask $m$, active set of the test point $m_{\xstar}$ and neighbors' active sets $m_i$:
\begin{itemize}
    \item $\mathbb{E}_{x}|\cup_{i} m_i| \approx 144.5$, \quad $\mathbb{E}_{x} |m_{\xstar} \cap m| \approx 7.9$,
    \item $\mathbb{E}_{x}|\cup_{i} (m_i \cap m)| \approx 7.1$, \quad $\mathbb{E}_{x}|\cup_{i} (m_i \cap m_{\xstar})| \approx 8.3$.
\end{itemize}
In particular, one might be tempted to draw the following simplifying conclusion:
\begin{equation*}
m \equiv \cup_{i} (m_i \cap m_{\xstar}),
\end{equation*}
that is, to define the mask $m$ so that it only selects indices appearing in both the test point and its neighbors. However, this approach fails on more complex datasets (e.g., ImageNet, see \cref{sec:validation}), because additional slack components in the mask are necessary to capture ``non-spurious'' test features.

\subsection{SAE on ImageNet CLIP embeddings}\label{appendix:sae_imagenet}

\paragraph{Top-\textit{k} SAE training.} Obtaining a sparse autoencoder with meaningful features (with mild amount of non-active neurons) is a task with multiple caveats. We employ several common techniques to improve the training procedure, which we describe below.

We use a learning rate warm-up to ensure that the concept space is properly explored at the start of training, preventing neurons from deactivating and becoming trapped in suboptimal configurations. We warm up the learning rate \emph{linearly} to the value $3\cdot 10^{-4}$ for $T_0=5000$ steps. After this, we employ a typical cosine decay with a horizon of $T=10^5$. In particular, let $i$ be the current step, then at each step the initial learning (in this case $3\cdot 10^{-4}$) is multiplied by the value of $\lambda_i$, which is computed as follows:
\begin{equation*}
\lambda_i := 0.5 \cdot \left(1 + \cos\left(\pi \cdot \tilde{\lambda}\right)\right)\qquad\text{with}\qquad \tilde{\lambda}_i := (i - T_0) / (T-T_0).
\end{equation*}
In addition to the learning rate schedule, we also gently ramp up the sparsity of the concept vector to the desired value of $k=16$ as follows: let $k_0=128$ be the initial sparsity of the concept vector and  $K=10000$ be the number of warm-up steps, then
\begin{equation*}
k_i := k_0 - (k_0 - k) \cdot \gamma_i\qquad\text{with}\qquad \gamma_i := i / K.
\end{equation*}
Once the target sparsity of $k=16$ is reached, the value remains at the respective level until the end of the training.

We now describe the implementation of the ghost gradients \citep{gao2024scaling} used in our ImageNet run. In a nutshell, ghost loss ensures that features that are not activated in the top-$k$ are still getting learning signal during the training. This is very important as non-convexity of the optimization landscape often leads to a suboptimal configuration for which considerable amount of units is inactive. The ghost gradient method aims at ``shaking'' these units up to make them active again. Thus, we first need to define which units are considered inactive during the training. Let $f_i \in \mathbb{R}^{d_1}$ denote the vector of unit activation frequencies after iteration $i$, where
\begin{equation*}
(f_i)_j = \frac{\text{number of times unit } j \text{ is active after } i \text{ processed samples}}{i}.
\end{equation*}
Then the unit $j$ is considered ``almost inactive'' if $(f_i)_j \leq 10^{-4}$. Thus, we define the corresponding ``inactivity'' binary mask as $(m_i)_j = \mathbb{I}\{(f_i)_j \leq 10^{-4}\}$. Ghost gradient uses these features to improve the current reconstruction $\hat{\Psi}(x) - \Psi(x)$. Namely, the inactive features are decoded, i.e., $\widetilde{\Psi}(x) = D \cdot (m_i \odot (E \cdot \Psi(x)))$, as if they were present and used to minimize:
\begin{equation}\label{eq:appendix_ghost_loss}
    \frac{1}{d_2}\cdot\|\hat{\Psi}(x) - \Psi(x) - \widetilde{\Psi}(x)\|_2^2.
\end{equation}
We add the ghost loss (\ref{eq:appendix_ghost_loss}) to the initial reconstruction objective with weight of $10^6$.

We also employ gradient clipping for more stable iterations to the norm range of $[0,1]$, and introduce additional dropout with rate $0.5$ for the pre-activations $E \cdot \Psi(x)$ to foster the diversity of concepts. We use column-wise normalization for the decoder weights to enjoy more stable training. Note that, since CLIP embeddings have unit norm, such restriction does not hinder the expressivity of the decoder. We also initialize the decoder to be the transpose of the encoder, which is common practice in SAEs.
As for generic hyperparameters for the Adam optimizer \citep{kingma2017adam}, we fix them to the following values: batch size of $4096$, weight decay of $0$, number of epochs $100$, and Adam's $\beta_1$ of $0.9$ and $\beta_2$ of $0.999$.

\paragraph{Global training and TTT.} To train the global concept space and CLIP reconstruction models, we perform $100$ epochs of batch size $512$ using Adam optimizer with learning rate $0.001$ and weight decay of $5 \cdot 10^{-9}$. For each TTT point, we do $80$ full-batch steps (batch size $50$) of Adam with learning rate $0.02$ and zero weight decay. The latter hyperparameter set is used for TTT both in concept space and in CLIP reconstructions. Unless otherwise specified, Adam’s parameters follow the default values in PyTorch~\citep{paszke2019pytorch}.

\subsection{Scaling experiments on MNIST}\label{sub:exp_details:scaling_mnist}

In the scaling experiments reported in \cref{sec:implications}, we trained multiple LeNet \citep{lenet-mnist} convolutional neural networks (CNNs) at different model scales. The architecture comprises two convolutional layers, each with ReLU activation and $2 \times 2$ max-pooling, followed by a fully connected classification head. The reference models of varying sizes were trained with the hyperparameters listed in \cref{tab:hyperparams:mnist_global_cnns}. Optimization was performed with Adam \citep{kingma2017adam}.

\begin{table}[ht]
\centering
\setlength{\tabcolsep}{3.2pt}     %
\begin{tabular}{l | cccccccccccc}
\toprule
& \multicolumn{12}{c}{\textbf{Model parameters}} \\[1pt]
\cline{2-13}
\rule{0pt}{10pt} %
& \textbf{280}
 & \textbf{600}
 & \textbf{1268}
 & \textbf{1976}
 & \textbf{2985}
 & \textbf{4430}
 & \textbf{6386}
 & \textbf{11770}
 & \textbf{24294}
 & \textbf{40818}
 & \textbf{61342}
 & \textbf{85866}
 \\
\midrule
Learning rate & 6e-3 & 2e-3 & 1e-3 & 8e-4 & 6e-4 & 6e-4 & 2e-3 & 2e-3 & 2e-3 & 6e-4 & 2e-3 & 2e-3 \\
Batch Size & 500 & 200 & 400 & 600 & 300 & 300 & 400 & 300 & 300 & 100 & 300 & 500 \\
Epochs & 50 & 50 & 100 & 100 & 100 & 100 & 100 & 100 & 100 & 50 & 100 & 100 \\
\bottomrule
\end{tabular}
\caption{Hyperparameters for globally trained CNNs.}
\label{tab:hyperparams:mnist_global_cnns}
\end{table}

For the model scaling plot in \cref{fig:implications:scaling_model_size}, we trained each reference model across five random seeds and applied both TTT and majority voting. For TTT, all model parameters were frozen except for the final linear layer, which was fine-tuned from its pre-trained initialization using the hyperparameters in \cref{tab:hyperparams:mnist_ttt_params}. Majority voting was performed with neighborhood sizes specified in \cref{tab:hyperparams:mnist_majority_params}. \rebut{All hyperparameters were obtained from a hyperparameter optimization on the validation set.} As the experiments were repeated over five seeds, we used 200 bootstrap iterations per seed to compute confidence intervals.

\begin{table}[ht]
\centering
\setlength{\tabcolsep}{3.2pt}     %
\begin{tabular}{l | cccccccccccc}
\toprule
& \multicolumn{12}{c}{\textbf{Model parameters}} \\[1pt]
\cline{2-13}
\rule{0pt}{10pt} %
& \textbf{280}
 & \textbf{600}
 & \textbf{1268}
 & \textbf{1976}
 & \textbf{2985}
 & \textbf{4430}
 & \textbf{6386}
 & \textbf{11770}
 & \textbf{24294}
 & \textbf{40818}
 & \textbf{61342}
 & \textbf{85866}
 \\
\midrule
Learning rate & 0.05 & 5e-3 & 0.01 & 0.01 & 5e-3 & 5e-3 & 1e-3 & 5e-3 & 1e-3 & 1e-3 & 5e-3 & 1e-3 \\
Epochs & 50 & 200 & 200 & 200 & 200 & 200 & 200 & 200 & 200 & 200 & 200 & 100 \\
Neighbors & 200 & 50 & 10 & 200 & 200 & 200 & 50 & 100 & 50 & 200 & 300 & 100 \\
\bottomrule
\end{tabular}
\caption{Hyperparameters for TTT on the linear head of the CNNs.}
\label{tab:hyperparams:mnist_ttt_params}
\end{table}

\begin{table}[ht]
\centering
\setlength{\tabcolsep}{3.8pt}     %
\begin{tabular}{l | cccccccccccc}
\toprule
& \multicolumn{12}{c}{\textbf{Model parameters}} \\[1pt]
\cline{2-13}
\rule{0pt}{10pt} %
& \textbf{280}
 & \textbf{600}
 & \textbf{1268}
 & \textbf{1976}
 & \textbf{2985}
 & \textbf{4430}
 & \textbf{6386}
 & \textbf{11770}
 & \textbf{24294}
 & \textbf{40818}
 & \textbf{61342}
 & \textbf{85866}
 \\
\midrule
Neighbors & 8 & 5 & 8 & 5 & 3 & 4 & 6 & 4 & 4 & 4 & 5 & 5 \\
\bottomrule
\end{tabular}
\caption{Hyperparameters for majority voting based on last-hidden-layer features of the CNNs.}
\label{tab:hyperparams:mnist_majority_params}
\end{table}

To generate the dataset scaling plot in \cref{fig:implications:scaling_dataset_size}, we randomly subsampled the training set while explicitly ensuring a uniform distribution across the 10 class labels. Here, TTT was performed by minimizing the cross-entropy loss over the $k = 80$ nearest neighbors. Optimization was performed using Adam with a learning rate of 0.02 for 500 epochs.

\subsection{Connection to MoEs}

\begin{wrapfigure}[]{r}{0.4\textwidth}
\raggedleft
\vspace{-5ex}
\incplt[\linewidth]{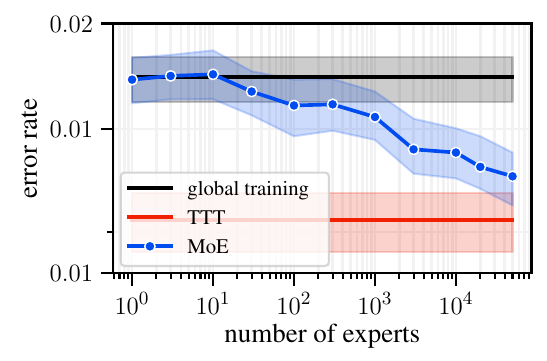}
\vspace{-3ex}
\caption{Classification error on MNIST. We compare the global classifier, TTT, and the MoE. Increasing the number of experts allows MoE to approach TTT performance with inference cost comparable to the global model.}
\label{fig:implications:mixture_of_experts}
\vspace{-4ex}
\end{wrapfigure}

Given an underparameterized global model, a natural alternative to specializing to each test-time task as in TTT is to instead specialize individual ``expert'' models to subsets of tasks, routing test-time tasks to few of these experts.
Such mixture of experts~\citep[MoEs;][]{shazeer2017outrageously,fedus2022switch,bertolissi2025local} have been shown to be an effective architecture for foundation models~\citep[e.g.,][]{dai2024deepseekmoe}.\looseness=-1

To see whether our findings extend to MoEs, we train multiple experts (each being a different linear head) based on the MoE architecture of \cite{bertolissi2025local}, and evaluate accuracy as we scale the number of experts.
We find that a larger number of experts increases the capacity of the model and improves accuracy, highlighting that MoEs are a promising approach to specialization without increasing inference cost.\looseness=-1

We used a pre-trained CNN to extract last-layer embeddings from MNIST images. Following \cite{bertolissi2025local}, for a given number of experts, each expert was associated with a cluster centroid obtained by partitioning the training set into clusters via $k$-means. Expert training was performed by fine-tuning the pre-trained linear head on the nearest neighbors of the assigned centroid in embedding space. The fine-tuning hyperparameters are provided in \cref{tab:hyperparams:mnist_moe_params}, \rebut{obtained by tuning on the validation set.} At test time, inputs were first mapped through the CNN encoder, after which the single closest cluster centroid was selected based on $L_2$-distance. The corresponding expert head then produced the final prediction.

\begin{table}[t]
\centering
\setlength{\tabcolsep}{5.9pt}     %
\begin{tabular}{l | ccccccccccc}
\toprule
& \multicolumn{11}{c}{\textbf{Number of experts}} \\[2pt]
\cline{2-12}
\rule{0pt}{10pt} %
& \textbf{1}
 & \textbf{3}
 & \textbf{10}
 & \textbf{30}
 & \textbf{100}
 & \textbf{300}
 & \textbf{1e3}
 & \textbf{3e3}
 & \textbf{10e3}
 & \textbf{20e3}
 & \textbf{50e3}
 \\
\midrule
Learning rate & 6e-4 & 2e-4 & 6e-4 & 1e-3 & 8e-4 & 4e-4 & 6e-4 & 4e-4 & 6e-4 & 4e-4 & 4e-4 \\
Epochs & 2 & 1 & 2 & 1 & 2 & 3 & 10 & 30 & 20 & 50 & 40 \\
Neighbors & 60 & 50 & 30 & 60 & 30 & 40 & 30 & 20 & 30 & 30 & 20 \\
\bottomrule
\end{tabular}
\caption{Hyperparameters for the mixture of experts (MoE) model based on last-hidden-layer features of the CNN.}
\label{tab:hyperparams:mnist_moe_params}
\end{table}

\subsection{Scaling experiments on ImageNet}

In our scaling experiments on ImageNet, we systematically vary the dimensionality and architecture of downstream classifiers, enabling a thorough analysis of scalability and representation quality. The following subsections detail our embedding extraction procedure, the dataset partitioning scheme, and the training protocols used for all baselines and scaling methods.

\paragraph{Embedding Extraction.}

For the ImageNet experiments, we use image embeddings derived from the CLIP vision-language model \citep{radford2021learning}. Specifically, we employ the ViT-B/32 variant of CLIP, as provided by the HuggingFace Transformers library. Images are first preprocessed by applying resizing, center cropping, and pixel normalization to match CLIP’s training setup. The preprocessed images are then passed through the CLIP vision encoder, yielding 512-dimensional representation vectors. To ensure stability and scale invariance, we normalize each embedding to unit length using the $L_2$ norm.

\paragraph{Dataset Splits.}

Since ImageNet’s official test labels are held out, we report results on the official validation set, treating it as our test set. For training purposes, we partitioned the 1.28M-image training set into a reduced training set and an artificial validation set. The artificial validation set was constructed by stratified sampling 50000 images to ensure a balanced class distribution, giving it the same size as the official validation set. The remaining images were used for training. Unless otherwise stated, all reported results are based on evaluation on the official validation set.

\paragraph{Baseline Linear Classifier.}

As a first baseline, we trained a linear classifier on the 512-dimensional normalized CLIP embeddings. The linear head is a fully connected layer mapping the embeddings to 1000 output classes, corresponding to the ImageNet labels. Training was performed with the Adam optimizer at a learning rate of 0.001, using a batch size of 250 for 50 epochs. The model was trained with standard categorical cross-entropy loss, without additional regularization, and achieved a test accuracy of 78.33\,\%.

\paragraph{TTT for Base Model.}

Building upon the baseline linear classifier, we apply a test-time training procedure to adapt the model at inference-time. Specifically, we fine-tune this linear head for each test sample by minimizing the cross-entropy loss over the set of $k = 600$ nearest neighbors retrieved in the original CLIP embedding space. \rebut{By optimizing the error rate on the validation set, we determined that using $k = 600$ neighbors with a learning rate of 0.02, a batch size equal to the number of neighbors, and 50 training epochs, combined with the Adam optimizer, perform near-optimally across all model scales. Consequently, we adopted this universal set of hyperparameters for the ImageNet experiments.}

\paragraph{Two-Layer MLP Projections.}

To assess the impact of embedding dimensionality, we trained two-layer multi-layer perceptrons (MLPs) as classification heads on top of the CLIP embeddings. The first hidden layer had variable size, ranging from 250 to 4000 neurons, followed by a ReLU activation. The second layer maps the hidden representation to the 1000 ImageNet classes. Dropout is applied before the final layer to mitigate overfitting. Training was conducted with the Adam optimizer, using the hyperparameters specified in \cref{tab:hyperparams:imagenet_MLP}, \rebut{which were selected via optimization on the validation set.}

\begin{table}[ht]
\centering
\setlength{\tabcolsep}{3.25pt}     %
\begin{tabular}{l | cccccccccc}
\toprule
& \multicolumn{10}{c}{\textbf{Model parameters}} \\[2pt]
\cline{2-11}
\rule{0pt}{10pt} %
& \textbf{3.8e5}
 & \textbf{7.6e5}
 & \textbf{1.1e6}
 & \textbf{1.5e6}
 & \textbf{1.9e6}
 & \textbf{2.3e6}
 & \textbf{3.0e6}
 & \textbf{3.8e6}
 & \textbf{4.5e6}
 & \textbf{6.1e6}
 \\
\midrule
Hidden dimension & 250 & 500 & 750 & 1000 & 1250 & 1500 & 2000 & 2500 & 3000 & 4000 \\
Learning Rate & 4.0e-4 & 3.5e-4 & 3.0e-4 & 4.0e-4 & 3.5e-4 & 4.0e-4 & 4.0e-4 & 3.5e-4 & 4.5e-4 & 4.5e-4 \\
Weight Decay & 0 & 0 & 0 & 0 & 0 & 0 & 0 & 0 & 0 & 0 \\
Batch Size & 450 & 350 & 300 & 450 & 350 & 300 & 400 & 450 & 450 & 650 \\
Num Epochs & 50 & 50 & 50 & 50 & 50 & 50 & 50 & 50 & 50 & 50 \\
Dropout Rate & 0.05 & 0.25 & 0.3 & 0.35 & 0.45 & 0.55 & 0.6 & 0.65 & 0.7 & 0.7 \\
\bottomrule
\end{tabular}
\caption{Hyperparameters for globally training the MLP heads across different model sizes.}
\label{tab:hyperparams:imagenet_MLP}
\end{table}

After training, we freeze the first hidden layer and project the original 512-dimensional embeddings into this hidden space. The resulting hidden representations, taken after the ReLU activation, serve as our scaled embeddings for test-time training. Specifically, we fine-tune the pre-trained linear MLP head on the set of $k$ nearest neighbors by minimizing the cross-entropy loss using Adam. This setup allows us to systematically examine how performance and representation quality vary with embedding dimensionality. \rebut{The hyperparameters used for this procedure were obtained via hyperparameter optimization on the validation set, and we found that this universal set is nearly optimal across all model sizes, as reported in \cref{tab:hyperparams:imagenet_TTT_on_MLP_head}.} Loss optimization was performed in a full-batch setting, with the batch size equal to the number of neighbors.

\begin{table}[ht]
\centering
\begin{tabular}{lccr}
\toprule
\textbf{Hyperparameter} & \textbf{Value}\\
\midrule
Number of neighbors & 100 \\
Learning rate & 5e-3 \\
Batch size & 100 \\
Epochs & 50 \\
\bottomrule
\end{tabular}
\caption{Hyperparameters for TTT on the linear head of the MLPs.}
\label{tab:hyperparams:imagenet_TTT_on_MLP_head}
\end{table}

\paragraph{Majority Voting.}

As an alternative baseline, we leverage the learned feature spaces of the two-layer MLPs and apply a simple majority voting protocol based on nearest neighbors. Specifically, we first map the original 512-dimensional CLIP embeddings into the MLP’s hidden feature space and then identify its $k = 10$ nearest neighbors for any given test sample. \rebut{Our empirical analysis showed that this neighborhood size performs well across all model scales.} The predicted class is assigned as the most frequent (majority) class label among these neighbors. This approach parallels the neighbor selection used in test-time training (TTT) but replaces fine-tuning with a straightforward plurality vote. Majority voting thus serves as a simple, non-parametric baseline to assess the quality of the scaled embeddings, providing insight into the clustering and class separability properties of the learned feature space.

\subsection{Scaling experiments for language modeling}

We use the open-source implementation\footnote{\url{https://github.com/socialfoundations/tttlm}} of \citet{hardt2024test} and evaluate the Qwen2.5 family of base models~\citep{qwen2025qwen}.
We summarize hyperparameters in \cref{tab:hyperparams:llms}.
\rebut{During test-time training, we perform gradient steps sequentially for each sequence in the neighborhood.}
Sequences in the neighborhood that exceed the maximum sequence length are split into chunks of the maximum length.
This means that a single neighbor can result in multiple gradient steps during TTT.\looseness=-1

\begin{table}[ht]
\centering
\begin{tabular}{lccr}
\toprule
\textbf{Hyperparameter} & \textbf{Value}\\
\midrule
Number of neighbors & 50 \\
Learning rate & 2e-4 \\
Adam's $\epsilon$-value & 1e-8 \\
Max.\ sequence length in tokens & 1024 \\
LoRA rank & 64 \\
\bottomrule
\end{tabular}
\caption{Hyperparameters for language modeling on the Pile.}
\label{tab:hyperparams:llms}
\end{table}

\end{document}